\documentclass[nohyperref]{article}

\usepackage[preprint]{icml2026}

\usepackage{amsmath,amssymb,amsfonts,amsthm}
\usepackage{algorithmic}
\usepackage{algorithm}
\usepackage{graphicx}
\usepackage{booktabs}
\usepackage{multirow}
\usepackage{subcaption}
\usepackage{xcolor}
\usepackage{hyperref}
\hypersetup{
  pdftitle={FlashSinkhorn: IO-Aware Entropic Optimal Transport on GPU},
  pdfsubject={arXiv preprint},
}
\usepackage{cleveref}
\usepackage{bm,bbm}
\usepackage{float}
\usepackage{booktabs}
\usepackage{longtable}
\usepackage{array}
\usepackage{pifont}

\newcommand{\R}{\mathbb{R}}

\newcommand{\bx}{\mathbf{x}}
\newcommand{\by}{\mathbf{y}}
\newcommand{\ba}{\mathbf{a}}
\newcommand{\bb}{\mathbf{b}}

\newcommand{\bc}{\mathbf{c}}
\newcommand{\bw}{\mathbf{w}}

\newcommand{\br}{\mathbf{r}}

\newcommand{\bs}{\mathbf{s}}
\newcommand{\mm}{\mathbf{m}}
\newcommand{\mg}{\mathbf{g}}
\newcommand{\mf}{\mathbf{f}}

\newcommand{\ones}{\mathbf{1}}
\newcommand{\diag}{\mathrm{diag}}
\newcommand{\LSE}{\mathrm{LSE}}
\newcommand{\OT}{\mathrm{OT}}
\newcommand{\eps}{\varepsilon}

\newcommand{\cT}{\mathcal{T}}
\newcommand{\cR}{\mathcal{R}}
\newcommand{\cE}{\mathcal{E}}
\newcommand{\cB}{\mathcal{B}}
\newcommand{\balpha}{\boldsymbol{\alpha}}
\newcommand{\bbeta}{\boldsymbol{\beta}}
\newcommand{\bdelta}{\boldsymbol{\delta}}
\newcommand{\bgamma}{\boldsymbol{\gamma}}
\newcommand{\bmu}{\boldsymbol{\mu}}
\newcommand{\bnu}{\boldsymbol{\nu}}

\newcommand{\cmark}{\ding{51}}
\newcommand{\xmark}{\ding{55}}

\newtheorem{theorem}{Theorem}

\newtheorem{corollary}[theorem]{Corollary}

\newtheorem{lemma}[theorem]{Lemma}
\newtheorem{proposition}[theorem]{Proposition}
\newtheorem{remark}[theorem]{Remark}

\icmltitlerunning{FlashSinkhorn: IO-Aware Entropic Optimal Transport}

\begin{document}

\twocolumn[
\icmltitle{FlashSinkhorn: IO-Aware Entropic Optimal Transport on GPU}

\begin{icmlauthorlist}
\icmlauthor{Felix X.-F. Ye}{albmath}
\icmlauthor{Xingjie Li}{uncc}
\icmlauthor{An Yu}{albcs}
\icmlauthor{Ming-Ching Chang}{albcs}
\icmlauthor{Linsong Chu}{ibm}
\icmlauthor{Davis Wertheimer}{ibm}
\end{icmlauthorlist}

\icmlaffiliation{albmath}{Department of Mathematics \& Statistics, University at Albany, Albany, NY, USA}
\icmlaffiliation{uncc}{Department of Mathematics and Statistics, University of North Carolina at Charlotte, Charlotte, NC, USA}
\icmlaffiliation{albcs}{Department of Computer Science, University at Albany, Albany, NY, USA}
\icmlaffiliation{ibm}{IBM T.\ J.\ Watson Research Center, Yorktown Heights, NY, USA}

\icmlcorrespondingauthor{Felix X.-F. Ye}{xye2@albany.edu}

\icmlkeywords{Optimal Transport, Sinkhorn Algorithm, GPU Optimization, Hessian-Vector Products, FlashAttention}

\vskip 0.3in
]

\printAffiliationsAndNotice{}


\begin{abstract}


Entropic optimal transport (EOT) via Sinkhorn iterations is widely used in modern machine learning, yet GPU solvers remain inefficient at scale. Tensorized implementations suffer quadratic HBM traffic from dense $n\times m$ interactions, while existing online backends avoid storing dense matrices but still rely on generic tiled map-reduce reduction kernels with limited fusion.
We present \textbf{FlashSinkhorn}, an IO-aware EOT solver for squared Euclidean cost that rewrites stabilized log-domain Sinkhorn updates as row-wise LogSumExp reductions of biased dot-product scores, the same normalization as transformer attention. This enables FlashAttention-style fusion and tiling: fused Triton kernels stream tiles through on-chip SRAM and update dual potentials in a single pass, substantially reducing HBM IO per iteration while retaining linear-memory operations. We further provide streaming kernels for transport application, enabling scalable first- and second-order optimization. On A100 GPUs, FlashSinkhorn achieves up to $32\times$ forward-pass and $161\times$ end-to-end speedups over state-of-the-art online baselines on point-cloud OT, improves scalability on OT-based downstream tasks. For reproducibility, we release an open-source implementation at \url{https://github.com/ot-triton-lab/flash-sinkhorn}.


\end{abstract}

\section{Introduction}
\label{sec:intro}


Optimal transport (OT) is a principled way to compare probability measures and has become a standard tool across modern machine learning, from distributional objectives in generative modeling \cite{arjovsky2017wasserstein} to alignment and matching problems  \cite{solomon2015convolutional, schiebinger2019optimal, bunne2024optimal}. 
Entropic regularization~\citep{cuturi2013sinkhorn} makes OT differentiable and computationally tractable via Sinkhorn iterations, and GPU implementations of Sinkhorn-based losses are now widely used~\citep{feydy2019interpolating, cuturi2022optimal}.
However, scalability remains a persistent challenge in practice: OT is often considered too expensive to use extensively at evaluation time.
For instance, \citet{kangin2024unsupervised} omit confidence intervals for their Sinkhorn setting due to computational cost, and \citet{kokot2025coreset} note that repeatedly evaluating Sinkhorn divergence at MNIST scale ($n\approx 70000$) can be computationally prohibitive.
These observations point to a systems bottleneck.
Tensorized GPU Sinkhorn solvers are dominated by quadratic HBM traffic: materializing and repeatedly traversing the $n\times m$ interaction structure across iterations induce 
large reads/writes.
Online backends avoid $O(nm)$ storage by evaluating interactions on-the-fly via generic GPU map-reduce \citep{charlier2021kernel}, but can still be slow because each Sinkhorn update is executed as generic reduction kernels with limited fusion/tiling and limited opportunity to exploit tensor-core GEMM structure.
As a result, OT often fails to scale to repeated large point-cloud solves in downstream pipelines, especially in high dimensions.


We address this bottleneck with a systems-level observation: for squared Euclidean cost, stabilized log-domain Sinkhorn updates can be rewritten as row-wise LogSumExp (LSE) reductions of a biased dot-product score matrix—the same normalization that underlies scaled dot-product attention.
FlashAttention~\citep{dao2022flashattention,dao2023flashattention2} showed that this normalization can be computed exactly with substantially fewer HBM accesses by streaming tiles through on-chip SRAM and maintaining online max/sumexp statistics.
We introduce \textbf{FlashSinkhorn}, which transfers this IO-aware strategy to EOT: fused Triton kernels stream tiles, compute costs on-the-fly, and write only updated dual potentials—avoiding $n\times m$ intermediates while, crucially, reducing HBM IO volume via fusion and tiling.
Our contributions are:

\begin{itemize}
\vspace{-3mm}
  \item \textbf{Attention-form Sinkhorn update.}
  We express stabilized Sinkhorn updates as biased dot-product LSE reductions with an iteration-dependent dual bias, enabling exact FlashAttention-style streaming.

  \item \textbf{IO-aware EOT.}
  We fused Triton kernels for streamed Sinkhorn iterations, along with transport matrix-free streaming operators for differentiation of an EOT loss.

  \item \textbf{Large-scale evaluation.}
  We demonstrate substantial speedups for forward, backward, and HVP computations, improving scalability on point-cloud OT and OT-based workloads including OTDD and shuffled regression.

\end{itemize}
\section{Background}
\label{sec:background}

\subsection{EOT and Sinkhorn Algorithm}

Given source points (row vectors) $X = \{\bx_i\}_{i=1}^n \subset \R^d$ with weights $\ba \in \Delta^n$ and target points (row vectors)
$Y = \{\by_j\}_{j=1}^m \subset \R^d$ with weights $\bb \in \Delta^m$, define discrete distributions $\bmu=\sum_{i=1}^n a_i\delta_{\bx_i}$ and $\bnu=\sum_{j=1}^m b_j\delta_{\by_j}$, where $\delta_{\bx}$ denotes the Dirac measure at $\bx$.
Entropic optimal transport (EOT) \cite{peyre2019computational} is
\begin{equation}\label{eq:eot}
\OT_\eps(\bmu, \bnu)
= \min_{P \in \Pi(\ba, \bb)} \;\langle C, P \rangle+\eps \textrm{KL}\left(P\middle\|\ba\otimes\bb\right),
\end{equation}
where 
$\Pi(\ba, \bb)=\{P\in \R_{\ge 0}^{n\times m}: P\mathbbm{1}_m=\ba, P^\top\mathbbm{1}_n=\bb\}$ is the transport polytope,
$C\in\R^{n\times m}$ is the cost matrix with entries $C_{ij}= \|\bx_i-\by_j\|_2^2$, i.e., the squared Euclidean distance, and
$\textrm{KL}\left(P\middle\|\ba\otimes\bb\right)
:=\sum_{i=1}^n\sum_{j=1}^m \left(P_{ij}\log\frac{P_{ij}}{a_i b_j}-P_{ij}+a_ib_j\right)$,
denotes the relative entropy with $\eps>0$ the regularization strength.

Let $\mf\in\R^{n}$ and $\mg\in\R^{m}$ be the dual potentials. In the stabilized log-domain, Sinkhorn's
fixed-point alternating iterations update $(\mf,\mg)$ via log-sum-exp reductions \cite{altschuler2017near,schmitzer2019stabilized,feydy2020geometric}:
\begin{align}
f_i &\leftarrow -\eps \, \LSE_j\left[\left(g_j-C_{ij}\right)/\eps+\log b_j\right], \label{eq:sinkhorn_f}\\
g_j &\leftarrow -\eps \, \LSE_i\left[\left(f_i-C_{ij}\right)/\eps+\log a_i\right], \label{eq:sinkhorn_g}
\end{align}
where $\LSE(x)=\log\sum_k \exp(x_k)$ and $\LSE_j(\cdot)$ (resp.\ $\LSE_i(\cdot)$) denotes reduction over
$j$ (resp.\ $i$). The optimal transport matrix is recovered as  $P^*_{ij}=a_i b_j \exp\left[\left(f_i^*+g_j^*-C_{ij}\right)/\eps\right]$.
A common symmetrized  variant computes the two half-steps in parallel from the same
$(\mf^{\rm old},\mg^{\rm old})$ and then averages \cite{feydy2020geometric}:
\begin{align}
    f_i^{\text{new}}&\leftarrow  \frac{f_i^{\text{old}}}{2}-\frac{\eps}{2} \, \LSE_j\left[\left(g_j^{\text{old}}-C_{ij}\right)/\eps+\log b_j\right],  \label{eq:sinkhorn_f_sym} \\ 
  g_j^{\text{new}} &\leftarrow \frac{g_j^{\text{old}}}{2}-\frac{\eps}{2}  \,\LSE_i\left[\left(f_i^{\text{old}}-C_{ij}\right)/\eps+\log a_i\right].   \label{eq:sinkhorn_g_sym}
\end{align}
Unlike the alternating updates \cref{eq:sinkhorn_f}--\eqref{eq:sinkhorn_g}, the two reductions  are independent and therefore more amenable to parallel evaluation. Detailed derivation is in appendix \ref{appendix:eot_sinkhorn}.

\subsection{Differentiation of EOT with respect to data}

The gradient of the EOT loss with respect to source points is \citep{feydy2019interpolating, pooladian2021entropic}:
\begin{align*}
\nabla_{\bx_i} \OT_\eps
= 2a_i\Big(\bx_i-T_\eps(\bx_i)\Big), \  T_\eps(\bx_i)=\frac{1}{a_i}\sum_{j=1}^m P_{ij}^*\by_j.
\end{align*}
where $T_\eps$ is the barycentric projection. 

The Hessian $\nabla^2_{X}\OT_\eps$ can be viewed as a 4th-order tensor
$\cT\in\R^{n\times d\times n\times d}$ with entries  \cite{li2025robust},
\begin{equation}\label{eq:hessian_concise}
\cT_{ktsl}
=\frac{1}{\eps}\sum_{i,j=1}^{n+m}\cR_{ikt}\,H^{*\dagger}_{ij}\,\cR_{jsl}
\;+\;\cE_{ktsl},
\end{equation}
where $H^{*\dagger}$ denotes the Moore-Penrose pseudoinverse of the sensitivity matrix $H^*=
\begin{pmatrix}
\diag(\ba) & P^*\\
(P^*)^\top & \diag(\bb)
\end{pmatrix}\in\R^{(n+m)\times(n+m)}$. Define $\cB\in\R^{n\times d\times m}$ by $\cB_{ktj}:=2(x_{kt}-y_{jt})\,P^*_{kj}$, and
$\cR\in\R^{(n+m)\times n\times d}$ slice-wise by
  \begin{equation*}
      \cR_{:,:,t}:=
\begin{pmatrix}
\operatorname{Diag}\big(\cB_{:,t,:}\mathbbm{1}_m\big)\\[1mm]
(\cB_{:,t,:})^\top
\end{pmatrix}.    
  \end{equation*}

The  term $\cE$ is block-diagonal across points: $\cE_{k,:,s,:}=0$ for $k\neq s$, and
\begin{equation}\label{eq:E_concise}
\cE_{k,:,k,:}
=
2a_k\mathbbm{I}_d
-\frac{4}{\eps}\sum_{j=1}^m P^*_{kj}\,(\bx_k-\by_j)^\top(\bx_k-\by_j). 
\end{equation}

If Sinkhorn is terminated before full convergence, the resulting coupling $\tilde P$ typically satisfies the marginal constraints only approximately. Let $\tilde\ba:=\tilde P\mathbbm{1}_m$ and $\tilde\bb:=\tilde P^\top\mathbbm{1}_n$ be its actual marginals. Then the expressions above remain exact when interpreted as derivatives of $\OT_\eps(\tilde\bmu,\tilde\bnu)$ with $\tilde\bmu=\sum_{i=1}^n \tilde{a}_i\delta_{\bx_i}$ and $\tilde\bnu=\sum_{j=1}^m \tilde{b}_j\delta_{\by_j}$\cite{feydy2020geometric}. The detailed derivation is in appendix \ref{append:derivatives}.

\subsection{FlashAttention and Online LSE}

  Modern GPUs feature a memory hierarchy where on-chip static random-access memory (SRAM) offers an order of magnitude higher bandwidth than high-bandwidth memory (HBM), but with far smaller capacity. 
  As computing speeds have outpaced memory bandwidth, many operations, including softmax and reductions, are memory-bound: their runtime is dominated by HBM accesses rather than arithmetic. Kernel fusion addresses this by loading inputs once from HBM and performing multiple operations in SRAM before writing final outputs to HBM.                                                                                                                     
                                                                   
  FlashAttention \cite{dao2022flashattention} applies this principle to attention by avoiding materialization of the $n \times n$ score matrix $S = QK^\top$. The key  is online log-sum-exp (LSE) \cite{milakov2018online}. The detailed derivation is in appendix \ref{app:online-lse}.  This fuses ``score $\rightarrow$ softmax $\rightarrow$ aggregation'' into one kernel that updates per-row  and output accumulator tile-by-tile, writing only final results to HBM.

\section{FlashSinkhorn: Algorithm and Analysis}
\label{sec:method}

\subsection{Streaming Sinkhorn Iterations}
\label{subsec:sinkhorn}
\begin{figure*}[t]
  \centering
  \includegraphics[width=\textwidth]{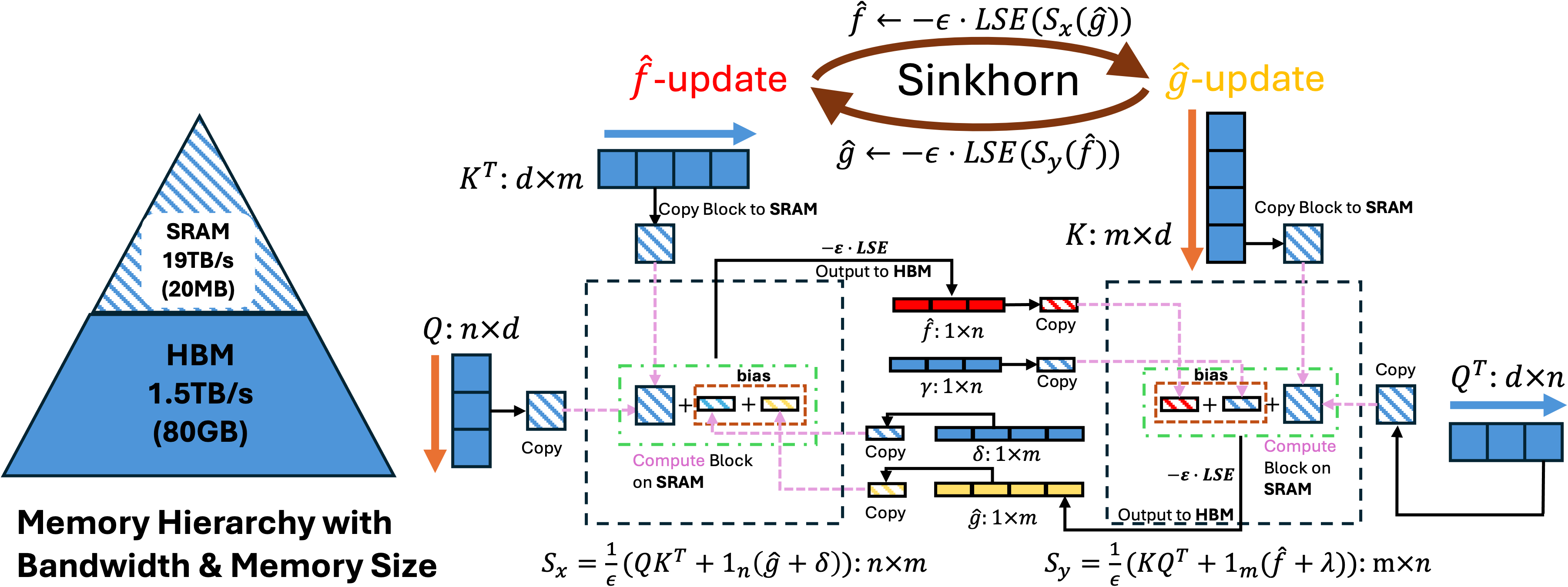}
  \caption{FlashSinkhorn uses tiling to avoid materializing the $n\times m$ score matrix on slow HBM.
\textbf{Left:} GPU memory hierarchy. SRAM ($\sim$20\,MB, $\sim$19\,TB/s) is small but fast; HBM ($\sim$80\,GB, $\sim$1.5\,TB/s) is large but slow.
\textbf{Middle:} streaming $\hat\mf$ update (\Cref{alg:flash_sinkhorn_hatf}). The outer loop (orange) stages a row block $Q_I$ in SRAM with running LSE statistics; the inner loop (blue) streams $K_J$ tiles plus bias $\hat\mg_J+\bdelta_J$, forms the score tile on-chip, and writes $\hat\mf_I$ to HBM after the inner loop completes.
\textbf{Right:} streaming $\hat\mg$ update with $Q,K$ swapped (bias $\hat\mf_I+\bgamma_I$).
Hatched = SRAM tile, solid = HBM tensor; black arrows = HBM$\leftrightarrow$SRAM transfers, pink dashed = on-chip data flow.}
  \label{fig:overview}

\end{figure*}

\begin{algorithm}[t]
\caption{FlashSinkhorn: streaming $\hat \mf$-update }
\label{alg:flash_sinkhorn_hatf}
\begin{algorithmic}[1]
\STATE \textbf{Input:} $X\in\R^{n\times d}$, $Y\in\R^{m\times d}$, $\hat \mg\in\R^{m}$, $\bb\in\Delta^m$
\STATE \textbf{Output:} $\hat \mf\in\R^{n}$

\STATE $\bdelta\leftarrow \eps\log \bb$ \hfill \textit{// precompute bias}
\FOR{each row block $I$ of size $B_N$}
    \STATE Load $X_I$ to on-chip SRAM \hfill
    \STATE $\mm_I \leftarrow -\infty$, \ $\bs_I \leftarrow 0$ \hfill \textit{// running max / sumexp}
    \FOR{each column block $J$ of size $B_M$}
        \STATE Load $Y_J$, $\hat{\mg}_J$ and $\bdelta_J$ to on-chip SRAM
        \STATE $S \leftarrow (2X_I Y_J^\top+\hat{\mg}_J+\bdelta_J)/\eps$ \hfill \textit{//  score tile + bias}
        \STATE $\tilde \mm \leftarrow \mathrm{rowmax}(S)$ \hfill \textit{// tile max}
        \STATE $\mm_{\mathrm{new}} \leftarrow \max(\mm_I,\tilde \mm)$ \hfill \textit{// update max row-wise}
        \STATE $\bs_I \leftarrow e^{\mm_I-\mm_{\mathrm{new}}}\odot \bs_I + \mathrm{rowsum}\!\left(e^{S-\mm_{\mathrm{new}}}\right)$ \hfill 
        \STATE $\mm_I \leftarrow \mm_{\mathrm{new}}$ 
    \ENDFOR
    \STATE $\hat \mf_I \leftarrow -\eps\,\big(\mm_I+\log \bs_I\big)$ \hfill \textit{// $\hat \mf=-\eps\,\LSE_{\rm row}$}
    \STATE Write $\hat \mf_I$ to HBM \hfill 
\ENDFOR
\end{algorithmic}
\end{algorithm}

For EOT with squared Euclidean cost, the log-domain Sinkhorn updates can be written as row-wise LSE reductions of a biased dot-product score matrix. This is exactly the same per-row normalization statistic that underlies scaled dot-product attention. As a result, each Sinkhorn iteration can be computed by FlashAttention-style streaming without materializing the cost matrix $C\in \mathbb{R}^{n\times m}.$

\begin{proposition}[Sinkhorn iteration as biased dot-product LSE]\label{prop:sinkhorn-attn-lse}
Define $\balpha\in\R^n$, $\bbeta\in\R^m$ by $\alpha_i=\|\bx_i\|_2^2$ and $\beta_j=\|\by_j\|_2^2$.
Set $Q:=\sqrt{2}X$ and $K:=\sqrt{2}Y$ and define
the precomputable row vectors
$\bdelta := \varepsilon \log \bb \in \R^{ m},
\bgamma := \varepsilon \log \ba \in \R^{ n}$. Define the shifted potentials
$\hat \mf:=\mf-\balpha$ and $\hat \mg:=\mg-\bbeta$.
Define the row-logits and column-logits
\begin{align}
  S_X(\hat\mg) := \left(QK^\top+\mathbbm{1}_n(\hat\mg+\bdelta)\right)/\eps, \\  
  S_Y(\hat\mf) := \left(KQ^\top+\mathbbm{1}_m(\hat\mf+\bgamma)\right)/\eps.
\end{align}
Then the stabilized log-domain alternating Sinkhorn updates in \cref{eq:sinkhorn_f}--\eqref{eq:sinkhorn_g}  are equivalently
\begin{align}
\hat \mf &\leftarrow -\eps\,
\LSE_{\rm row}\!\left(S_X(\hat\mg)\right),
\label{eq:hatf-lse}\\[2pt]
\hat \mg &\leftarrow -\eps\,
\LSE_{\rm row}\!\left(S_Y(\hat\mf)\right).
\label{eq:hatg-lse}
\end{align}
\end{proposition}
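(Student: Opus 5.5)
The plan is to expand the squared Euclidean cost and pull every term that does not depend on the reduction index outside the LogSumExp. This uses two facts: $\LSE_j(z_j + c) = c + \LSE_j(z_j)$ for any constant $c$ independent of $j$, and $\LSE$ is equivariant under scaling of the exponent only through the explicit $\eps$ factors already present. Writing $C_{ij} = \alpha_i + \beta_j - 2\langle \bx_i,\by_j\rangle$ and noting that $2\langle\bx_i,\by_j\rangle = (QK^\top)_{ij}$ since $Q=\sqrt2 X$, $K=\sqrt2 Y$, everything reduces to bookkeeping of which terms are row-constant.

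For the $\mf$-update \eqref{eq:sinkhorn_f}, I will write the argument of $\LSE_j$ as
\[
\frac{g_j - \beta_j - \alpha_i + 2\langle\bx_i,\by_j\rangle + \eps\log b_j}{\eps}
= \frac{(QK^\top)_{ij} + \hat g_j + \delta_j}{\eps} - \frac{\alpha_i}{\eps},
\]
using $\hat g_j = g_j-\beta_j$ and $\delta_j=\eps\log b_j$. The first term is exactly the $(i,j)$ entry of $S_X(\hat\mg)$, because the outer product $\mathbbm{1}_n(\hat\mg+\bdelta)$ adds $\hat g_j+\delta_j$ to every row. The second term is constant in $j$, so it leaves the LSE and gives $f_i = \alpha_i - \eps\,\LSE_j[S_X(\hat\mg)_{ij}]$. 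Subtracting $\alpha_i$ yields $\hat f_i = -\eps\,\LSE_{\rm row}(S_X(\hat\mg))_i$, which is \eqref{eq:hatf-lse}.

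The $\mg$-update \eqref{eq:sinkhorn_g} follows symmetrically. I will swap the roles of $X$ and $Y$ and of $\ba$ and $\bb$, and use $(KQ^\top)_{ji} = 2\langle \by_j,\bx_i\rangle$. The constant $-\beta_j/\eps$ is pulled out of $\LSE_i$, and with $\hat f_i+\gamma_i$ as the row bias this gives \eqref{eq:hatg-lse}, where the reduction over $i$ becomes a row reduction of the $m\times n$ matrix $S_Y$.

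I do not expect a real obstacle. The only point needing care is checking that the change of variables $\hat\mf=\mf-\balpha$, $\hat\mg=\mg-\bbeta$ is a bijection applied consistently on both sides. The updated $\hat f$ must be defined as the new $f$ minus $\alpha$, while the input bias uses the current $\hat g$. Since $\balpha,\bbeta$ are fixed across iterations, the alternating iteration in $(\mf,\mg)$ and the one in $(\hat\mf,\hat\mg)$ are conjugate, and the equivalence holds step by step. The same argument applies verbatim to the symmetrized variant \eqref{eq:sinkhorn_f_sym}--\eqref{eq:sinkhorn_g_sym}, since the averaging commutes with subtracting $\balpha$ or $\bbeta$.
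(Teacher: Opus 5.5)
Your proposal is correct and matches the paper's proof: both expand $C_{ij}=\alpha_i+\beta_j-(QK^\top)_{ij}$, absorb $\hat g_j+\delta_j$ into the row of $S_X(\hat\mg)$, pull the $j$-independent term $-\alpha_i/\eps$ out of $\LSE_j$ by shift invariance, and then subtract $\alpha_i$. The $\hat\mg$-update follows by the symmetric argument, which the paper also only sketches. Your extra remark about the symmetrized variant is also correct: the $\pm\alpha_i/2$ contributions cancel.
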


\vspace{-2mm}
The  proof is in appendix \ref{prop:sinkhorn-attn-lse-proof}. 
Consequently, Proposition \ref{prop:sinkhorn-attn-lse} reduces each stabilized Sinkhorn half-step to a LSE of a biased dot-product score matrix. This is the same normalization that appears in IO-aware exact attention kernels: a tiled GEMM produces blocks of logits, while online LSE accumulators update per-row statistics on the fly, so the full $n\times m$ score
matrix never needs to be materialized in high-bandwidth memory (HBM) \cite{dao2023flashattention2}.

To make this precise, we adopt a two-level memory model: inputs reside in slow HBM; each thread block
has fast on-chip SRAM of size $M$; and we measure cost by the number of scalars transferred between HBM and SRAM. Under this model, a materialized  Sinkhorn half-step writes the score matrix to HBM and then applies a row-wise LSE,
incurring $\Theta(nd+md+nm)$ HBM accesses. 

\begin{theorem}\label{thm:io-flashsinkhorn}
We consider the exact update in \cref{eq:hatf-lse} and
assume a tiling with row blocks of size $B_N$ and column blocks of size $B_M$ fits in SRAM with $B_M d + B_N d + (B_M + 2B_N)\ \lesssim\ M$,
then Algorithm~\ref{alg:flash_sinkhorn_hatf} achieves
$\Theta\Big(nd+md+\frac{nmd^2}{M}\Big)
 \text{ HBM accesses for } d\le M\le \min\{n,m\}d$,
and $\Theta(nd+md)$ for $M\ge \min\{n,m\}d$.

\end{theorem}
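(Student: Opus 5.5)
Correctness of the output follows from Proposition~\ref{prop:sinkhorn-attn-lse} together with the exactness of the online LSE recursion (appendix~\ref{app:online-lse}): after the last column block, $\mm_I+\log\bs_I$ equals the row-wise LSE of the full score block $S_X(\hat\mg)_{I,:}$. So the real work is counting HBM transfers, in the style of the FlashAttention IO analysis~\citep{dao2022flashattention}. I would write that count as a function of $B_N$ and $B_M$, and then pick block sizes that saturate the SRAM constraint $B_Md+B_Nd+(B_M+2B_N)\lesssim M$.

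\textbf{Upper bound.} Each outer iteration $I$ does the following:
\begin{itemize}
\item it loads $X_I$ once, costing $B_Nd$;
\item in the inner loop it streams all of $Y$, $\hat\mg$ and $\bdelta$, costing $md+2m$;
\item it writes $\hat\mf_I$, costing $B_N$.
\end{itemize}
The running statistics $\mm_I$ and $\bs_I$ stay on chip. Summing over the $\lceil n/B_N\rceil$ row blocks gives
\begin{equation*}
nd + n + \Big\lceil \tfrac{n}{B_N}\Big\rceil (md+2m) \;=\; \Theta\Big(nd+\tfrac{nmd}{B_N}\Big).
\end{equation*}
The score tile is formed in registers or SRAM, so its size $B_NB_M$ is not an HBM cost. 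Let $d\le M\le \min\{n,m\}d$. I would take $B_M=\Theta(1)$ (or any $B_M\lesssim B_N$) and $B_N=\Theta(M/d)$, which satisfies the constraint. This already gives $\Theta(nd+nmd^2/M)$.

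By symmetry, swapping the roles of the loops makes the bound depend on $\min\{n,m\}$. When $M\ge \min\{n,m\}d$, say $n\le m$, a single row block holds all of $X$ (set $B_N=n$). Then $Y$ is streamed exactly once and the total is $\Theta(nd+md)$.

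\textbf{Lower bound.} Any algorithm must read $X$ and $Y$ and write $\hat\mf$, so $\Omega(nd+md)$ accesses are unavoidable. This settles the regime $M\ge\min\{n,m\}d$. For the $nmd^2/M$ term, I would match it within the family of tilings allowed by the stated SRAM constraint. Since $B_Nd\lesssim M$ forces $B_N\lesssim M/d$, the count above is at least $nmd/B_N\gtrsim nmd^2/M$, so the $\Theta$ is tight for Algorithm~\ref{alg:flash_sinkhorn_hatf}.

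\textbf{Main obstacle.} The hardest step is justifying the cross term without overclaiming, and making it match the stated exponent $d^2$. I expect the theorem intends exactly the algorithm-specific count $\lceil n/B_N\rceil\cdot md$ with $B_N=\Theta(M/d)$, not an algorithm-independent red-blue pebbling lower bound. I would state that interpretation explicitly, taking care with the boundary cases $M\approx d$ (where $B_N=\Theta(1)$ and the cost is $nmd$) and $M\approx nd$, and with the $O(n+m)$ vector terms, which are absorbed into $nd+md$.
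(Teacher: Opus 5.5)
Your proposal is correct and follows essentially the same route as the paper's proof. The paper also counts $\Theta(B_Nd+md+m+B_N)$ HBM accesses per row block, sums over $\lceil n/B_N\rceil$ blocks, obtains $B_N=\Theta(M/d)$ from the SRAM constraint (upper bound via $(d+2)B_N\lesssim M$, lower bound by taking $B_M$ minimal), and handles the regime $M\ge\min\{n,m\}d$ by holding the smaller operand entirely in SRAM. Your caveat that the $nmd^2/M$ term is tight only within this tiling family matches the paper's scope. The paper only adds the FlashAttention-style endpoint remark that no exact algorithm can achieve $o(nmd^2/M)$ uniformly over $M\in[d,\min\{n,m\}d]$.
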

\vspace{-2mm}
The full proof is in appendix \ref{thm:io-flashsinkhorn-proof}.
The key idea is that SRAM of size $M$ allows us cache only $B_N=\Theta(M/d)$ rows of $Q$ at a time, hence we must make $\Theta(n/B_N)=\Theta(nd/M)$ passes over $K$, each streaming $\Theta(md)$ scalars from HBM, which yields the dominant $\Theta(nmd^2/M)$ HBM-access term.

We implement each stabilized Sinkhorn update \cref{eq:hatf-lse}--\eqref{eq:hatg-lse} as a single fused Triton GPU kernel that streams tiles of $Q$ and $K$, forms the biased scores, and maintains online row-wise $\LSE$ statistics on chip, writing only the updated potentials back to HBM. This mirrors FlashAttention’s IO-aware online-softmax machinery. In FlashAttention, the forward pass streams blocks of $(K,V)$ and revisits all query blocks $Q$, writing intermediate row statistics/output blocks to HBM; in FlashSinkhorn we flip the nesting (row-stationary $Q$-outer, $K/V$-inner) so each row block retains its running $\LSE$ statistics on chip across all key/value blocks and is written out only once, which better matches Sinkhorn’s row-wise reductions. The kernels are detailed in \cref{alg:flash_sinkhorn_hatf,alg:flash_sinkhorn_hatg} and illustrated in \Cref{fig:overview} (middle, right), with further implementation details in Appendix~\ref{app:implement}. The kernels above implement the stabilized \emph{alternating} schedule in \cref{eq:sinkhorn_f}--\eqref{eq:sinkhorn_g}; for apples-to-apples comparison to libraries that default to a different schedule, we also fuse the \emph{symmetric} updates \cref{eq:sinkhorn_f_sym}--\eqref{eq:sinkhorn_g_sym} in the same \textbf{single-kernel} streaming form.


\paragraph{Scope of cost structure.}
The same reduction applies to any cost of the form $C_{ij} = \alpha_i + \beta_j - Q_i^\top K_j$ with precomputable per-point terms $\alpha_i,\beta_j$ and explicit features $Q\in\R^{n\times d'}, K\in\R^{m\times d'}$: each Sinkhorn update then becomes a biased dot-product LSE that streams tile-by-tile without materializing the $n\times m$ cost matrix. Squared Euclidean is the primary instance ($Q=\sqrt{2}X$, $K=\sqrt{2}Y$), and cosine distance fits the same form after L2-normalization ($1 - x_i^\top y_j = \tfrac{1}{2}\|x_i-y_j\|_2^2$ on unit-norm inputs, with adjusted $\varepsilon$); the OTDD label-augmented cost in \Cref{sec:experiments} adds a bounded additive lookup term \citep{alvarez2020geometric} evaluated on-the-fly inside the kernel. Costs without this structure (e.g., raw Euclidean $\|x-y\|_2$, where the square root breaks the affine form, or learned neural costs) do not admit this fused-streaming form and are future work.

\subsection{Streaming Transport Matrix Application}
\label{subsec:transport_matrix}
\begin{figure}[!t]
  \centering
  \includegraphics[width=\columnwidth]{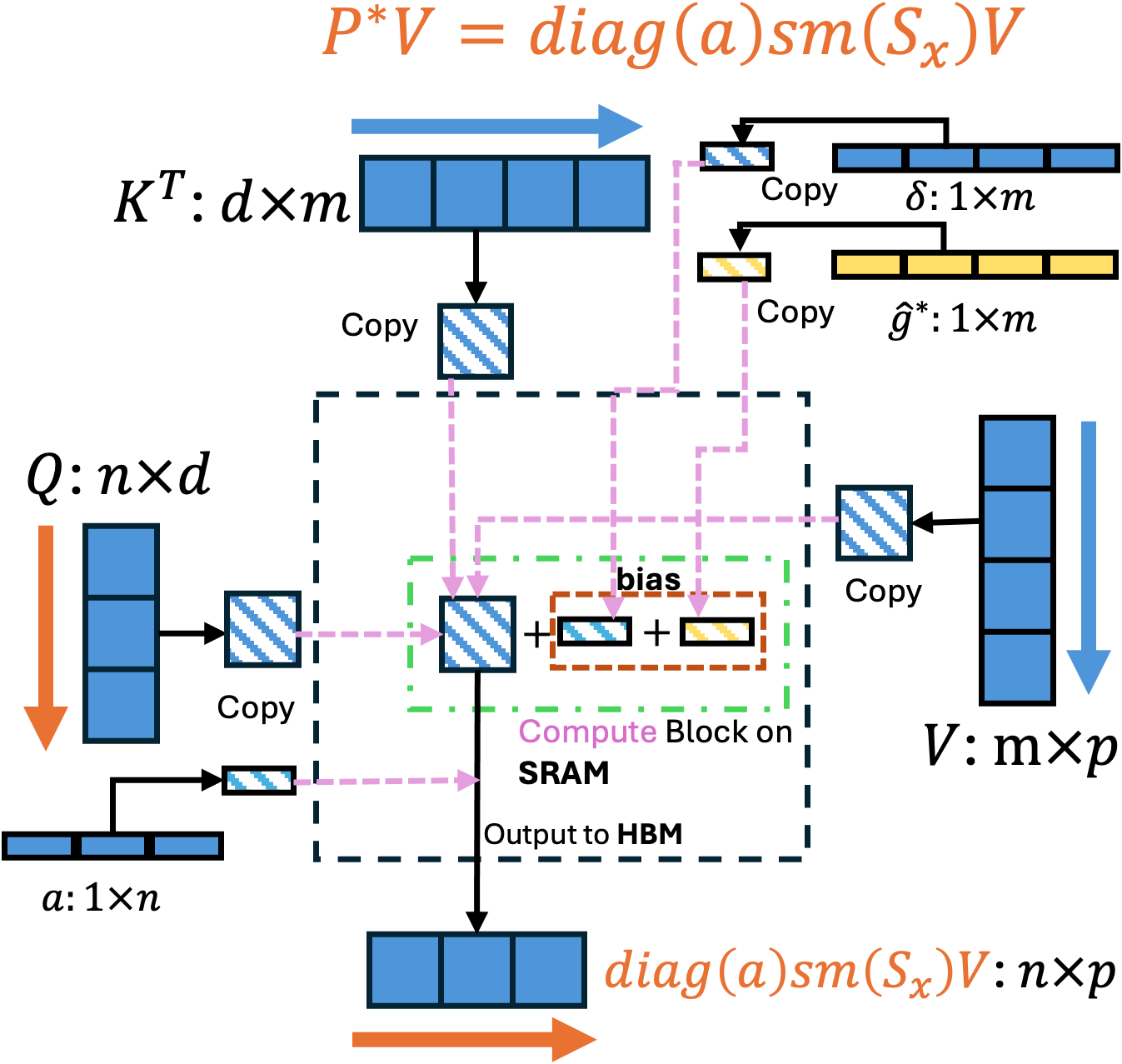}
  \caption{Streaming transport-matrix application $PV$ (\Cref{alg:plan_mat_stream}); at converged potentials this equals $P^*V$. The same tiling avoids materializing $P$ on HBM. The outer loop (orange, left) stages $Q_I,\hat\mf_I,\ba_I$ in SRAM; the inner loop (blue) streams $K_J, V_J$ plus bias $\hat\mg_J+\bdelta_J$ and accumulates an online weighted sum on-chip. After the inner loop, the source marginal correction is applied and $(PV)_I$ is written to HBM (orange, bottom).}
  \label{fig:transport_stream}
 
\end{figure}
The entropic barycentric projection admits an attention form: it is a row-wise softmax-weighted
average of target data. As a result, $\nabla_X\OT_\eps(\bmu,\bnu)$ is a weighted residual between
the source data and an attention output at optimality.  

\begin{proposition}[Transport matrix application as an attention output]
\label{prop:plan_value_as_attention}

Given any shifted potentials $\hat\mf$ and $\hat\mg$, define the transport matrix 
\begin{equation}\label{eq:P_hatfg_general}
P_{ij}(\hat\mf,\hat\mg)
:=a_i b_j \exp\!\left(\frac{\hat f_i+\hat g_j+(QK^\top)_{ij}}{\eps}\right).
\end{equation}

Let 
$\hat\mf^+=-\eps\LSE_{\rm row}(S_X(\hat\mg)), \hat\mg^+=-\eps\LSE_{\rm row}(S_Y(\hat\mf))$, and define the row-mass vector and column-mass vector,
\begin{align}
&\mathbf{r} := P\mathbbm{1}_m
      = \ba \odot \exp\left(\frac{\hat\mf-\hat\mf^+}{\eps}\right), \\
&\mathbf{c} := P^\top\mathbbm{1}_n = \bb\odot \exp\left(\frac{\hat\mg-\hat\mg^+}{\eps}\right).       
\end{align}
Then for any $V\in\R^{m\times p}$ and any $U\in\R^{n\times p}$,
\begin{align}
&P\,V = \diag(\mathbf{r})\,\rm{Softmax}(S_X(\hat\mg))\,V,\\
&P^\top\, U = \diag(\mathbf{c})\,\rm{Softmax}(S_Y(\hat\mf))\,U,
\end{align}
where softmax is applied row-wise.
Moreover, if $(\hat\mf^*,\hat\mg^*)$ are the Sinkhorn fixed points, then $\mathbf{r}=\ba$, $\mathbf{c}=\bb$ and $P(\hat\mf^*, \hat\mg^*)$ in \cref{eq:P_hatfg_general} recovers the optimal transport $P^*$.
\end{proposition}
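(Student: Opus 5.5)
The plan is to verify everything entrywise, using the row-wise LSE structure of Proposition~\ref{prop:sinkhorn-attn-lse}. First I will compute the row masses. Writing $P_{ij}=a_i b_j\exp\bigl((\hat f_i+\hat g_j+(QK^\top)_{ij})/\eps\bigr)$ and using $b_j=\exp(\delta_j/\eps)$ since $\bdelta=\eps\log\bb$, I would factor out the $i$-dependent terms to get
\begin{equation*}
r_i=\sum_j P_{ij}=a_i e^{\hat f_i/\eps}\sum_j \exp\bigl((S_X(\hat\mg))_{ij}\bigr)=a_i e^{\hat f_i/\eps}\exp\bigl(\LSE_j (S_X(\hat\mg))_{ij}\bigr).
\end{equation*}
By definition $\hat f_i^+=-\eps\LSE_j(S_X(\hat\mg))_{ij}$, so this equals $a_i\exp((\hat f_i-\hat f_i^+)/\eps)$. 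The column masses $\mathbf c$ follow by the symmetric computation with $S_Y(\hat\mf)$, using $a_i=\exp(\gamma_i/\eps)$ and $(KQ^\top)_{ji}=(QK^\top)_{ij}$.

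Next comes the attention identity. From the same factorization, $P_{ij}=a_i e^{\hat f_i/\eps}\exp((S_X)_{ij})$. Dividing by $r_i$ gives $P_{ij}/r_i=\exp((S_X)_{ij})/\sum_k\exp((S_X)_{ik})=\mathrm{Softmax}(S_X)_{ij}$. This step needs $r_i>0$, which holds automatically because $a_i>0$ is implicitly assumed (we use $\log\ba$); if some $a_i=0$, that row of $P$ is zero and the identity still holds trivially. Hence $P=\diag(\mathbf r)\,\mathrm{Softmax}(S_X(\hat\mg))$, and multiplying on the right by $V$ gives the first claim. Transposing and repeating with $S_Y$ gives $P^\top=\diag(\mathbf c)\,\mathrm{Softmax}(S_Y(\hat\mf))$, and hence the claim for $P^\top U$.

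For the fixed-point statement, I take the Sinkhorn fixed points of \cref{eq:hatf-lse}--\eqref{eq:hatg-lse} to mean $\hat\mf^*=-\eps\LSE_{\rm row}(S_X(\hat\mg^*))$ and $\hat\mg^*=-\eps\LSE_{\rm row}(S_Y(\hat\mf^*))$, i.e.\ $\hat\mf^+=\hat\mf^*$ and $\hat\mg^+=\hat\mg^*$. Then the exponentials equal $1$, so $\mathbf r=\ba$ and $\mathbf c=\bb$. To recover $P^*$, I undo the shift $\hat f_i=f_i-\alpha_i$, $\hat g_j=g_j-\beta_j$. With $Q=\sqrt2 X$ and $K=\sqrt2 Y$,
\begin{equation*}
\hat f_i+\hat g_j+2\langle \bx_i,\by_j\rangle=f_i+g_j-\|\bx_i\|^2-\|\by_j\|^2+2\langle\bx_i,\by_j\rangle=f_i+g_j-C_{ij}.
\end{equation*}
So $P(\hat\mf^*,\hat\mg^*)_{ij}=a_ib_j\exp((f^*_i+g^*_j-C_{ij})/\eps)=P^*_{ij}$, by the recovery formula stated in the background and the equivalence in Proposition~\ref{prop:sinkhorn-attn-lse}.

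The argument is elementary throughout. The only point requiring care is the fixed-point step: one must note that the alternating fixed point satisfies both equations simultaneously, so that $\hat\mf^+$ computed from $\hat\mg^*$ equals $\hat\mf^*$. One must also make sure the bias conventions ($\bdelta$ absorbed into $S_X$, $\ba$ kept outside) are tracked consistently, so that the softmax normalizer cancels exactly.
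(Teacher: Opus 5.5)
Your proposal is correct and follows essentially the same route as the paper's proof: factor $P_{ij}=a_i e^{\hat f_i/\eps}\exp((S_X(\hat\mg))_{ij})$, read off $r_i$ from the definition of $\hat\mf^+$, divide by $r_i$ to get the row softmax, repeat with $S_Y(\hat\mf)$, and use $\hat\mf^+=\hat\mf^*$, $\hat\mg^+=\hat\mg^*$ at the fixed point. Where you differ slightly is the final identification with $P^*$: you undo the shift explicitly, $\hat f_i+\hat g_j+(QK^\top)_{ij}=f_i+g_j-C_{ij}$, which is more concrete than the paper's one-line appeal to the prescribed marginals.
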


The proof is in Appendix~\ref{prop:plan_value_as_attention-proof}. Algorithm~\ref{alg:plan_mat_stream} computes $\mathrm{out}=P(\hat\mf,\hat\mg)V$ with a FlashAttention-style fused \emph{matmul--softmax--matmul} kernel (illustrated in \Cref{fig:transport_stream}): it streams tiles to form biased dot-product scores, applies a row-wise softmax via online normalization, and accumulates values in a single pass. The identity holds for arbitrary potentials, so early stopping applies the induced coupling (with marginals $\br,\bc$), while at convergence it recovers $P^*V$. Under the HBM/SRAM model, the IO cost is $\Theta\!\big((n+m)(d+p)+\tfrac{nm(d+p)^2}{M}\big)$ HBM accesses \cite{dao2022flashattention}.

\begin{corollary}
\label{cor:barycentric_grad_attn}
Let $(\hat\mf^*,\hat\mg^*)$ be converged shifted potentials of \cref{eq:hatf-lse}--\eqref{eq:hatg-lse}. The barycentric projection $T_\eps(X):=\diag(\ba)^{-1}P^*Y$ admits the attention form
\[
T_\eps(X)=\mathrm{Softmax}_{\rm row}(S_X^*)\,Y, \qquad
T_\eps(\bx_i)=[T_\eps(X)]_i,
\]
and the gradient of EOT with respect to the source data has the residual-attention expression
\begin{equation}\label{eq:grad_residual_attn} \nabla_X\OT_\eps(\bmu,\bnu)=2\diag(\ba)\Big(X-\rm{Softmax}(S_X^*)Y\Big). \end{equation}
\end{corollary}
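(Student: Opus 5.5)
The plan is to obtain the corollary directly from Proposition~\ref{prop:plan_value_as_attention} with $V=Y$, and then substitute the result into the gradient formula from the background section. I will work at the converged shifted potentials $(\hat\mf^*,\hat\mg^*)$ and write $S_X^*:=S_X(\hat\mg^*)$.

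\emph{Step 1 (fixed point gives exact marginals).} At a fixed point of \cref{eq:hatf-lse}--\eqref{eq:hatg-lse} we have $\hat\mf^+=\hat\mf^*$. The row-mass formula in Proposition~\ref{prop:plan_value_as_attention} then gives $\mathbf{r}=\ba\odot\exp(0)=\ba$. The same proposition states that $P(\hat\mf^*,\hat\mg^*)=P^*$.

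\emph{Step 2 (barycentric projection).} Applying the identity $PV=\diag(\mathbf{r})\,\mathrm{Softmax}(S_X(\hat\mg))\,V$ with $V=Y\in\R^{m\times d}$ and $\mathbf{r}=\ba$ gives $P^*Y=\diag(\ba)\,\mathrm{Softmax}(S_X^*)\,Y$. Since $\ba\in\Delta^n$ is a weight vector, I will assume $a_i>0$ (needed for $\diag(\ba)^{-1}$ and for $T_\eps$ to be defined). Left-multiplying by $\diag(\ba)^{-1}$ then yields $T_\eps(X)=\mathrm{Softmax}_{\rm row}(S_X^*)\,Y$. Reading off row $i$ gives $\frac{1}{a_i}\sum_j P^*_{ij}\by_j=T_\eps(\bx_i)$, which matches the background definition.

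\emph{Step 3 (gradient).} The background formula is $\nabla_{\bx_i}\OT_\eps=2a_i(\bx_i-T_\eps(\bx_i))$. Stacking these as rows gives $\nabla_X\OT_\eps=2\diag(\ba)(X-T_\eps(X))$. Substituting Step 2 gives \cref{eq:grad_residual_attn}.

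The only real subtlety is Step 1: one must check that the softmax normalization absorbs the $\hat\mf$ term. Concretely, $P_{ij}=a_i e^{\hat f_i/\eps}\cdot e^{(S_X)_{ij}}$, and $\sum_j e^{(S_X)_{ij}}=e^{-\hat f_i^+/\eps}$, so row $i$ of $P$ equals $r_i$ times row $i$ of the softmax. Proposition~\ref{prop:plan_value_as_attention} already supplies this, and at convergence $r_i=a_i$. The remaining step is to identify the shifted-potential coupling with the original $P^*$ via $C_{ij}=\alpha_i+\beta_j-(QK^\top)_{ij}$, and this identification is also covered by that proposition.
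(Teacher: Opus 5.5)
Your proposal is correct and follows the paper's proof essentially step for step. You specialize Proposition~\ref{prop:plan_value_as_attention} to $V=Y$, use the fixed-point identities $\mathbf{r}=\ba$ and $P(\hat\mf^*,\hat\mg^*)=P^*$, left-multiply by $\diag(\ba)^{-1}$, and stack the row gradients $2a_i(\bx_i-T_\eps(\bx_i))$; your explicit assumption $a_i>0$ is a harmless addition that the paper leaves implicit.
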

The proof is in Appendix~\ref{cor:barycentric_grad_attn-proof}; \Cref{tab:ot-attention} summarizes the EOT--attention correspondence. Since both $P^*Y$ and \eqref{eq:grad_residual_attn} reduce to the same row-wise online softmax normalization and value accumulation as attention, they can be computed with the same FlashAttention-style streaming kernels.

  \begin{table}[!htbp]
  \centering
  \caption{EOT to Attention Correspondence.}                                                              
  \label{tab:ot-attention}                                                                                              
  \begin{tabular}{|c|c|}                                        
                   
  \hline                                                                                                                
  \textbf{EOT} & \textbf{Attention} \\                                                                    
  \hline                                                                                                                
  Source $X$ & Queries $Q = \sqrt{2}X$ \\                                                                               
  \hline                                                                         
  \multirow{2}{*}{Target $Y$} & Keys $K = \sqrt{2}Y$ \\                                                                 
   & Values $V = Y$ \\                                                                                                  
  \hline                                                                                                                
  Reg.\ $\varepsilon$ & Scaling $\sqrt{d}$ \\                                                                           
  \hline                                                                    
   $\hat{\mg}^* + \eps\log \bb$ & Key bias \\                                                               
  \hline                                                                      
  Bary.\ proj.\ $T_\eps (X)$ & Attn.\ output $\mathrm{Softmax}(S^*)V$ \\                                
  \hline                                                                                                                
  \end{tabular}                                                                                                 
  \end{table}

\begin{algorithm}[t]
\caption{Apply Transport from Potentials, $P\,V$}
\label{alg:plan_mat_stream}
\begin{algorithmic}[1]
\STATE \textbf{Input:} $X\in\R^{n\times d}$, $Y\in\R^{m\times d}$, $\hat\mf\in\R^{n}$, $\hat\mg\in\R^{m}$, $\ba\in\Delta^n$, $\bb\in\Delta^m$, $V\in\R^{m\times p}$
\STATE \textbf{Output:} $\mathrm{out}=P\,V\in\R^{n\times p}$

\STATE $\bdelta\leftarrow \eps\log \bb$ \hfill \textit{// absorb marginal into bias}

\FOR{each row block $I$ of size $B_N$}
    \STATE Load $X_I$, $\hat\mf_I$, $\ba_I$ to on-chip SRAM
    \STATE $\mm_I\leftarrow -\infty$, \ $O_I\leftarrow 0$
    \FOR{each column block $J$ of size $B_M$}
        \STATE Load $Y_J$, $\hat\mg_J$, $\bdelta_J$, $V_J$ to on-chip SRAM
        \STATE $S \leftarrow (2\,X_I Y_J^\top+\hat\mg_J+\bdelta_J)/\eps$ 
        \STATE $\tilde \mm \leftarrow \mathrm{rowmax}(S)$ \hfill \textit{// tile max}
        \STATE $\mm_{\mathrm{new}} \leftarrow \max(\mm_I,\tilde \mm)$ \hfill \textit{// running max}
        \STATE $O_I \leftarrow e^{\mm_I-\mm_{\mathrm{new}}}\odot O_I + e^{S-\mm_{\mathrm{new}}}\,V_J$ \hfill \textit{// online weighted sum}
        \STATE $\mm_I \leftarrow \mm_{\mathrm{new}}$
    \ENDFOR
    \STATE $\mathrm{out}_I \leftarrow \ba_I\odot \exp\!\left(\hat\mf_I/\eps + \mm_I\right) \odot O_I$ \hfill \textit{// source marginal correction}
    \STATE Write $\mathrm{out}_I$ to HBM
\ENDFOR
\end{algorithmic}
\end{algorithm}

\subsection{Streaming Hessian-Vector Products}
\label{sec:hvp}

Second-order information is central to large-scale optimization \cite{shen2020sinkhorn, brauer2017sinkhorn, tang2024safe}, gradient flows \cite{yamamoto2025hessianguidedperturbedwassersteingradient}, and implicit layers \cite{eisenberger2022unified}.
Although the full Hessian $\cT=\nabla_X^2\OT_\eps(\bmu,\bnu)$ is infeasible to form for large $(n,m)$,
many downstream routines only require Hessian--vector products (HVPs), e.g., Newton-CG,  trust-region
methods and Krylov eigenvalue estimation \cite{wright1999numerical}. We derive streaming HVPs that decompose into
transport-vector, transport-matrix, and Hadamard-weighted transport applications $(P^*\odot AY^\top)\,Y$, avoiding materialization
of both the Hessian tensor and the transport matrix. This reduces working memory from
$O(n^2d^2+nm)$ to $O((n+m)d)$.


\begin{theorem}[Streaming HVP oracle]\label{thm:hvp}
Consider EOT with squared Euclidean cost. For any $A\in\R^{n\times d}$, the Hessian--vector product
$G=\cT\,A$ can be evaluated by streaming over the optimal coupling $P^*$ using
$(2K_{CG}+3)$ transport--vector products, $3$ transport--matrix products, and $1$ Hadamard-weighted transport,
where $K_{CG}$ is the CG iteration count used to solve the Schur-complement linear system
$H^*\bw=\cR\,A$.
Consequently,
$\mathrm{flops}=O\!\big((K_{CG}+1)\,nmd\big), \mathrm{memory}=O\!\big((n+m)d\big)$.
\end{theorem}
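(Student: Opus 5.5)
The plan is to expand $G=\cT A$ using the decomposition \eqref{eq:hessian_concise} into two pieces, $G=\frac{1}{\eps}\cR^\top H^{*\dagger}(\cR A)+\cE A$, where $(\cR A)\in\R^{n+m}$ is obtained by contracting $\cR$ over its $(s,l)$ indices with $A$. Each piece will then be rewritten in terms of products of the forms $P^*v$, $(P^*)^\top u$, $P^*V$, $(P^*)^\top U$ and $(P^*\odot AY^\top)Y$, each of which streams exactly like \Cref{alg:plan_mat_stream}. The memory bound follows because only $O((n+m)d)$ vectors are ever stored. The flop bound follows because each streamed product costs $O(nmd)$.

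\textbf{Step 1: forward contraction $\bw_0=\cR A$.} From the definition of $\cB$, the first $n$ rows give $\sum_t A_{kt}\sum_j 2(x_{kt}-y_{jt})P^*_{kj}=2\big(\langle A_k,\bx_k\rangle a_k-\langle A_k,(P^*Y)_k\rangle\big)$. This uses one transport--matrix product $P^*Y$ (with row sums replaced by $\ba$, or by $\br$ via a transport--vector product if not converged).

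The last $m$ rows give $\sum_{k,t}2(x_{kt}-y_{jt})P^*_{kj}A_{kt}$. This equals $2\big[(P^*)^\top \mathrm{rowdot}(X,A)\big]_j-2\big[\big((P^*)^\top A\big)\odot Y\big]_j\ones_d$. It therefore needs one transport--vector product and one transport--matrix product $(P^*)^\top A$.

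\textbf{Step 2: solve $H^*\bw=\bw_0$ by CG on the Schur complement.} Eliminate the $\diag(\ba)$ block to get $\big(\diag(\bb)-(P^*)^\top\diag(\ba)^{-1}P^*\big)\bw_Y=\text{rhs}$. Each CG iteration applies $P^*$ and $(P^*)^\top$ once, which is two transport--vector products, giving $2K_{CG}$. Forming the right-hand side and back-substituting $\bw_X=\diag(\ba)^{-1}(\bw_{0,X}-P^*\bw_Y)$ cost a few more.

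The singular direction $(\ones,-\ones)$ is handled by projecting the right-hand side onto its complement, which is consistent with the pseudoinverse. I will pin the count at three extra transport--vector products: one for the right-hand side, one for back-substitution, and one from Step 1.

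\textbf{Step 3: backward contraction and $\cE A$.} The term $(\cR^\top\bw)_{kt}=\sum_j 2(x_{kt}-y_{jt})P^*_{kj}(w_{X,k}+w_{Y,j})$ expands to $2x_{kt}\big(a_k w_{X,k}+(P^*\bw_Y)_k\big)-2\big(w_{X,k}(P^*Y)_{kt}+(P^*(\bw_Y\odot Y))_{kt}\big)$. This requires one more transport--matrix product, $P^*\diag(\bw_Y)Y$, since $P^*Y$ is reused and $P^*\bw_Y$ is already available from the back-substitution.

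For $\cE A$, the rank-one sum gives $2a_kA_k-\frac4\eps\sum_jP^*_{kj}\langle \bx_k-\by_j,A_k\rangle(\bx_k-\by_j)$. Expanding $\langle\bx_k-\by_j,A_k\rangle=\langle \bx_k,A_k\rangle-(AY^\top)_{kj}$ produces terms in $a_k$, $P^*Y$, $P^*\ones$, $P^*Y$ weighted by the scalar $\langle\bx_k,A_k\rangle$, and the genuinely new term $(P^*\odot AY^\top)\ones$ paired with $(P^*\odot AY^\top)Y$. The latter is the single Hadamard-weighted transport. Its row-sum can be fused into the same stream by appending a ones column to $Y$.

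\textbf{Main obstacle.} The main difficulty is the bookkeeping. I must verify that, after reusing $P^*Y$ and the row and column masses, the totals come out exactly to $2K_{CG}+3$ transport--vector products, three transport--matrix products ($P^*Y$, $(P^*)^\top A$, $P^*(\bw_Y\odot Y)$) and one Hadamard product. I also need to confirm that the Hadamard-weighted stream has $O(nmd)$ cost, since forming $(AY^\top)_{kj}$ on the fly is one more tiled dot product. Once this accounting is verified, the flop total is $O((2K_{CG}+7)nmd)=O((K_{CG}+1)nmd)$, and the memory total is $O((n+m)d)$ because no $n\times m$ object is stored.
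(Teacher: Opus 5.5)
Your proposal is correct and follows the paper's argument. It uses the same split $G=\frac{1}{\eps}\cR^\top H^{*\dagger}(\cR A)+\cE A$, the same Schur-complement CG with back-substitution $\bw_1=\diag(\ba)^{-1}(\br_1-P^*\bw_2)$, and the same tally: $2K_{CG}+3$ transport--vector products, the three transport--matrix products $P^*Y$, $(P^*)^\top A$, $P^*(\diag(\bw_2)Y)$, and one Hadamard-weighted transport. One small simplification in the paper is that the row sum $(P^*\odot AY^\top)\mathbbm{1}_m$ equals the rowwise dot product of $A$ with the cached $P^*Y$, so you need no extra ones column in the Hadamard stream.
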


\begin{proof}[Proof sketch]
We start from the decomposition
\[
G=\cT\,A=\frac{1}{\eps}\,\cR^\top\bw+\cE\,A,
\qquad
\bw=H^{*\dagger}(\cR\,A).
\]
Partition $\bw=\binom{\bw_1}{\bw_2}\in\R^{n+m}$ and $\br:=\cR\,A=\binom{\br_1}{\br_2}\in\R^{n+m}$ with
$\bw_1,\br_1\in\R^n$ and $\bw_2,\br_2\in\R^m$.

\emph{(i) Explicit term.}
The term $\cE\,A$ reuses the cached transport-matrix product $P^*Y$ and requires one Hadamard-weighted transport application $(P^*\odot(AY^\top))\,Y$.

\emph{(ii) Implicit term.}
We form $\br=\cR\,A$ using $(P^*)^\top A$ (one transport--vector product) and eliminate $\bw_1$ to obtain the Schur system
\[
S\,\bw_2=\mathrm{rhs},
\qquad
S:=\diag(\bb)-(P^*)^\top\diag(\ba)^{-1}P^* .
\]
We solve this system by CG. Each CG iteration applies $S$ once, which amounts to one application of $P^*$ and one of $(P^*)^\top$
to vectors, plus diagonal scalings. After CG converges, we recover
\(
\bw_1=\diag(\ba)^{-1}\!\big(\br_1-P^*\bw_2\big),
\)
which requires one additional transport--vector product. Finally, we apply $\cR^\top\bw$, which uses one transport--matrix product
together with the cached $P^*Y$.
\end{proof}

\noindent
In practice, we solve a damped Schur system $S_\tau:=S+\tau I$ and stop CG at relative residual $\eta$, yielding a regularized HVP;
the Moore--Penrose solution is recovered as $\tau\downarrow 0$ and $\eta\downarrow 0$ (see appendix \ref{app:hvp-analysis}). For SPD $S_\tau$, standard CG bounds give
$K_{CG}=O\!\big(\sqrt{\kappa(S_\tau)}\,\log(1/\eta)\big)$ with $\kappa(S_\tau)$ being the condition number of $S_\tau$.
The streaming algorithm for the Hadamard-weighted transport
$(P^*\odot(AY^\top))\,Y$ from potentials is given in \Cref{alg:plan_hadamard_stream};
full derivations and operation counts appear in Appendix~\ref{app:hvp}.


\section{Experiments}
\label{sec:experiments}
We evaluate FlashSinkhorn across three dimensions: computational speed, memory efficiency, and scalability. Our experiments span synthetic benchmarks of kernel performance comparing against GeomLoss and OTT-JAX, and two downstream tasks: (1) \textbf{OTDD} for computing distances between labeled  datasets, and (2) \textbf{OT-based shuffled regression}, where our streaming HVP enables saddle detection and Newton acceleration. These experiments isolate the four main ingredients of FlashSinkhorn: the attention-style LSE reformulation (Proposition~\ref{prop:sinkhorn-attn-lse}), the fused forward kernel (\Cref{tab:ncu,tab:ncu_full_fwd,tab:ncu_per_kernel}), the streamed transport/backward operator (\Cref{fig:synthetic} forward+backward panels), and the streamed HVP (\Cref{fig:synthetic} HVP panels).

\subsection{Synthetic Benchmarks}

We benchmark FlashSinkhorn against two widely used GPU implementations: GeomLoss \cite{feydy2019geometric} (tensorized and KeOps) and OTT-JAX \cite{cuturi2022optimal} (online) on synthetic point clouds (Figure~\ref{fig:synthetic} and \Cref{tab:speedup}). OTT-JAX is built on JAX/XLA and implements the stabilized log-sum-exp Sinkhorn solver via the standard alternating fixed-point updates (\cref{eq:sinkhorn_f}--\eqref{eq:sinkhorn_g}) \cite{cuturi2022optimal}. GeomLoss provides PyTorch operators with multiple backends: its core Sinkhorn solver uses a symmetric update schedule (\cref{eq:sinkhorn_f_sym}--\eqref{eq:sinkhorn_g_sym}); its tensorized mode materializes dense interactions, while its online mode represents cost and kernel matrices symbolically with KeOps LazyTensors \cite{charlier2021kernel} and evaluates them via tiled map-reduce reductions.

 We evaluate three computational regimes: \underline{forward pass} measures time to compute dual potentials and OT cost; \underline{forward + backward} reports end-to-end differentiation time for OT-based losses; and \underline{Hessian-vector product (HVP)} benchmarks second-order computation via our streaming operators. Full experimental details are in Appendix~\ref{app:syn-bench}.

\begin{figure*}[h]
  \centering
  \includegraphics[width=\textwidth]{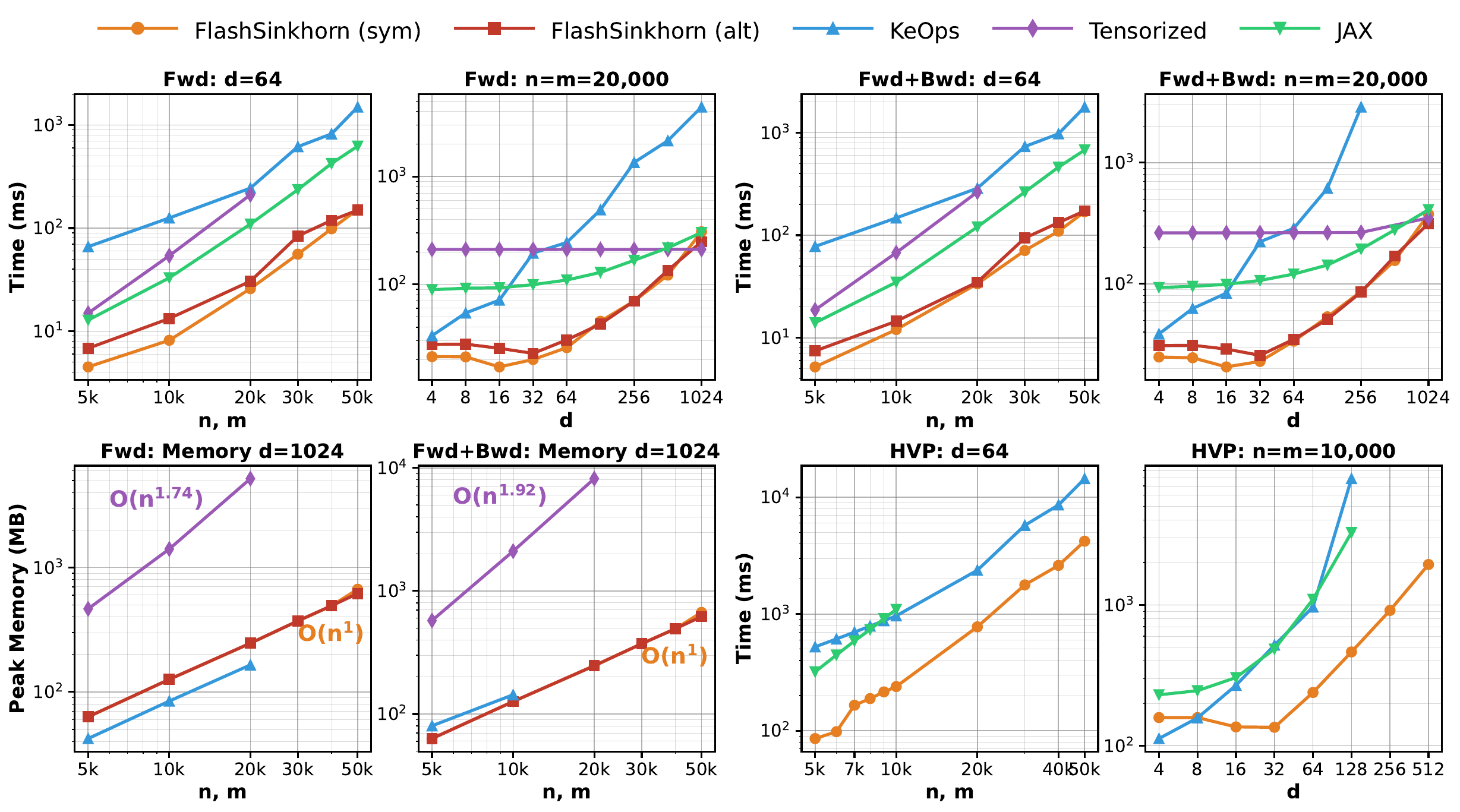}
  \caption{\textbf{FlashSinkhorn performance benchmarks.}                              
  (Top)~Forward and forward+backward timing versus $n$ at $d{=}64$ and versus $d$ at $n{=}20k$. 
  (Bottom left)~Memory scaling at $d{=}1024$: FlashSinkhorn maintains $O(n)$ while tensorized methods scale as $O(n^{1.7}\text{--}n^{1.9})$.  
  (Bottom right)~HVP timing; FlashSinkhorn alone scales to $n{=}50k$ and $d{\ge}256$.         A100-80GB, $\varepsilon{=}0.1$.}
  \label{fig:synthetic}
  \vspace{-3mm}
\end{figure*}

  \begin{table}[!htbp]
    \centering
    \caption{NCU profiling of forward pass ($n$=$m$=10,000, $d$=64, 10 Sinkhorn iterations, A100-80GB).}                                      
    \label{tab:ncu}                                                                                                                           
    \footnotesize                                                                                                                             
    \begin{tabular}{|l|c|c|c|}                                                                                                                
    \hline                                                                                                                                    
     & Tensor. & KeOps & Flash \\                                                                                                             
    \hline                                                                                                                                    
    HBM R/W (GB) & 98 & 0.14 & \textbf{0.08} \\                                                                                              
    Runtime (ms) & 54.0 & 125.5 & \textbf{8.2} \\                                                                                             
    SM Util. (\%) & 98 & 49 & 74 \\                                                                                                           
    Mem. Stalls (\%) & 79 & 8 & \textbf{3} \\                                                                                                 
    \hline                                                                                                                                    
    Bottleneck & Mem. & Comp. & Comp. \\                                                                                                      
    \hline                                                                                                                                    
    \end{tabular}                                                                                                           
  \end{table}      

\textbf{Speed.}
FlashSinkhorn is the fastest method in our GPU baselines (\Cref{tab:speedup}): it achieves $9$--$32\times$ speedup over KeOps on the forward pass (and up to $161\times$ end-to-end at $d{=}512$), and up to $5.1\times$ over OTT-JAX.
Compared to tensorized baselines, FlashSinkhorn is $1.7$--$3.5\times$ faster on the settings shown in \Cref{tab:speedup}; across a broader sweep where tensorized fits in memory, the speedup can be larger (up to $12\times$ Appendix~\ref{app:FlashSinkhorn-vs-Geomloss}).
At small $n$ (below ${\sim}2000$) or very high $d$ with $n$ fitting in memory, tensorized baselines can be competitive because the dense cost matrix is precomputed and cached; \Cref{tab:speedup-tensorized} reports the per-$d$ crossover.
Tensorized variants become impractical once $n$ grows to tens of thousands due to $O(nm)$ materialization.
These gains come from kernel-level specialization rather than algorithmic differences.
FlashSinkhorn fuses each update into a small number of streaming kernels that compute pairwise costs on-the-fly and maintain online LSE statistics, reducing intermediate writes and kernel-launch overhead.

On the backward pass, the gap widens at high $d$.
While KeOps provides analytical gradients, realizing them entails additional all-pairs reductions and multiple kernel launches that re-evaluate the interaction kernel.
In contrast, FlashSinkhorn reuses cached normalization statistics and fuses gradient accumulation into a small number of streamed passes, yielding $100$--$200\times$ speedups at $d\ge512$ when KeOps completes (\Cref{tab:backward-speedup-keops}).

We report two FlashSinkhorn variants: \emph{symmetric} (single fused update) is best at small $n$ due to fewer launches, while \emph{alternating} (two simpler half-steps) wins at large $n$ and high $d$ where reduced register pressure improves efficiency.
See Appendix~\ref{app:FlashSinkhorn-vs-Geomloss} and \ref{app:flashsinkhorn-vs-ottjax} for full profiling and scaling analyses.

\textbf{Profiling insight.}
Table~\ref{tab:ncu} explains these trends.
Tensorized appears busy (98\% SM utilization) yet is bandwidth-limited: it moves 98\,GB through HBM and spends 79\% of cycles stalled on memory, dominated by repeatedly traversing dense $n\times m$ interactions across iterations.
KeOps avoids dense storage and drastically reduces HBM traffic (0.14\,GB), but its generic map-reduce reductions achieve low utilization (49\% SM) and high runtime (125.5\,ms).
FlashSinkhorn combines low HBM traffic (0.08\,GB) with substantially fewer memory stalls (3\%) and higher SM utilization (74\%), yielding a $15.3\times$ lower runtime than KeOps in this setting (8.2 vs.\ 125.5\,ms).
The full NCU breakdown (\Cref{tab:ncu_full_fwd,tab:ncu_full_bwd,tab:ncu_per_kernel}) attributes this gap to fused streaming: FlashSinkhorn issues $6.6\times$ fewer kernel launches (130 vs.\ 854) and routes $2.9\times$ more compute through the tensor pipeline ($10.1$\,M vs.\ $3.5$\,M tensor-pipe instructions) by rewriting the squared-Euclidean interaction as tiled dot products.

\textbf{Memory.}
Tensorized implementations store dense $n\times m$ intermediates, incurring $O(nm)$ memory and eventually OOM as $n$ grows.
In contrast, FlashSinkhorn maintains only $O((n{+}m)d)$ resident state (points and dual potentials) and streams the interactions, enabling runs at $n{=}50$k where tensorized baselines OOM and KeOps becomes out-of-time at high $d$ (\Cref{tab:speedup,tab:speedup-keops}).

\textbf{Low-$\varepsilon$ regime.}
At $\varepsilon \in \{0.10, 0.05, 0.01\}$ ($n{=}m{=}10000$, $d{=}64$, 10 iterations, TF32), FlashSinkhorn maintains its 10-iteration forward time across the range, yielding $16$--$17\times$ speedup over KeOps and ${\sim}4\times$ over OTT-JAX (\Cref{tab:lowepsfwd} in Appendix~\ref{app:loweps}). Numerical precision in fp32 is stable: relative error versus an fp64 dense reference grows from $4{\times}10^{-5}$ at $\varepsilon{=}0.10$ to $7.7{\times}10^{-4}$ at $\varepsilon{=}0.01$ (\Cref{tab:lowepsprecision}). Per-iteration speed is essentially $\varepsilon$-independent; total solve time grows with the iteration budget required for convergence at smaller $\varepsilon$ (\Cref{tab:lowepsiters}), and the streaming HVP retains $<\!1\%$ relative error against a dense Moore--Penrose ground truth at $\varepsilon{=}0.01$ (\Cref{tab:lowepshvp}).

   \begin{table}[!htbp]
  \centering
  \caption{Speedup vs GeomLoss and OTT-JAX on A100. KeOps/Tensorized compared to flash(sym); JAX to flash(alt). OOM/OOT =
  out of memory/time (10 mins).}                                                                                                       
  \label{tab:speedup}                                                                                                                         
  \small                                                                                                                                      
  \setlength{\tabcolsep}{4pt}                                                                                                                 
  \begin{tabular}{|cc|ccc|ccc|}                                                                                                           
  \hline                                                                                                                                    
  & & \multicolumn{3}{c|}{Fwd} & \multicolumn{3}{c|}{Fwd+Bwd} \\                                                                           
  $n$ & $d$ & KeOps & Tensor. & Jax & KeOps & Tensor. & Jax \\                                                                                                  
  \hline                                                                                                                                    
  10k & 128 & 9.4 & 3.2 & 2.0 & 10.3 & 3.5 & 2.0 \\                                                                                             
  10k & 512 & 31.7 & 1.7 & 1.3 & 161 & 1.7 & 1.2 \\                                                                                             
  40k & 128 & 12.5 & OOM & 3.0 & 13.5 & OOM & 2.9 \\                                                                                                
  40k & 512 & OOT & OOM & 1.6 & OOT & OOM & 1.6 \\                                                                                                
  \hline                                 
  \end{tabular}                                
  
  \end{table}

\textbf{Hessian--vector products.}
All methods compute HVPs using the same matrix-free Schur-complement/CG oracle from \Cref{thm:hvp}; differences come solely from the efficiency of the underlying transport-application operators inside the CG loop. As shown in the HVP panels of \Cref{fig:synthetic}, FlashSinkhorn achieves $3$--$6\times$ speedups at $d{=}64$ and up to $25\times$ at $d{=}128$; at $n{=}m{=}50{,}000$, $d{=}64$, FlashSinkhorn completes in $4.2$s versus $14.5$s for KeOps, while OTT-JAX does not complete within the time budget, making Newton--CG practical at previously infeasible scales.


\subsection{Downstream Tasks}
   \begin{figure*}[!t]
    \centering
    \includegraphics[width=1.5\columnwidth]{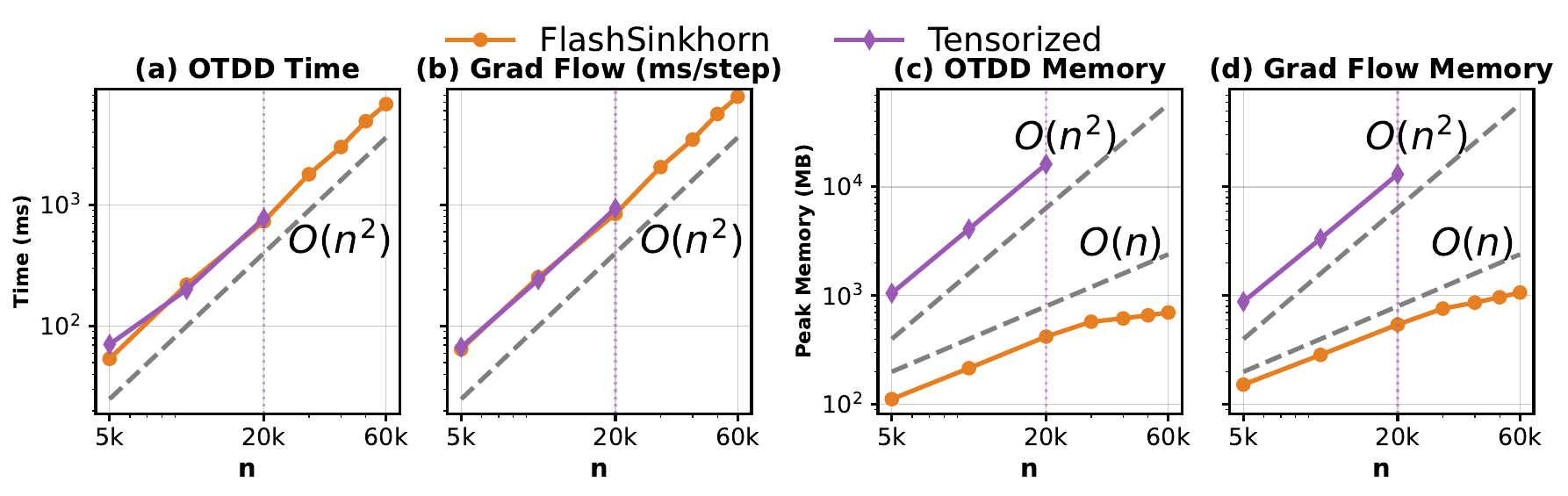}
    \caption{Scaling OTDD to large datasets (MNIST$\leftrightarrow$Fashion-MNIST, d=512).  FlashSinkhorn vs tensorized baseline for (a,b) time and (c,d) memory on OTDD         
  distance computation and gradient flow. 
  }
    \label{fig:otdd}

  \end{figure*}

    \begin{figure*}[h]
    \centering
    \includegraphics[width=2\columnwidth]{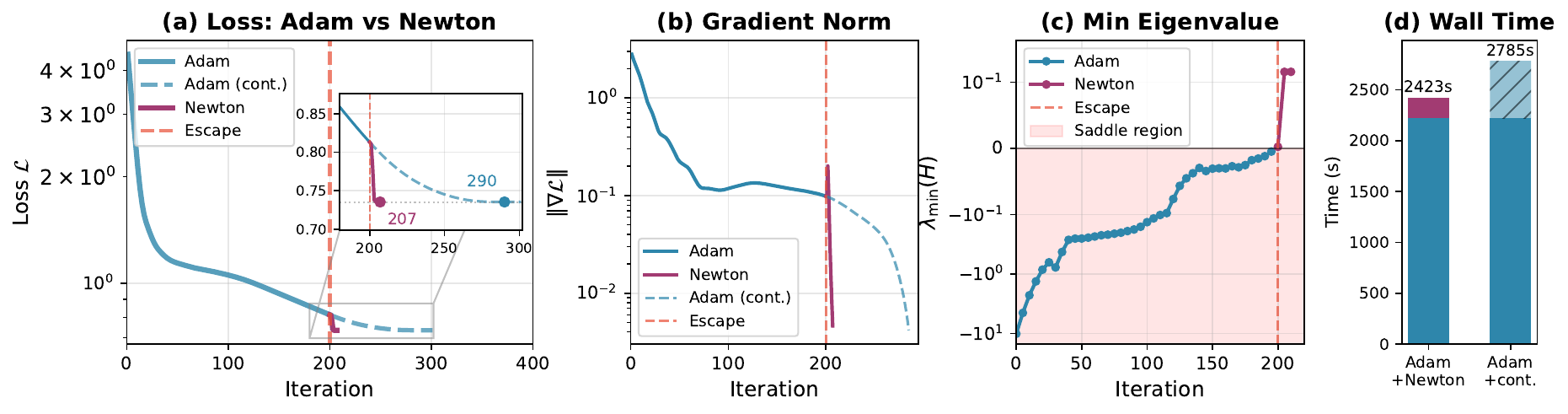}
    \caption{\textbf{Saddle escape trajectory comparison: Adam vs Newton $n=40000, \eps=0.1$} (a) Loss: Adam decay slowly in saddle region; post-escape Newton converges in 7     
  steps vs 90 for Adam continuation. (b) Gradient norm: Adam stalls near 0.1; Newton breaks through rapidly. (c) Minimum Hessian eigenvalue via Lanczos; negative$\leftarrow$positive transition at step 200 triggers Newton switch. (d) Wall time:         
  Adam+Newton (2423s) vs Adam-only (2785s), 2.8× speedup after escape.  }
    \label{fig:saddle}

  \end{figure*}

\textbf{OTDD Application.} We evaluate FlashSinkhorn on Optimal Transport Dataset Distance (OTDD) \cite{alvarez2021dataset, alvarez2020geometric}, which compares labeled datasets using a feature--label cost
\[
C(\bx_i,\by_j)
= \lambda_1\|\bx_i-\by_j\|_2^2
+ \lambda_2\,W[\ell_i,\ell_j],
\]
where $W\in\R^{V\times V}$ stores class-to-class Wasserstein distances.
OTDD uses the debiased Sinkhorn divergence \cite{feydy2019interpolating}, requiring three OT solves per evaluation.
This workload is computationally demanding at scale due to repeated OT solves including the precomputation of $W$; in particular, OTDD’s analysis notes that a fully nonparametric construction of the label–label ground distances requires solving many inner OT problems and can have worst-case $O(n^5\log n)$ complexity in the dataset size $n$ \citep{alvarez2020geometric}. 

FlashSinkhorn supports OTDD's label-augmented cost by caching the small $V\times V$ matrix $W$ and performing the lookup $W[\ell_i,\ell_j]$ on-the-fly inside the streamed kernels.
This adds a modest per-iteration overhead relative to pure Euclidean cost due to increased register pressure, but avoids dense $n\times m$ score matrix materialization.
In contrast, KeOps-style online backends express costs as coordinate-only formulas and do not directly support the discrete table-lookup term $W[\ell_i,\ell_j]$, so OTDD is typically run with tensorized backends.

We benchmark MNIST$\leftrightarrow$Fashion-MNIST \cite{deng2012mnist, xiao2017fashion} using ResNet18 embeddings ($d{=}512$) in two settings: (i) OTDD distance (forward pass), and (ii) OTDD gradient flow (forward+backward) for dataset adaptation.
As shown in \Cref{fig:otdd}, FlashSinkhorn matches tensorized runtime up to the tensorized memory limit (a,b), while its peak memory grows linearly and remains under 1\,GB at $n{=}60000$ (c,d), whereas the tensorized baseline exhibits $O(n^2)$ memory growth and OOMs beyond $n{=}20000$.
Full details are in Appendix~\ref{app:OTDD}.

\paragraph{Detect Saddle Escape in OT-Based Regression.}

 We demonstrate a practical application of our streaming HVP oracle: detecting when saddle points are escaped in OT-based 
 optimization, 
 providing clear guidance on when second-order methods are beneficial.  

  We consider shuffled linear regression given $(X, \widetilde{Y})$ where $\widetilde{Y} = \Pi^* (XW^* + E)$ for unknown permutation $\Pi^*$, where $W^*\in\mathbb{R}^{d\times d}$ is the ground-truth linear map, and $E\in\mathbb{R}^{n\times d}$ is additive noise. Given only $(X,\widetilde{Y})$, the goal is to estimate $W^*$ by minimizing an EOT objective \cite{li2025robust, xie2020hypergradient, eisenberger2022unified},
\[\mathcal{L}(W)=\mathrm{OT}_\varepsilon\left(\frac{1}{n}\sum_i\delta_{\by_i}, \frac{1}{n}\sum_j\delta_{\widetilde{\by}_j}\right),\quad \by_i=\bx_i\, W.\]

We use a standard flow cytometry dataset with $n=40000$ single cells and $d=5$ fluorescence channels (CD4, CD8, CD19, CD45, CD3), routinely used for lymphocyte immunophenotyping \cite{benson2014scalable}. We synthetically remove cell-level correspondences by applying an unknown permutation,     
  modeling scenarios where two measurement sets are available only as unordered collections. Here $W^*$ represents the calibration matrix     
  between measurement modalities.  

While the parameter Hessian $H_W \in \mathbb{R}^{25 \times 25}$ is small, computing it requires the $\cT=\nabla_Y^2\text{OT}_\eps(\frac{1}{n}\sum_i\delta_{\by_i}, \frac{1}{n}\sum_j\delta_{\widetilde{\by}_j})$. Prior methods \cite{li2025robust} materialize this Hessian, while streaming HVP computes $H_W\, \mathbf{v} = X^\top\, \cT \, (X\,  \mathbf{v} )$ in $O(nd)$ memory, enabling Lanczos eigenvalue analysis at $n=40000$ scale.

  The objective $\mathcal{L}(W)$ is non-convex with saddle points that dominate random initializations \cite{abid2018stochastic}. Newton fails in saddle regions  (minimum eigenvalue of the Hessian $\lambda_{\min}(H_W) < 0$), and trust-region methods are prohibitively expensive at scale. The challenge is knowing when to switch. Our        streaming HVP makes Lanczos cheap enough to monitor $\lambda_{\min}(H_W)$ every 5 steps throughout optimization, enabling a simple rule: full-batch adam while $\lambda_{\min} < 0.001$; Newton once $\lambda_{\min} \ge 0.001$. As shown in \Cref{fig:saddle}, once escape is detected, Newton converges rapidly. Across 72 random initializations ($n=40000$, $\varepsilon \in 0.1, 0.25, 0.5$),  
  all started in saddle regions and full-batch adam successfully escaped 71 out of 72. Post-escape, Newton converged in a median of 11 steps.
The full experiment details are in appendix \ref{app:saddle}.
 
\section{Related Work}
\label{sec:related}

\paragraph{Attention as EOT.}
\citet{litman2025scaled} derives scaled dot-product attention (SDPA) as the exact solution of a one-sided EOT problem, i.e., a degenerate transport formulation that constrains only the row marginal and admits a closed-form of softmax solution.
\citet{litman2026goat} generalize this view by replacing the implicit uniform attention prior with a learnable  prior, 
while remaining compatible with optimized SDPA kernels.
Our Sinkhorn half-steps share the algebraic pattern of a biased softmax/LSE, but the bias plays a different role: it is an iteration-dependent dual term   updated to enforce the missing 
marginals into two-sided EOT.
Accordingly, we focus on computation rather than interpretation: we target standard two-marginal EOT and show how stabilized Sinkhorn half-steps can be executed exactly via IO-aware streaming without materializing the $n\times m$ score matrix.

\paragraph{Sinkhorn-style attention and online EOT.}
Several transformer variants replace row-softmax with Sinkhorn balancing to impose global constraints on attention, e.g., doubly-stochastic attention in Sinkformers~\citep{sander2022sinkformers} and latent-permutation/sorting-based sparsification in Sparse Sinkhorn Attention~\citep{tay2020sparse}.
Recent works extend this line via OT-structured attention: ESPFormer~\citep{shahbazi2025espformer} uses expected sliced transport plans for doubly-stochastic attention, and LOTFormer~\citep{shahbazi2025lotformer} obtains doubly-stochastic linear attention via a low-rank OT factorization.
More broadly, Sinkhorn scaling is used as a differentiable relaxation of matchings, with Gumbel--Sinkhorn~\citep{mena2018learning} framing Sinkhorn as a temperature-controlled matrix analog of softmax.
Online Sinkhorn~\citep{mensch2020online} addresses a complementary regime, estimating regularized OT between continuous distributions from sample streams, using stochastic-approximation updates with fresh samples and a growing nonparametric kernel-mixture representation, and can serve as a warm-start while constructing a discrete cost matrix.
FlashSinkhorn assumes fixed discrete measures and speeds up stabilized Sinkhorn updates and transport applications by adopting FlashAttention’s IO-aware tiling and online normalization, without ever materializing the \(n\times m\) score matrix~\citep{dao2022flashattention,dao2023flashattention2}.

\section{Conclusion}
\label{sec:conclusion}
We introduced FlashSinkhorn, an IO-aware GPU implementation of stabilized Sinkhorn for EOT that avoids materializing the $n\times m$ matrices.
By rewriting each Sinkhorn half-step as a biased dot-product LSE reduction, we leverage FlashAttention-style streaming to execute the iterations exactly while substantially reducing HBM traffic. We gain up to $32\times$ forward-pass and $161\times$ end-to-end speedups over state-of-the-art online baselines.
So, large-scale EOT  becomes practical in the memory-bound regime. Extending comparable fused-kernel benefits to costs outside the dot-product class of \Cref{sec:method} (e.g., raw Euclidean $\|x-y\|_2$ and learned neural costs) is future work.

\newpage
\section*{Acknowledgements}
FY is grateful for partial support from seed funding by the Center for Emerging Artificial Intelligence Systems at the University at Albany. This research was supported in part by SUNY System Administration using the SUNY AI Platform, and in part by the AI Computing Cluster at the University at Albany.

\section*{Impact Statement}
This paper presents work whose goal is to advance the field of Machine Learning. There are many potential societal consequences of our work, none which we feel must be specifically highlighted here.


\bibliography{references}
\bibliographystyle{icml2026}



\newpage
\appendix
\onecolumn
\section{Notation}
\begin{longtable}{@{}p{0.28\linewidth}p{0.68\linewidth}@{}}
\caption{Notation used throughout the paper.}
\label{tab:notation}\\
\toprule
\textbf{Symbol} & \textbf{Meaning} \\
\midrule
\endfirsthead
\toprule
\textbf{Symbol} & \textbf{Meaning} \\
\midrule
\endhead
\midrule
\multicolumn{2}{r}{\footnotesize Continued on next page} \\
\endfoot
\bottomrule
\endlastfoot

\multicolumn{2}{@{}l@{}}{\textbf{Discrete measures and cost}}\\
$X=[\bx_1;\dots;\bx_n]\in\R^{n\times d}$ & Source points (rows), $\bx_i\in\R^d$.\\
$Y=[\by_1;\dots;\by_m]\in\R^{m\times d}$ & Target points (rows), $\by_j\in\R^d$.\\
$n,m,d$ & \#source points, \#target points, ambient dimension.\\
$\ba\in\Delta^n,\ \bb\in\Delta^m$ & Probability weights on $X$ and $Y$ (simplex).\\
$\bmu$,\quad $\bnu$ & Discrete measures induced by $(X,\ba)$ and $(Y,\bb)$.\\
$\delta_{\bx}$ & Dirac measure at $\bx$.\\
$c(\bx_i,\by_j)$ & Ground cost; in this paper $c(\bx_i,\by_j)=\|\bx_i-\by_j\|_2^2$.\\
$C\in\R^{n\times m}$ & Cost matrix, $C_{ij}=c(\bx_i,\by_j)$.\\
$\eps>0$ & Entropic regularization parameter.\\[2pt]

\multicolumn{2}{@{}l@{}}{\textbf{Entropic OT and Sinkhorn}}\\
$\OT_\eps(\bmu,\bnu)$ & EOT objective between $\bmu$ and $\bnu$.\\
$\mf\in\R^n,\ \mg\in\R^m$ & Dual potentials (log-domain).\\
$\balpha\in\R^n,\ \bbeta\in\R^m$ & Squared norms, $\alpha_i=\|\bx_i\|_2^2$, $\beta_j=\|\by_j\|_2^2$.\\
$\hat\mf:=\mf-\balpha,\ \hat\mg:=\mg-\bbeta$ & Shifted (stabilized) potentials used in kernels.\\
$\bgamma:=\eps\log\ba$,\quad $\bdelta:=\eps\log\bb$ & Precomputable log-weight biases.\\
$P(\hat\mf,\hat\mg)\in\R^{n\times m}$ & Coupling induced by potentials; $P^*$ denotes the converged/optimal coupling.\\
$\br:=P\mathbbm{1}_m,\ \bc:=P^\top\mathbbm{1}_n$ & Induced row/column marginals of $P$.\\
$T_\eps(X)=\diag(\ba)^{-1}P^*Y$ & Entropic barycentric projection.\\[2pt]

\multicolumn{2}{@{}l@{}}{\textbf{Attention-form variables}}\\
$Q:=\sqrt{2}\,X,\ K:=\sqrt{2}\,Y$ & Query/key matrices for squared-Euclidean dot-product form.\\
$S_X(\hat\mg)$ & Biased score matrix for the $\hat\mf$-update (row-wise reduction).\\
$S_Y(\hat\mf)$ & Biased score matrix for the $\hat\mg$-update (row-wise reduction).\\
$V\in\R^{m\times p}$,\ $U\in\R^{n\times p}$ & “Values” used in transport application ($PV$, $P^\top U$).\\
$\LSE_{\rm row}(S)$ & Row-wise LogSumExp: $[\LSE_{\rm row}(S)]_i=\log\sum_j e^{S_{ij}}$.\\
$\rm{Softmax}_{\rm row}(S)$ & Row-wise softmax: $[\rm{Softmax}_{\rm row}(S)]_{ij}=e^{S_{ij}}/\sum_k e^{S_{ik}}$.\\[2pt]

\multicolumn{2}{@{}l@{}}{\textbf{Streaming kernels and IO model}}\\
HBM / SRAM & Off-chip high-bandwidth memory / on-chip shared memory (per thread block).\\
$M$ & SRAM capacity (in scalars) available to a thread block.\\
$B_N,\ B_M$ & Row/column tile sizes used in streaming kernels.\\
$\mm_I,\ \bs_I$ & Running row-wise max and sumexp accumulators for row block $I$.\\
$O_I$ & Output accumulator for row block $I$ (e.g., for $PV$).\\[2pt]

\multicolumn{2}{@{}l@{}}{\textbf{Second-order operators (HVP)}}\\
$\cT$ & Data-space Hessian linear operator for EOT (used in $G=\cT A$).\\
$A\in\R^{n\times d}$ & Input perturbation for a Hessian--vector product.\\
$G=\cT A\in\R^{n\times d}$ & Hessian--vector product output.\\
$H^*\in\R^{(n+m)\times(n+m)}$ & Linear system matrix arising in implicit differentiation; $H^{*\dagger}$ its pseudoinverse.\\
$\cR$ & Linear map defining the right-hand side $\cR A$ in the implicit solve.\\
$S$ & Schur complement system solved by CG in the HVP oracle.\\
$K_{CG}$ & Number of CG iterations used for the Schur solve.\\
$\tau,\ \eta$ & Schur damping ($S_\tau=S+\tau I$) and CG relative-residual tolerance.\\[2pt]

\multicolumn{2}{@{}l@{}}{\textbf{OTDD (label-augmented cost)}}\\
$\ell_i,\ell_j\in\{1,\dots,V\}$ & Class labels for samples.\\
$W\in\R^{V\times V}$ & Class-to-class Wasserstein distances used in OTDD.\\
$\lambda_1,\lambda_2$ & Feature/label cost weights in OTDD.\\

\end{longtable}

\section{Derivation of Sinkhorn algorithm}\label{appendix:eot_sinkhorn}
Recall from \eqref{eq:eot} that the cost function for the entropic Optimal Transport (EOT) is given by
\begin{equation*}
\OT_\eps(\bmu, \bnu)
= \min_{P \in \Pi(\ba, \bb)} \;\langle C, P \rangle+\eps \mathrm{KL}\left(P\| \ba \otimes \bb\right)
\end{equation*}
where 
$\Pi(\ba, \bb)=\{P\in \R_{\ge 0}^{n\times m}: P\mathbbm{1}_m=\ba, P^\top\mathbbm{1}_n=\bb\}$ ,
$C\in\R^{n\times m}$ is the cost matrix with entries $c_{ij}=\|\bx_i-\by_j\|^2_2$, and
$\mathrm{KL}(P\| \ba \otimes \bb) = \sum_{ij}\left(P_{ij}\log\frac{P_{ij}}{a_i b_j}-P_{ij}+a_ib_j\right)$ is the Kullback--Leibler (KL) divergence (also called relative entropy, in the generalized/Bregman form so that it remains well-defined for non-probability $P$), and $\eps$ is the entropy regularization strength.

We form the Lagrangian of the EOT problem (assuming $a_i, b_j > 0$),
\begin{equation*}
\begin{aligned}
     \mathcal{L}( {\mf}, {\mg}, P)=& \sum_{ij}\Big[c_{ij}P_{ij}
    + \eps P_{ij}\log \frac{P_{ij}}{a_i b_j}-\eps P_{ij}\Big] \\
    & - \sum_i f_i\Big(\sum_j P_{ij}-a_i\Big) - \sum_j g_j\Big(\sum_i P_{ij}-b_j\Big) +\eps .
\end{aligned}
\end{equation*}
where $\mf$ and $\mg$ are the Lagrange multipliers, or the dual potentials. The pair $(\mf,\mg)$ is determined up to a gauge shift $(\mf+c, \mg-c)$ for any constant $c$.

\noindent
The optimal $P^*$ is determined by the first-order optimality condition $\frac{\partial \mathcal{L}}{\partial P}|_{P^*}=0$, and can be expressed as a function of the dual potentials $f$ and $g$, 
\begin{equation}\label{optimal_coupling}    
P^*_{ij}:=  a_i b_j\exp \left(\frac{f^*_i+g^*_j-c_{ij}}{\epsilon}\right).
\end{equation}
Recall the marginal constraints of EOT
\begin{equation}\label{eq:marginal_eot}
\begin{aligned}
&\sum_j P_{ij}  =a_i,\quad
\sum_i P_{ij} = b_j,
\end{aligned}
\end{equation}
and plugging the optimal $P^*_{ij}$ \eqref{optimal_coupling}, we get
\[
\begin{aligned}
\sum_j a_i b_j \exp \left(\frac{f^*_i+g^*_j-c_{ij}}{\eps}\right) = a_i, \qquad \sum_i  a_i b_j \exp \left(\frac{f^*_i+g^*_j-c_{ij}}{\eps}\right) = b_j.
\end{aligned}
\]
If we solve the above marginal constraints for $\mf^*$ and $\mg^*$ by alternatively scaling $\mf^{(k)}$ and $\mg^{(k)}$ in the $\log$-domain, then we obtain the Sinkhorn algorithm with the LSE function.  

Note that the marginal constraints are equivalent to 
\[
\begin{aligned}
&\exp\left(\frac{f_i^*}{\eps}\right) \sum_j \exp\left(\log(b_j)+\frac{g_j^*-c_{ij}}{\eps}\right)=1,\\
&\exp\left(\frac{g_j^*}{\eps}\right) \sum_i \exp\left(\log(a_i)+\frac{f_i^*-c_{ij}}{\eps}\right)=1,
\end{aligned}
\]
so we take $\log$ on both sides and get
\begin{equation}\label{eot_marg_log}
\begin{aligned}
&\frac{f_i^*}{\eps} +\mathrm{LSE}_{j}\left(\log(b_j)+\frac{g_j^*-c_{ij}}{\eps}\right)=0,\\
&\frac{g_j^*}{\eps} +\mathrm{LSE}_{i}\left(\log(a_i)+\frac{f_i^*-c_{ij}}{\eps}\right)=0.
\end{aligned}
\end{equation}
If we apply iterative schemes to solve $\mf^*$ and $\mg^*$, at the $k$-th iteration, 
\begin{itemize}
\item if we use the symmetric Sinkhorn with a half step Jacobi type update, which is adopted by the geomloss \cite{feydy2020geometric}, then the iteration reads
\[
\begin{aligned}
&f_i^{\rm{new}}= \frac{f_i^{(k)}}{2}-\frac{\eps}{2} \mathrm{LSE}_{j}\left(\log(b_j)+\frac{g_j^{\rm{old}}-c_{ij}}{\eps}\right),\\
&g_j^{\rm{new}} = \frac{g_j^{(k)}}{2}-\frac{\eps}{2} \mathrm{LSE}_{i}\left(\log(a_i)+\frac{f_i^{\rm{old}}-c_{ij}}{\eps}\right).
\end{aligned}
\]
\item if we use the Gauss--Seidel type iteration which is adopted by the OTT \cite{cuturi2022optimal}, then the update reads
\[
\begin{aligned}
&f_i^{\rm{new}}= -\eps \mathrm{LSE}_{j}\left(\log(b_j)+\frac{g_j^{\rm{old}}-c_{ij}}{\eps}\right),\\
&g_j^{\rm{new}} =-\eps \mathrm{LSE}_{i}\left(\log(a_i)+\frac{f_i^{\rm{new}}-c_{ij}}{\eps}\right).
\end{aligned}
\]
\end{itemize}
Note that Jacobi-type updates allow fully parallel updates, whereas Gauss–Seidel type updates require sequential updates. For small datasets (i.e., when $n, m=O(10^3)$ or smaller), Jacobi-type iterations are typically more efficient; in contrast, for large-scale problems, Gauss–Seidel-type iterations tend to be faster.

\section{Differentiation of EOT with respect to data}\label{append:derivatives}
In this section, we derive the first- and second-order derivatives of EOT with respect to the source data $X$. The expressions of derivatives with respect to the target data $Y$ will be similar. These closed-form expressions enable optimization without implicit differentiation, substantially improving memory efficiency, computational speed, and numerical robustness.

Assume the squared Euclidean cost \(c_{ij}=\|\bx_i-\by_j\|_2^2\), and let
\(P^*\in\R^{n\times m}\) be the optimal transport matrix with marginals
\(\ba\in\Delta^n,\bb\in\Delta^m\), and optimal dual potentials \(\mf^*\in\R^n,\mg^*\in\R^m\).


\noindent
\paragraph{The first-order derivative.}
Suppose the optimal $({\mf}^*, {\mg}^*)$ is a single element of the dual problem $\mathcal{L}(\mf, \mg, P)$, then by simply applying the Danskin's theorem \cite{bertsekas1997nonlinear,cuturi2014fast} to the dual formulation, we obtain the first-order derivatives with respect to the source dataset $\mathbf{X}$.
\begin{align}\label{derivative_eot}
&\frac{d\OT_\eps (\bmu, \bnu )}{d\bx_k}    = \frac{\partial \mathcal{L}(\mf, \mg, P)}{\partial \bx_k}=\sum_{j=1}^m P_{kj}^*\frac{\partial c_{kj}}{\partial \bx_k} = 2 \sum_{j=1}^m P_{kj}^* \big(\bx_k-\by_j\big)= 2a_k\Big(\bx_k-T_\eps(\bx_k)\Big),
\end{align}
where $T_\eps$ is the barycentric projection defined as $T_\eps(\bx_i)=\frac{1}{a_i}\sum_{j=1}^m P_{ij}^*\by_j$.

\noindent
\paragraph{The second-order derivative.} This section derives the Hessian of the entropic optimal transport (EOT) objective
\(\OT_\eps(\bmu,\bnu)\) with respect to the source support \(X\) in \cref{eq:hessian_concise}.

Define the sensitivity  matrix $H^*$, 
\[
H^* \;:=\;
\begin{pmatrix}
\diag(\ba) & P^*\\
(P^*)^\top & \diag(\bb)
\end{pmatrix}\in\R^{(n+m)\times(n+m)}.
\]

For each \(k\in[n]\), define \(B_k\in\R^{d\times m}\) by
\[
(B_k)_{tj} \;:=\; 2(x_{k,t}-y_{j,t})\,P^*_{kj},
\qquad t\in[d],\ j\in[m].
\]

\begin{lemma}\label{lma:eot_hess}
For each \(k\in[n]\), the Jacobians of the optimal potentials with respect to \(\bx_k\),
\[
\frac{\partial \mf^*}{\partial \bx_k}\in\R^{n\times d},
\qquad
\frac{\partial \mg^*}{\partial \bx_k}\in\R^{m\times d},
\qquad
\Big(\frac{\partial \mf^*}{\partial \bx_k}\Big)_{it}:=\frac{\partial f_i^*}{\partial x_{k,t}},
\ \Big(\frac{\partial \mg^*}{\partial \bx_k}\Big)_{jt}:=\frac{\partial g_j^*}{\partial x_{k,t}},
\]
solve the linear system
\begin{equation}\label{eq:linear_sys_eot}
H^*
\begin{pmatrix}
\dfrac{\partial \mf^*}{\partial \bx_k}\\[1mm]
\dfrac{\partial \mg^*}{\partial \bx_k}
\end{pmatrix}
\;=\;
\begin{pmatrix}
\mathbf{e}_k\,(B_k\mathbbm{1}_m)^\top\\[1mm]
B_k^\top
\end{pmatrix},
\end{equation}
where \(\mathbf{e}_k\in\R^n\) is the \(k\)-th standard basis vector.
\end{lemma}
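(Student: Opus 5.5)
We will prove the lemma by implicitly differentiating the optimality (marginal) conditions that characterize the optimal potentials. By \eqref{optimal_coupling} and \eqref{eq:marginal_eot}, the pair $(\mf^*,\mg^*)$ satisfies, as functions of $X$,
\[
\Phi_i(\mf,\mg,X):=\sum_j a_i b_j \exp\!\Big(\tfrac{f_i+g_j-c_{ij}}{\eps}\Big)-a_i=0,\qquad
\Psi_j(\mf,\mg,X):=\sum_i a_i b_j \exp\!\Big(\tfrac{f_i+g_j-c_{ij}}{\eps}\Big)-b_j=0 .
\]
We will differentiate both identities with respect to $x_{k,t}$ along the solution branch. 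We assume, as the paper implicitly does, that $(\mf^*,\mg^*)$ depends differentiably on $X$. One can justify this by fixing the gauge, since $H^*$ has the one-dimensional kernel spanned by $(\mathbbm{1}_n,-\mathbbm{1}_m)$ and is positive definite on its complement. Alternatively, we simply read the lemma as saying that any differentiable selection satisfies the system.

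\textbf{Step 1: derivative of the coupling.} By the chain rule,
\[
\frac{\partial P^*_{ij}}{\partial x_{k,t}}=\frac{P^*_{ij}}{\eps}\Big(\frac{\partial f_i^*}{\partial x_{k,t}}+\frac{\partial g_j^*}{\partial x_{k,t}}-\frac{\partial c_{ij}}{\partial x_{k,t}}\Big).
\]
For the squared Euclidean cost, $\partial c_{ij}/\partial x_{k,t}=2\delta_{ik}(x_{k,t}-y_{j,t})$. Hence $P^*_{ij}\,\partial c_{ij}/\partial x_{k,t}=\delta_{ik}(B_k)_{tj}$.

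\textbf{Step 2: differentiate the row marginals.} Summing Step 1 over $j$ and using $\sum_j P^*_{ij}=a_i$ gives
\[
0=a_i\,\partial f_i^*+\sum_j P^*_{ij}\,\partial g_j^*-\delta_{ik}\sum_j (B_k)_{tj}.
\]
Stacked over $i$ and $t$, this is the first block row: $\diag(\ba)\,\partial\mf^*/\partial\bx_k+P^*\,\partial\mg^*/\partial\bx_k=\mathbf e_k(B_k\mathbbm{1}_m)^\top$. The overall factor $1/\eps$ cancels because the right-hand side of the constraint is constant.

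\textbf{Step 3: differentiate the column marginals.} Summing Step 1 over $i$ and using $\sum_iP^*_{ij}=b_j$ gives
\[
\sum_i P^*_{ij}\,\partial f_i^*+b_j\,\partial g_j^*=(B_k)_{tj}.
\]
In matrix form this is $(P^*)^\top\partial\mf^*/\partial\bx_k+\diag(\bb)\,\partial\mg^*/\partial\bx_k=B_k^\top$. Combining the two block rows yields exactly \eqref{eq:linear_sys_eot}.

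The only delicate point is the index bookkeeping: the derivative of $c_{ij}$ is nonzero only in row $i=k$. This produces the $\mathbf e_k$ factor in the top block and the full matrix $B_k^\top$ in the bottom block, and the Jacobian shapes ($n\times d$ and $m\times d$) must match the right-hand sides. The justification of differentiability modulo the gauge kernel is the main obstacle. If needed, we will handle it with the implicit function theorem on the quotient space, using that $H^*$ is positive semidefinite with kernel exactly $\mathrm{span}(\mathbbm{1}_n,-\mathbbm{1}_m)$ whenever $P^*>0$.
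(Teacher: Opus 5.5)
Your proposal is correct and follows the paper's proof essentially step for step: differentiate the row and column marginal constraints in $x_{k,t}$, use the log-domain form of $P^*$ together with $\partial c_{ij}/\partial x_{k,t}=2\delta_{ik}(x_{k,t}-y_{j,t})$, simplify with $\sum_j P^*_{ij}=a_i$ and $\sum_i P^*_{ij}=b_j$, and stack over $t$. Your remark on differentiability modulo the gauge kernel $\mathrm{span}(\mathbbm{1}_n,-\mathbbm{1}_m)$ covers a point the paper leaves implicit. If you carry out the implicit-function argument, quotienting the domain is not enough: the constraint map $(\Phi,\Psi)$ always lands in the hyperplane $\{(\phi,\psi):\sum_i\phi_i=\sum_j\psi_j\}$ (because $\sum_i a_i=\sum_j b_j$), so you must restrict the codomain to it as well, and $H^*$ maps the complement of its kernel bijectively onto exactly that hyperplane.
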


\begin{proof}
Differentiate the marginal constraints \(\sum_{j=1}^m P^*_{ij}=a_i\) and
\(\sum_{i=1}^n P^*_{ij}=b_j\) with respect to \(x_{k,t}\). Since \(\ba,\bb\) are fixed,
\[
\sum_{j=1}^m \frac{\partial P^*_{ij}}{\partial x_{k,t}}=0,
\qquad
\sum_{i=1}^n \frac{\partial P^*_{ij}}{\partial x_{k,t}}=0.
\]
Using the optimality form (log-domain) of \(P^*\) (cf. \eqref{optimal_coupling}),
\[
\frac{\partial P^*_{ij}}{\partial x_{k,t}}
=\frac{P^*_{ij}}{\eps}\Big(
\frac{\partial f_i^*}{\partial x_{k,t}}
+\frac{\partial g_j^*}{\partial x_{k,t}}
-\frac{\partial c_{ij}}{\partial x_{k,t}}
\Big),
\qquad
\frac{\partial c_{ij}}{\partial x_{k,t}}=2(x_{i,t}-y_{j,t})\,\delta_{ik}.
\]
Plugging in and using \(\sum_j P^*_{ij}=a_i\), \(\sum_i P^*_{ij}=b_j\) yields
\[
a_i\frac{\partial f_i^*}{\partial x_{k,t}}+\sum_{j=1}^m P^*_{ij}\frac{\partial g_j^*}{\partial x_{k,t}}
=\delta_{ik}\sum_{j=1}^m 2(x_{k,t}-y_{j,t})P^*_{kj},
\qquad i\in[n],
\]
\[
\sum_{i=1}^n P^*_{ij}\frac{\partial f_i^*}{\partial x_{k,t}}+b_j\frac{\partial g_j^*}{\partial x_{k,t}}
=2(x_{k,t}-y_{j,t})P^*_{kj},
\qquad j\in[m].
\]
Stacking these identities over \(t=1,\dots,d\) gives \eqref{eq:linear_sys_eot}.
\end{proof}

\noindent
Define \(\cB\in\R^{n\times d\times m}\) by \(\cB_{ktj}:=2(x_{k,t}-y_{j,t})P^*_{kj}\), and
define \(\cR\in\R^{(n+m)\times n\times d}\) slice-wise (for each \(t\in[d]\)) as
\[
\cR_{:,:,t}
:=
\begin{pmatrix}
\diag\!\big(\cB_{:,t,:}\mathbbm{1}_m\big)\\[1mm]
(\cB_{:,t,:})^\top
\end{pmatrix}.
\]
Note that \(\cR_{:,k,:}\in\R^{(n+m)\times d}\) coincides with the right-hand side of
\eqref{eq:linear_sys_eot}.

\begin{theorem}\label{thm_Hess_eot}
Let \(\cT=\nabla^2_{X}\OT_\eps(\bmu,\bnu)\in\R^{n\times d\times n\times d}\).
Its entries decompose as
\begin{equation}\label{eq:hessian_concise-app}
\cT_{ktsl}
=\frac{1}{\eps}\sum_{i,j=1}^{n+m}\cR_{ikt}\,H^{*\dagger}_{ij}\,\cR_{jsl}
\;+\;\cE_{ktsl},
\end{equation}
where \(H^{*\dagger}\) is the Moore--Penrose pseudoinverse of \(H^*\), and
\(\cE\) is block-diagonal across source points:
\[
\cE_{k,:,s,:}=0\quad(k\neq s),
\qquad
\cE_{k,:,k,:}
=
2a_k I_d
-\frac{4}{\eps}\sum_{j=1}^m P^*_{kj}\,(\bx_k-\by_j)^\top(\bx_k-\by_j).
\]
\end{theorem}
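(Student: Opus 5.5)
The plan is to differentiate the first-order formula \eqref{derivative_eot} a second time and to handle the implicit dependence of $P^*$ on $X$ through Lemma~\ref{lma:eot_hess}. Write the gradient as
\[
\partial_{x_{s,l}}\OT_\eps=\sum_{j} P^*_{sj}\,2(x_{s,l}-y_{j,l}).
\]
Differentiating with respect to $x_{k,t}$ and using the product rule gives two contributions:
\[
\cT_{ktsl}=\sum_j \frac{\partial P^*_{sj}}{\partial x_{k,t}}\,2(x_{s,l}-y_{j,l})\;+\;\delta_{ks}\delta_{tl}\,2\sum_j P^*_{kj}.
\]
Since $\sum_j P^*_{kj}=a_k$, the second term is exactly $2a_k I_d$ on the diagonal block.

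For the first term I will use the log-domain form \eqref{optimal_coupling}:
\[
\frac{\partial P^*_{sj}}{\partial x_{k,t}}=\frac{P^*_{sj}}{\eps}\Big(\partial_{kt}f^*_s+\partial_{kt}g^*_j-2\delta_{sk}(x_{k,t}-y_{j,t})\Big).
\]
Substituting this splits the first term into two pieces.

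\emph{The explicit piece.} The last summand contributes
\[
-\frac{\delta_{sk}}{\eps}\sum_j P^*_{kj}\,4(x_{k,t}-y_{j,t})(x_{k,l}-y_{j,l}),
\]
which is exactly the outer-product part of $\cE$.

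\emph{The implicit piece.} The potential-derivative summands give
\[
\frac1\eps\Big[\partial_{kt}f^*_s\sum_j \cB_{slj}+\sum_j \cB_{slj}\,\partial_{kt}g^*_j\Big].
\]
By the definition of $\cR$, this equals
\[
\frac1\eps\sum_{i}\cR_{i s l}\,\big(\partial_{kt}(\mf^*,\mg^*)\big)_i,
\]
with $i$ ranging over the $n+m$ stacked coordinates. The first block of $\cR_{:,s,l}$ is $\mathbf e_s\sum_j\cB_{slj}$, which selects $\partial f^*_s$, and the second block is $\cB_{sl:}$, which pairs with $\partial\mg^*$.

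The remaining step, and the main obstacle, is to replace the Jacobian vector by $H^{*\dagger}\cR_{:,k,t}$ using \eqref{eq:linear_sys_eot}. The difficulty is that $H^*$ is singular: it has kernel spanned by $(\mathbbm 1_n,-\mathbbm 1_m)$, reflecting the gauge freedom $(\mf+c,\mg-c)$. The Jacobian is therefore determined only up to adding $\lambda_{kt}(\mathbbm 1_n,-\mathbbm 1_m)$. I would handle this in two checks.

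\textbf{Consistency.} The right-hand side $\cR_{:,k,t}$ is orthogonal to the kernel, because
\[
\sum_j\cB_{ktj}-\sum_j\cB_{ktj}=0.
\]
So the system is solvable, and $H^{*\dagger}\cR_{:,k,t}$ is one valid solution.

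\textbf{Gauge invariance.} Any kernel component is annihilated when contracted with $\cR_{:,s,l}$, since $\cR_{:,s,l}$ is also orthogonal to $(\mathbbm 1_n,-\mathbbm 1_m)$. Hence the Hessian does not depend on which solution is chosen, and we may take the pseudoinverse one.

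I will also argue that $\ker H^*$ is exactly one-dimensional. $H^*$ is PSD, since its quadratic form is $\sum_{ij}P^*_{ij}(u_i+v_j)^2$, and $P^*>0$ entrywise forces $u_i+v_j=0$ for all $i,j$. This ensures no other kernel directions arise. Combining the three contributions yields \eqref{eq:hessian_concise-app}. By symmetry of $H^{*\dagger}$, the formula is also symmetric in $(k,t)\leftrightarrow(s,l)$.
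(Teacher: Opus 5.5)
Your proposal is correct and follows essentially the same route as the paper: differentiate the Danskin gradient once more, use the log-domain form of $P^*$ to split the result into the explicit $\cE$ part and a potential-Jacobian part, and identify the latter with the $\cR^\top H^{*\dagger}\cR$ contraction via Lemma~\ref{lma:eot_hess}. The one place you go beyond the paper is the justification for replacing the Jacobian by $H^{*\dagger}\cR_{:,k,t}$, which the paper simply asserts; you check that the right-hand side is consistent, that the kernel is exactly $(\mathbbm{1}_n,-\mathbbm{1}_m)$, and that this kernel direction is annihilated by $\cR_{:,s,l}$, so your version is the more careful one.
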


\begin{proof}
For the squared Euclidean cost,
\[
\frac{\partial \OT_\eps(\bmu,\bnu)}{\partial \bx_k}
=2\sum_{j=1}^m P^*_{kj}(\bx_k-\by_j)\in\R^{1\times d}.
\]
Fix \(s\in[n]\). Differentiating entry-wise gives, for \(t,l\in[d]\),
\[
\frac{\partial^2 \OT_\eps(\bmu,\bnu)}{\partial x_{s,l}\,\partial x_{k,t}}
=
2\sum_{j=1}^m (x_{k,t}-y_{j,t})\frac{\partial P^*_{kj}}{\partial x_{s,l}}
\;+\;2\delta_{ks}\sum_{j=1}^m P^*_{kj}\,\delta_{tl}.
\]
The second term equals \(2a_k\delta_{ks}\delta_{tl}\).
Using
\[
\frac{\partial P^*_{kj}}{\partial x_{s,l}}
=\frac{P^*_{kj}}{\eps}\Big(
\frac{\partial f_k^*}{\partial x_{s,l}}
+\frac{\partial g_j^*}{\partial x_{s,l}}
-\frac{\partial c_{kj}}{\partial x_{s,l}}
\Big),
\qquad
\frac{\partial c_{kj}}{\partial x_{s,l}}=2(x_{k,l}-y_{j,l})\delta_{ks},
\]
yields
\[
\frac{\partial^2 \OT_\eps}{\partial x_{s,l}\,\partial x_{k,t}}
=
\frac{1}{\eps}\sum_{j=1}^m \cB_{ktj}\Big(
\frac{\partial f_k^*}{\partial x_{s,l}}+\frac{\partial g_j^*}{\partial x_{s,l}}
\Big)
\;+\;2a_k\delta_{ks}\delta_{tl}
\;-\;\frac{4}{\eps}\delta_{ks}\sum_{j=1}^m P^*_{kj}(x_{k,t}-y_{j,t})(x_{k,l}-y_{j,l}).
\]
By Lemma~\ref{lma:eot_hess}, \(\begin{pmatrix}\partial \mf^*/\partial \bx_s\\ \partial \mg^*/\partial \bx_s\end{pmatrix}
=H^{*\dagger}\cR_{:,s,:}\). Therefore,
\[
\frac{1}{\eps}\sum_{j=1}^m \cB_{ktj}\Big(
\frac{\partial f_k^*}{\partial x_{s,l}}+\frac{\partial g_j^*}{\partial x_{s,l}}
\Big)
=\frac{1}{\eps}\sum_{i,j=1}^{n+m}\cR_{ikt}\,H^{*\dagger}_{ij}\,\cR_{jsl},
\]
and the remaining \(\delta_{ks}\)-terms form \(\cE_{ktsl}\). This proves \cref{eq:hessian_concise}.
\end{proof}

\begin{remark}\label{rmk:condNumber_H}
For strictly positive couplings (as produced by entropy regularization), \(H^*\) is singular with a simple
zero eigenvalue; the pseudoinverse is therefore required.
Moreover, the smallest positive eigenvalue of \(H^*\) can be very small in low-\(\eps\) regimes \cite{li2025robust}, making Hessian-related
computations numerically ill-conditioned. In practice, truncated SVD, Tikhonov regularization, or preconditioning may be necessary to ensure numerical stability \cite{cuturi2022optimal,kemertastruncated}. Although these techniques yield approximations of the Hessian–vector product (HVP), they can still provide accurate estimates of the Hessian spectrum with respect to the dataset or feature variables, depending on the chosen stabilization strategy.
\end{remark}

\section{Proof in Section~\ref{subsec:sinkhorn}}
\subsection{Proof of Proposition \ref{prop:sinkhorn-attn-lse}}
\label{prop:sinkhorn-attn-lse-proof}
\begin{proof}
Throughout we use the identity for squared Euclidean cost
\[
C_{ij}=\|\bx_i-\by_j\|_2^2=\|\bx_i\|_2^2+\|\by_j\|_2^2-2\langle \bx_i,\by_j\rangle
=\alpha_i+\beta_j-(QK^\top)_{ij},
\]
since $QK^\top=(\sqrt{2}X)(\sqrt{2}Y)^\top=2XY^\top$ and $(XY^\top)_{ij}=\bx_i\by_j^\top$.
We also recall that for any constant $c$, $\LSE_j(z_j+c)=\LSE_j(z_j)+c$.

Define $\bdelta=\eps\log\bb$ and $\bgamma=\eps\log\ba$, so that $\log b_j=\delta_j/\eps$ and
$\log a_i=\gamma_i/\eps$. Define shifted potentials $\hat f_i=f_i-\alpha_i$ and $\hat g_j=g_j-\beta_j$.

Starting from the stabilized update \eqref{eq:sinkhorn_f},
\begin{align*}
f_i
&= -\eps \,\LSE_j\!\left[\frac{g_j-C_{ij}}{\eps}+\log b_j\right]\\
&= -\eps \,\LSE_j\!\left[\frac{g_j-(\alpha_i+\beta_j-(QK^\top)_{ij})}{\eps}+\frac{\delta_j}{\eps}\right]\\
&= -\eps \,\LSE_j\!\left[\frac{(QK^\top)_{ij}+(g_j-\beta_j)+\delta_j-\alpha_i}{\eps}\right]\\
&= -\eps \left(-\frac{\alpha_i}{\eps}+\LSE_j\!\left[\frac{(QK^\top)_{ij}+\hat g_j+\delta_j}{\eps}\right]\right)\\
&= \alpha_i-\eps\,\LSE_j\!\left[\frac{(QK^\top)_{ij}+\hat g_j+\delta_j}{\eps}\right].
\end{align*}
Subtracting $\alpha_i$ from both sides yields
\[
\hat f_i
= -\eps\,\LSE_j\!\left[\frac{(QK^\top)_{ij}+\hat g_j+\delta_j}{\eps}\right].
\]
In matrix form, the quantity inside $\LSE_j$ is exactly the $i$th row of
\[
S_X(\hat\mg)=\frac{QK^\top+\mathbbm{1}_n(\hat\mg+\bdelta)}{\eps},
\]
hence $\hat\mf=-\eps\,\LSE_{\rm row}(S_X(\hat\mg))$, which is \eqref{eq:hatf-lse}.

For $\hat\mg$ update, $\hat\mg=-\eps\,\LSE_{\rm row}(S_Y(\hat\mf))$ can be proved similarly. This proves the claimed equivalence. 
\end{proof}

\subsection{Proof of Theorem~\ref{thm:io-flashsinkhorn}}
\label{thm:io-flashsinkhorn-proof}
\begin{proof}
   
Fix one row block $I$ of size $B_N$. Algorithm~\ref{alg:flash_sinkhorn_hatf} loads $Q_I\in\R^{B_N\times d}$
once from HBM and keeps it in SRAM. It maintains running vectors $m_I,s_I\in\R^{B_N}$ on-chip, and then
scans over all column blocks $J$.
Across the full scan over $J$, every element of $K\in\R^{m\times d}$ is loaded exactly once (just partitioned
into tiles) and contributes $\Theta(md)$ HBM reads, and every element of the bias $u\in\R^m$ is loaded once and
contributes $\Theta(m)$. No logits tile is written to HBM; only the final $\hat\mf_I$ is written back,
which results in a total cost $\Theta(B_N)$.

Recall $\lceil\cdot\rceil$ ceiling and $\lfloor \cdot \rfloor$ floor functions, and for a single row block,
\[
\text{HBM}(I)=\Theta(B_N d + md + m + B_N).
\]
There are $T_N=\lceil n/B_N\rceil$ row blocks, so total HBM accesses are
\[
\Theta\!\Big(T_N(B_N d + md + m + B_N)\Big)
=\Theta\!\Big(nd + \frac{n}{B_N}(md+m) + n\Big).
\]
From the SRAM fit condition
\[
B_M d + B_N d + (B_M + 2B_N)\ \lesssim\ M,
\]
we obtain
\[
(d+1)B_M + (d+2)B_N \ \lesssim\ M. \tag{$\star$}
\]
Hence $(d+2)B_N \lesssim M$ and therefore $B_N = O(M/d)$.

Conversely, to maximize $B_N$ under $(\star)$ we minimize $B_M$.
Taking $B_M:=1$ and
\[
B_N := \left\lfloor \frac{M-(d+1)}{d+2}\right\rfloor
\]
(which is feasible whenever the tiling constraint is satisfiable) yields
$B_N=\Omega(M/d)$.
Therefore the largest feasible row tile size satisfies $B_N=\Theta(M/d)$.

In the regime $d\le M\le \min\{n,m\}d$ this yields
\[
\Theta\!\Big(nd + \frac{n}{M/d}\,md\Big)=\Theta\!\Big(nd+\frac{nmd^2}{M}\Big),
\]
with the $\frac{n}{B_N}m$ term absorbed since $d\ge 1$.
If $M\ge \min\{n,m\}d$, we can fit the smaller of $Q$ or $K$ entirely in SRAM and stream the other once,
so the IO collapses to $\Theta(nd+md)$. \qedhere
\end{proof}
In addition, we can show there is no uniform asymptotic improvement over all $M$. This follows by the same endpoint argument as FlashAttention Proposition~3: when
$M=\Theta(\min\{n,m\}d)$, we have $\frac{nmd^2}{M}=\Theta(\max\{n,m\}d)=\Theta(nd+md)$.
But any exact algorithm must at least read the inputs once, which already costs $\Omega(nd+md)$ HBM accesses.
Therefore no algorithm can achieve $o(nmd^2/M)$ HBM accesses for all $M$ in
$[d,\min\{n,m\}d]$.

\subsection{Proof of Online LSE Correctness}
\label{app:online-lse}
\begin{proof}
We prove correctness for a single row; the vectorized (row-wise) statement used in
Algorithm~\ref{alg:flash_sinkhorn_hatf} follows by applying the same argument independently to each row.

Fix a row and let $\{\bx_j\}_{j=1}^m$ denote its full logits, i.e.\ $bx_j = S_{ij}$ for that row.
Partition the column indices into $K$ blocks $J_1,\dots,J_K$ (the tiles streamed by the algorithm).
For each block $k$, define the block maximum
\[
\tilde \mm^{(k)} := \max_{j\in J_k} \bx_j.
\]
The online LSE maintains a running maximum $\mm^{(k)}$ and a running normalized sum $\bs^{(k)}$ via
\begin{align*}
\mm^{(k)} &:= \max\!\big(\mm^{(k-1)}, \tilde \mm^{(k)}\big),\\
\bs^{(k)} &:= e^{\,\mm^{(k-1)}-\mm^{(k)}}\, \bs^{(k-1)} \;+\; \sum_{j\in J_k} e^{\,\bx_j-\mm^{(k)}},
\end{align*}
initialized with $\mm^{(0)}=-\infty$ and $\bs^{(0)}=0$.
We claim the following invariant holds for every $k\in\{1,\dots,K\}$:
\begin{equation}\label{eq:online-lse-invariant}
\bs^{(k)} \;=\; \sum_{j\in J_1\cup\cdots\cup J_k} \exp\!\big(\bx_j - \mm^{(k)}\big).
\end{equation}

In the base case $k=1$,
since $\mm^{(1)}=\tilde \mm^{(1)}=\max_{j\in J_1} \bx_j$ and $\bs^{(0)}=0$, the update gives
\(
\bs^{(1)}=\sum_{j\in J_1} e^{\bx_j-\mm^{(1)}},
\)
which is exactly \eqref{eq:online-lse-invariant} for $k=1$.

Assume \eqref{eq:online-lse-invariant} holds for $k-1$. Then
\begin{align*}
\bs^{(k)}
&= e^{\mm^{(k-1)}-\mm^{(k)}} \bs^{(k-1)} + \sum_{j\in J_k} e^{\bx_j-\mm^{(k)}}\\
&= e^{\mm^{(k-1)}-\mm^{(k)}} \sum_{j\in J_1\cup\cdots\cup J_{k-1}} e^{\bx_j-\mm^{(k-1)}} + \sum_{j\in J_k} e^{\bx_j-\mm^{(k)}}\\
&= \sum_{j\in J_1\cup\cdots\cup J_{k-1}} e^{\bx_j-\mm^{(k)}} + \sum_{j\in J_k} e^{\bx_j-\mm^{(k)}}= \sum_{j\in J_1\cup\cdots\cup J_k} e^{\bx_j-\mm^{(k)}},
\end{align*}
proving the invariant for $k$.

At the end of the stream ($k=K$), we have
\[
\mm^{(K)} = \max_{1\le j\le m} \bx_j,
\qquad
\bs^{(K)} = \sum_{j=1}^m e^{\bx_j-\mm^{(K)}}.
\]
Therefore,
\[
\mm^{(K)} + \log \bs^{(K)}
= \mm^{(K)} + \log\!\left(\sum_{j=1}^K e^{\bx_j-\mm^{(K)}}\right)
= \log\!\left(\sum_{j=1}^m e^{\bx_j}\right)
= \LSE(\bx).
\]
Applying this row-wise yields $\LSE_{\rm row}(S)=m+\log s$, and hence the algorithm computes
$\hat\mf=-\eps\,\LSE_{\rm row}(S_X(\hat\mg))$ as claimed.
\end{proof}

\section{Proof in Section~\ref{subsec:transport_matrix}}

\subsection{Proof of Proposition~\ref{prop:plan_value_as_attention}}
\label{prop:plan_value_as_attention-proof}
\begin{proof}
Fix shifted potentials $(\hat\mf,\hat\mg)$ and define
\[
P_{ij}:=P_{ij}(\hat\mf,\hat\mg)
=a_i b_j \exp\!\left(\frac{\hat f_i+\hat g_j+(QK^\top)_{ij}}{\eps}\right).
\]
Recall $\bdelta=\eps\log\bb$ and $\bgamma=\eps\log\ba$, so that $b_j=\exp(\delta_j/\eps)$ and
$a_i=\exp(\gamma_i/\eps)$.

By definition,
\[
S_X(\hat\mg)_{ij}=\frac{(QK^\top)_{ij}+\hat g_j+\delta_j}{\eps},
\qquad
\exp\!\big(S_X(\hat\mg)_{ij}\big)=\exp\!\left(\frac{(QK^\top)_{ij}+\hat g_j+\delta_j}{\eps}\right).
\]
The row-update definition $\hat\mf^+=-\eps\,\LSE_{\rm row}(S_X(\hat\mg))$ means, entrywise,
\[
\hat f_i^+=-\eps\log\sum_{j=1}^m \exp\!\big(S_X(\hat\mg)_{ij}\big)
\quad\Longrightarrow\quad
\sum_{j=1}^m \exp\!\big(S_X(\hat\mg)_{ij}\big)=\exp\!\left(-\frac{\hat f_i^+}{\eps}\right).
\]
Now rewrite $P_{ij}$ as
\[
P_{ij}
=a_i \exp\!\left(\frac{\hat f_i}{\eps}\right)\;
\exp\!\left(\frac{(QK^\top)_{ij}+\hat g_j+\delta_j}{\eps}\right)
=
a_i \exp\!\left(\frac{\hat f_i}{\eps}\right)\exp\!\big(S_X(\hat\mg)_{ij}\big).
\]
Therefore the row-mass vector $r=P\mathbbm{1}_m$ satisfies
\[
r_i=\sum_{j=1}^m P_{ij}
=a_i \exp\!\left(\frac{\hat f_i}{\eps}\right)\sum_{j=1}^m \exp\!\big(S_X(\hat\mg)_{ij}\big)
=a_i \exp\!\left(\frac{\hat f_i-\hat f_i^+}{\eps}\right),
\]
which is the claimed expression for $r$.
Moreover,
\[
\frac{P_{ij}}{r_i}
=
\frac{\exp\!\big(S_X(\hat\mg)_{ij}\big)}{\sum_{j'=1}^m \exp\!\big(S_X(\hat\mg)_{ij'}\big)}
=\rm{Softmax}\!\big(S_X(\hat\mg)\big)_{ij},
\]
where $\rm{Softmax}(\cdot)$ is applied row-wise. Hence, in matrix form,
\[
P=\diag(r)\,\rm{Softmax}\!\big(S_X(\hat\mg)\big),
\]
and multiplying by any $V\in\R^{m\times p}$ gives
\[
PV=\diag(r)\,\rm{Softmax}\!\big(S_X(\hat\mg)\big)\,V.
\]

The column form $P^\top U=\diag(c)\,\rm{Softmax}\!\big(S_Y(\hat\mf)\big)\,U$ is shown similarly.

If $(\hat\mf^*,\hat\mg^*)$ satisfies the Sinkhorn fixed-point relations, then
$\hat\mf^*=\hat\mf^{*+}$ and $\hat\mg^*=\hat\mg^{*+}$, so
$r=\ba\odot\exp((\hat\mf^*-\hat\mf^{*+})/\eps)=\ba$ and
$c=\bb\odot\exp((\hat\mg^*-\hat\mg^{*+})/\eps)=\bb$.
Hence $P(\hat\mf^*,\hat\mg^*)$ has the prescribed marginals and coincides with the coupling induced
by the optimal dual potentials, i.e., it recovers the optimal transport matrix.
\end{proof}

\subsection{Proof of Corollary~\ref{cor:barycentric_grad_attn}}
\label{cor:barycentric_grad_attn-proof}
\begin{proof}

By Proposition~\ref{prop:plan_value_as_attention} (row form) with $V=Y$, we have
\[
P(\hat\mf^*,\hat\mg^*)\,Y
=
\diag(r)\,\rm{Softmax}_{\rm row}(S_X^*)\,Y.
\]
At a Sinkhorn fixed point, the transport matrix has the prescribed row marginal, i.e. $r=\ba$,
and $P(\hat\mf^*,\hat\mg^*)=P^*$. Therefore
\[
P^*Y=\diag(\ba)\,\rm{Softmax}_{\rm row}(S_X^*)\,Y,
\qquad
T_\eps(X)=\diag(\ba)^{-1}P^*Y=\rm{Softmax}_{\rm row}(S_X^*)\,Y,
\]
which proves the attention form of the barycentric projection.

Finally, for squared Euclidean cost the gradient identity
$\nabla_{\bx_i}\OT_\eps(\bmu,\bnu)=2a_i\big(\bx_i-T_\eps(\bx_i)\big)$ holds.
Stacking these $n$ row-gradients gives
\[
\nabla_X\OT_\eps(\bmu,\bnu)
=
2\diag(\ba)\big(X-T_\eps(X)\big)
=
2\diag(\ba)\Big(X-\rm{Softmax}_{\rm row}(S_X^*)Y\Big),
\]
which is \eqref{eq:grad_residual_attn}.
\end{proof}

\section{Streaming Hessian-Vector Products Derivation and Proof of Theorem \ref{thm:hvp}}
\label{app:hvp}

Fix \(A\in\R^{n\times d}\). Define the HVP \(G:=\cT\, A\in\R^{n\times d}\) entrywise by
$G_{kt}\;:=\;\sum_{s=1}^n\sum_{l=1}^d \cT_{ktsl}\,A_{sl}$.

Recall the Hessian decomposition \(\cT=\frac1\eps\,\cR^\top H^{*\dagger}\cR+\cE\) in \Cref{eq:hessian_concise}.
Then the HVP splits into an implicit term (through \(H^{*\dagger}\)) and an explicit term:
\begin{equation}\label{eq:hvp_split}
\cT\, A
\;=\;\frac{1}{\eps}\,\cR^\top \bw \;+\;\cE\, A,
\qquad
\bw:=H^{*\dagger}(\cR\, A),
\end{equation}
where \(\cR\, A\in\R^{n+m}\) is the contraction
\(
(\cR\, A)_i=\sum_{s,l}\cR_{isl}A_{sl},
\)
and \(\bw\) is the minimum-norm solution of the consistent linear system
\begin{equation}\label{eq:Hw_rhs}
H^* \bw \;=\;\cR\, A.
\end{equation}
The matrix \(H^*\) is symmetric positive semidefinite with a simple zero eigenvalue under strictly positive couplings
\(\,P^*_{kj}>0\,\), hence the need for \(H^{*\dagger}\); see \cite{li2025robust}.

Let \(\odot\) denote the Hadamard product and \(\mathbbm{1}_d\) the all-ones vector in \(\R^d\).
Define the rowwise dot-products
\[
\langle X,A\rangle \;:=\; (X\odot A)\mathbbm{1}_d \in \R^n.
\]
\subsection{Explicit term \(\cE\, A\)}\label{append_sub:explicit_hvp}

Since \(\cE\) is block-diagonal across source points, \((\cE\, A)_{kt}\) depends only on the \(k\)-th row \(A_{k:}\):
\[
(\cE\, A)_{kt}
=\sum_{l=1}^d \cE_{k t k l}\,A_{kl}
=
2a_k A_{kt}
-\frac{4}{\eps}\sum_{j=1}^m\sum_{l=1}^d
P^*_{kj}\,(x_{kt}-y_{jt})(x_{kl}-y_{jl})\,A_{kl}.
\]
Introduce
\[
\mathbf{u}:=\langle X,A\rangle\in\R^n,
\qquad
\mathbf{u}_P:=\langle PY,A\rangle\in\R^n,
\qquad
W:=AY^\top\in\R^{n\times m}\quad(W_{kj}= \langle A_k,y_j\rangle).
\]
Then the contraction \(\cE\, A\in\R^{n\times d}\) admits the streaming-friendly decomposition
\begin{equation}\label{eq:EA_decomp_vec}
\cE\, A
=
B_1-\frac{4}{\eps}\,(B_2-B_3-B_4+B_5),
\end{equation}
with
\begin{equation}\label{eq:EA_Bterms_correct}
\begin{aligned}
B_1 &= 2\,\diag(\ba)\,A,\\
B_2 &= \diag(\ba\odot \mathbf{u})\,X,\\
B_3 &= \diag(\mathbf{u})\,(PY),\\
B_4 &= \diag(\mathbf{u}_P)\,X,\\
B_5 &= (P^*\odot W)\,Y.
\end{aligned}
\end{equation}
Here \(B_3,B_4\) use \(PY\) and rowwise dot-products, while \(B_5\) is a Hadamard-weighted transport
\((P^*\odot(AY^\top))Y\); all can be implemented via streaming \(P^*\)-vector products without materializing \(P^*\)
(or \(W\)).

\subsection{Implicit term \(\cR^\top\bw\)}\label{append_sub:implicit_hvp}
Write \(\bw=\begin{pmatrix}\bw_1\\ \bw_2\end{pmatrix}\) with \(\bw_1\in\R^n\), \(\bw_2\in\R^m\).
The implicit term in \Cref{eq:hvp_split} is evaluated in three steps.

\paragraph{Step 1: build the right-hand side \(\mathbf{r}:=\cR\ A\).}
Using the definition of \(\cR\) with \(\cB_{ktj}=2(x_{kt}-y_{jt})P^*_{kj}\), the contraction
\(\mathbf{r}=\begin{pmatrix}\mathbf{r}_1\\ \mathbf{r}_2\end{pmatrix}\in\R^{n+m}\) has blocks
\begin{equation}\label{eq:RdotA_correct}
\boxed{
\begin{aligned}
\mathbf{r}_1 &= 2\Big(\ba\odot \mathbf{u} - \mathbf{u}_P\Big)\in\R^n,\\[1mm]
\mathbf{r}_2 &= 2\Big(P^{*\top}\mathbf{u} - \langle P^{*\top}A,\;Y\rangle\Big)\in\R^m.
\end{aligned}}
\end{equation}
Every term here is a vector product with \(P^*\) or \(P^{*\top}\), plus elementwise operations.

\paragraph{Step 2: solve \(H^*\mathbf{w}=\mathbf{r}\) via a (damped) Schur complement..}
Write \(\bw=\binom{\bw_1}{\bw_2}\) and \(\br=\binom{\br_1}{\br_2}\), so that
\[
\begin{pmatrix}\diag(\ba)&P^*\\ (P^*)^\top&\diag(\bb)\end{pmatrix}
\binom{\bw_1}{\bw_2}
=\binom{\br_1}{\br_2}.
\]
When \(P^*>0\), \(H^*\) is symmetric positive semidefinite with a one-dimensional nullspace being
\(H^*[\mathbbm{1}_n;-\mathbbm{1}_m]=\mathbf{0}  \).
Eliminating \(\bw_1\) yields the Schur-complement system:
\begin{equation}\label{eq:schur_system}
\underbrace{\Big(\diag(\bb)-(P^*)^\top\diag(\ba)^{-1}P^*\Big)}_{=:S\in\R^{m\times m}}\bw_2
\;=\;\br_2-(P^*)^\top\diag(\ba)^{-1}\br_1,
\qquad
\bw_1=\diag(\ba)^{-1}(\br_1-P^*\bw_2).
\end{equation}
Moreover, \(S\succeq 0\) and \(S\mathbbm{1}_m=0\) (the same mass-shift mode), so one can either
(i) project onto \(\mathbbm{1}_m^\perp\) and run CG there, or
(ii) apply Tikhonov damping \(S_\tau:=S+\tau I\) to obtain an SPD system.
In this work, we use damping and solve
\[
S_\tau \bw_2=\br_2-(P^*)^\top\diag(\ba)^{-1}\br_1,
\qquad
S_\tau:=\Big(\diag(\bb)-(P^*)^\top\diag(\ba)^{-1}P^*\Big)+\tau I.
\]

For a target relative residual \(\eta\), CG satisfies the standard iteration bound
\(K_{CG}=O\!\big(\sqrt{\kappa(S_\tau)}\log(1/\eta)\big)\) for SPD systems \cite{golub2013matrix}. 
Consequently, the transport-dominated HVP cost scales as
\[
O\!\Big(nm\big(d+\sqrt{\kappa(S_\tau)}\log(1/\eta)\big)\Big).
\]
As \(\eps\downarrow 0\), conditioning can deteriorate: \citet{li2025robust} show that the associated sensitivity matrix
has a simple zero eigenvalue and can have a smallest positive eigenvalue as small as \(O(e^{-1/\eps})\),
implying potentially large effective condition numbers at small \(\eps\). 
Damping enforces \(\lambda_{\min}(S_\tau)\ge \tau\), hence
\(\kappa(S_\tau)\le(\lambda_{\max}(S)+\tau)/\tau\), which stabilizes \(K\).
Finally, in the strictly SPD setting the Schur complement is not worse conditioned than the full system:
if \(H\succ 0\), then \(S(H)^{-1}\) is a principal submatrix of \(H^{-1}\)  and eigenvalue interlacing
implies \(\kappa_2(S(H))\le \kappa_2(H)\) \cite{golub2013matrix}.

\paragraph{Step 3: apply \(\cR^\top\) to \(\mathbf{w}\).}
Given \(\bw_1,\bw_2\), the contraction \(\cR^\top\bw\in\R^{n\times d}\) simplifies to
\begin{equation}\label{eq:Rtw_correct}
\boxed{
\cR^\top\mathbf{w}
=
2\Big(
\diag(\ba\odot \bw_1)\,X
-\diag(\bw_1)\,(PY)
+\diag(P^*\bw_2)\,X
-P^*(\diag(\bw_2)\,Y)
\Big).}
\end{equation}
Again, only transport-like products \(P^*v\), \(P^{*\top}u\), \(PY\), and \(P^*(\diag(\bw_2)Y)\) appear.

\paragraph{Summary.}
Combining \Cref{eq:hvp_split}, \Cref{eq:EA_decomp_vec}--\Cref{eq:EA_Bterms_correct},
and \Cref{eq:RdotA_correct}--\Cref{eq:Rtw_correct}, the full HVP  reduces to
(i) a small number of \(P^*\)/\(P^{*\top}\) transport vectors,
(ii) transport matrices \(PY\) and \(P^*(\diag(\bw_2)Y)\), and
(iii) one Hadamard-weighted transport \((P^*\odot(AY^\top))Y\),
all of which admit FlashAttention-style streaming implementations.

\begin{remark}[Sign structure]\label{rmk:hvp_sign}
The implicit component \(\frac1\eps\,\cR^\top H^{*\dagger}\cR\) is positive semidefinite because
\(H^{*\dagger}\succeq 0\) and it appears as  \( \cR^\top(\cdot)\cR\).
The explicit component \(\cE\) is block-diagonal with blocks
\(
2a_k \mathbb{I}_d - \frac{4}{\eps}\sum_j P^*_{kj}(\bx_k-\by_j)^\top(\bx_k-\by_j)
\),
i.e., \(2a_k \mathbb{I}_d\) minus a PSD term scaled by \(4/\eps\). Hence \(\cE\), and therefore \(\cT\) is generally indefinite.
\end{remark}

\subsection{Computational complexity of one HVP}\label{app:hvp_complexity}
We count operations that traverse the dense pairwise interactions. Since $P^*$ is not materialized,
each transport application is implemented from potentials by streaming tiles, forming biased dot-product
scores, and applying online normalization \citep{dao2022flashattention,milakov2018online}.

Let a streamed transport application be $P^*V$ or $(P^*)^\top U$ with $V\in\R^{m\times p}$, $U\in\R^{n\times p}$.
One streamed pass costs
\[
\Theta\!\big(nm(d+p)\big)\ \text{flops}
\]
because it forms scores with $\Theta(nmd)$ dot products and accumulates values with $\Theta(nmp)$ multiply-adds
(the remaining exp/LSE bookkeeping is $O(nm)$).
In particular:
(i) a \emph{transport--vector} ($p=1$) costs $\Theta(nm(d+1))=\Theta(nmd)$ flops;
(ii) a \emph{transport--matrix} with $p=d$ costs $\Theta(nm(2d))=\Theta(nmd)$ flops.
A \emph{Hadamard-weighted transport} $(P^*\odot(AY^\top))Y$ is also $\Theta(nmd)$ flops when fused,
noting it requires an additional streamed dot product $AY^\top$ but remains the same order.

\paragraph{$\cE\,A$.}
Using $\cE\,A = B_1-\frac{4}{\eps}(B_2-B_3-B_4+B_5)$, the dense operations are:
(i) one transport--matrix $P^*Y$, which is cacheable across repeated HVPs at fixed $(\hat\mf^*,\hat\mg^*)$,
and (ii) one Hadamard-weighted transport $B_5=(P^*\odot(AY^\top))Y$.
All remaining terms are rowwise scalings and dot-products, costing $O((n+m)d)$.
Hence, $\text{flops}(\cE\,A)=\Theta(nmd)$.

\paragraph{$\cR^\top \bw$.}
Forming $\br=\cR\,A$ requires a transport--matrix $(P^*)^\top A$ plus a constant number of transport--vector products,
hence $\Theta(nmd)$ flops.
Eliminating $\bw_1$ yields the Schur system $S\,\bw_2=\mathrm{rhs}$, which we solve by CG.
Each CG iteration applies $S$ once, i.e., one $P^*$ and one $(P^*)^\top$ transport--vector application,
so $K$ iterations cost $\Theta(K_{CG}\,nmd)$ flops.
Back-substitution and the final application of $\cR^\top\bw$ contribute an additional $\Theta(nmd)$ flops.
Therefore,
\[
\text{flops}(\cR^\top\bw)=\Theta\!\big((2K_{CG}+O(1))\,nmd\big).
\]

\paragraph{Total.}
Combining the above,
\[
\text{flops}(\cT\,A)=\Theta\!\big((2K_{CG}+O(1))\,nmd\big)= O\!\big((K_{CG}+1)\, nmd\big),
\]
up to constant factors. The working memory stores $X,Y,A$ and a constant number of $n\times d$ or $m\times d$ buffers
(e.g., $P^*Y$, $(P^*)^\top A$) and $O(n+m)$ vectors, hence in total $O((n+m)d)$ memory without materializing $P^*$.

\section{Implementation Details}
\label{app:implement}
\subsection{Algorithms}
A direct implementation of \eqref{eq:hatf-lse} would invoke separate kernels to (i) compute the score
tiles $QK^\top$, (ii) add biases, and (iii) perform a row-wise $\LSE$ reduction, typically incurring extra
HBM traffic and kernel-launch overhead due to intermediate score materialization.
Instead, we fuse these stages into a single streaming kernel: each thread block loads a tile of $Q$ and
iterates over tiles of $K$, updates the running row-wise max and rescaled sum-exp statistics
$(\mm_I,\bs_I)$ (online $\LSE$), and emits $\hat \mf_I$ (or $\hat \mg_I$) once per row block.
For transport matrix application, we fuse the corresponding softmax-weighted value accumulation into the
same pass, analogous to fused attention kernels that compute $\rm{Softmax}(QK^\top)V$ without storing
$QK^\top$ or $\rm{Softmax}(QK^\top)$ explicitly.

In addition to \Cref{alg:flash_sinkhorn_hatf} and \Cref{alg:plan_mat_stream}, we will have three more algorithms: streaming $\hat \mg$-update in  \cref{alg:flash_sinkhorn_hatg}; apply adjoint transport from potentials $P^\top \, V$ in \cref{alg:plan_mat_stream_T} and Hadamard-weighted transport from potentials $(P\odot W)\, V$ in \cref{alg:plan_hadamard_stream}.

To maintain mathematical consistency for the current transport matrix \(P\) (and hence for early-stopped Sinkhorn),
our implementation uses induced marginals throughout:

\begin{itemize}
  \item \textbf{Gradient.} For squared Euclidean cost,
  \[
  \nabla_X \OT_\eps(\bmu, \bnu) \;=\; 2\big(\diag(\hat\ba)\,X - P Y\big),
  \]
  i.e., the row-scaling term uses \(\hat\ba=P\mathbbm{1}_m\) which is available for free from the same streaming
  normalizations used to evaluate \(P Y\).

    \item \textbf{HVP.} The sensitivity matrix and its Schur complement are constructed using
  \(\hat\ba,\hat\bb\):
  \[
H(P)\;=\;\begin{pmatrix}\diag(\hat\ba)&P\\ P^\top&\diag(\hat\bb)\end{pmatrix},
  \qquad
  S(P)\;=\;\diag(\hat\bb)-P^\top\diag(\hat\ba)^{-1}P,
  \]
\end{itemize}
This ensures that the gradient and HVP are consistent objects derived from the same transport matrix \(P\), rather than the  optimal coupling  \(P^*\) with the target marginals \((\ba,\bb)\).
When Sinkhorn is fully converged, \(\hat\ba=\ba\) and \(\hat\bb=\bb\), and the distinction disappears.

\subsection{Loop order}
Both FlashAttention and FlashSinkhorn rely on the same online-softmax/online-LSE recurrence (running row-wise max and rescaled exp-sums) to avoid materializing the score matrix in HBM. The difference is which operand is held stationary in SRAM. FlashAttention Algorithm~1 chooses a $(K, V)$-stationary traversal: it loops over key/value blocks $j$ in the outer loop, loads $(K_j, V_j)$ into on-chip SRAM once, and then sweeps over all query blocks $Q_i$ to update $(O_i, \ell_i, m_i)$, writing these intermediate quantities back to HBM each pass. This ordering is motivated by IO: with the block-size choice $B_r$, the number of query blocks can be large, and streaming $(K, V)$ outermost avoids repeatedly reloading the same $(K, V)$ blocks from HBM. 

In contrast, FlashSinkhorn’s half-steps are fundamentally row-wise reduction. We therefore adopt a row-stationary traversal: fix a query row block $Q_I$ in SRAM, stream all key/value blocks $(K_J, V_J)$, and maintain the running statistics and accumulators $(\mm_I, \bs_I)$ (and $O_I$ when computing $P\,V$) on-chip until the row block is complete, then write out $\hat\mf_I$ or $(P\,V)_I$ once. This strategy substantially reduces HBM traffic for intermediate accumulators and is particularly critical for OT transport‑matrix/vector products, where the column dimension of $V$, whenever it is a matrix or a vector ,can be much larger than an attention head, making standard FlashAttention‑style online normalization impractical. This row-stationary nesting aligns with the refinement adopted in FlashAttention-2~\citep{dao2023flashattention2}, which similarly moves the queries to the outer loop for the same row-wise efficiency reasons. 

\begin{algorithm}[!tbp]
\caption{FlashSinkhorn: streaming $\hat \mg$-update}
\label{alg:flash_sinkhorn_hatg}
\begin{algorithmic}[1]
\STATE \textbf{Input:} $X\in\R^{n\times d}$, $Y\in\R^{m\times d}$, $\hat \mf\in\R^{n}$, $\ba\in\Delta^n$
\STATE \textbf{Output:} $\hat \mg\in\R^{m}$

\STATE $Q\leftarrow\sqrt{2}X$, $K\leftarrow\sqrt{2}Y$, $\bgamma\leftarrow \eps\log \ba$ \hfill \textit{// scale on load}

\FOR{each row block $J$ of size $B_M$}
    \STATE Load $K_J$ to on-chip SRAM \hfill
    \STATE $\mm_J \leftarrow -\infty$, \ $\bs_J \leftarrow 0$ \hfill \textit{// running max / sumexp}
    \FOR{each column block $I$ of size $B_N$}
        \STATE Load $Q_I$, $\hat \mf_I$ and $\bgamma_I$ to on-chip SRAM \hfill
        \STATE $S \leftarrow (K_J Q_I^\top+\hat \mf_I+\bgamma_I)/\eps$ \hfill \textit{// scores + bias}
        \STATE $\tilde \mm \leftarrow \mathrm{rowmax}(S)$ \hfill \textit{// max over col}
        \STATE $\mm_{\mathrm{new}} \leftarrow \max(\mm_J,\tilde \mm)$ \hfill \textit{// update elementwise}
        \STATE $\bs_J \leftarrow e^{\mm_J-\mm_{\mathrm{new}}}\odot \bs_J + \mathrm{rowsum}\!\left(e^{S-\mm_{\mathrm{new}}}\right)$ \hfill
        \STATE $\mm_J \leftarrow \mm_{\mathrm{new}}$
    \ENDFOR
    \STATE $\hat \mg_J \leftarrow -\eps\cdot\big(\mm_J+\log \bs_J\big)$ \hfill \textit{// $\hat \mg=-\eps\,\LSE_{\rm row}$}
    \STATE Write $\hat \mg_J$ to HBM \hfill 
\ENDFOR
\end{algorithmic}
\end{algorithm}

\begin{algorithm}[!tbp]
\caption{Apply Adjoint Transport from Potentials, $P^\top\! V$}
\label{alg:plan_mat_stream_T}
\begin{algorithmic}[1]
\STATE \textbf{Input:} $X\in\R^{n\times d}$, $Y\in\R^{m\times d}$, $\hat\mf\in\R^{n}$, $\hat\mg\in\R^{m}$, $\ba\in\Delta^n$, $\bb\in\Delta^m$, $V\in\R^{n\times p}$
\STATE \textbf{Output:} $\mathrm{out}=P^\top\  V\in\R^{m\times p}$ \hfill

\STATE $Q\leftarrow\sqrt{2}X$, $K\leftarrow\sqrt{2}Y$, $\bgamma\leftarrow \eps\log \ba$\hfill \textit{// scale on load}

\FOR{each row block $J$ of size $B_M$}
    \STATE Load $K_J$, $\hat\mg_J$, $\bb_J$ to on-chip SRAM \hfill
    \STATE $\mm_J\leftarrow -\infty$, \ $\bs_J\leftarrow 0$, \ $O_J\leftarrow 0$ \hfill
    \FOR{each column block $I$ of size $B_N$}
        \STATE Load $Q_I$, $\hat\mf_I, \bgamma_I$, $V_I$ to on-chip SRAM \hfill
        \STATE $S \leftarrow (K_J Q_I^\top+\hat\mf_I+\bgamma_I)/\eps $ \hfill \textit{// score tile + bias}
        \STATE $\tilde \mm \leftarrow \mathrm{rowmax}(S)$ \hfill \textit{// tile max}
        \STATE $\mm_{\mathrm{new}} \leftarrow \max(\mm_J,\tilde \mm)$ \hfill \textit{// update max row-wise}
        \STATE $\bs_J \leftarrow e^{\mm_J-\mm_{\mathrm{new}}}\odot \bs_J + \mathrm{rowsum}\!\left(e^{S-\mm_{\mathrm{new}}}\right)$ \hfill
        \STATE $O_J \leftarrow e^{\mm_J-\mm_{\mathrm{new}}}\odot O_J + e^{S-\mm_{\mathrm{new}}}\,V_I$ \hfill
        \STATE $\mm_J \leftarrow \mm_{\mathrm{new}}$
    \ENDFOR
    \STATE $\hat\mg^{+}_J \leftarrow -\eps\cdot\big(\mm_J+\log \bs_J\big)$ \hfill \textit{// one-step $\hat \mg$-update}
    \STATE $\bc_J \leftarrow \bb_J\odot \exp\!\left((\hat\mg_J-\hat\mg^{+}_J)/\eps\right)$ \hfill \textit{// $\bc=(P^\top\ones)_J$}
    \STATE $\mathrm{out}_J \leftarrow \mathrm{diag}(\bc_J)\,\mathrm{diag}(\bs_J)^{-1}\,O_J$ \hfill
    \STATE Write $\mathrm{out}_J$ to HBM \hfill
\ENDFOR
\end{algorithmic}
\end{algorithm}

\begin{algorithm}[!tbp]
\caption{Hadamard-Weighted Transport from Potentials, $(P\odot W)\ V$ with $W_{kj}=A_k B_j^\top$}
\label{alg:plan_hadamard_stream}
\begin{algorithmic}[1]
\STATE \textbf{Input:} $X\in\R^{n\times d}$, $Y\in\R^{m\times d}$, $A\in\R^{n\times r}$, $B\in\R^{m\times r}$,
$\hat\mf\in\R^{n}$, $\hat\mg\in\R^{m}$, $\ba\in\Delta^n$, $\bb\in\Delta^m$, $V\in\R^{m\times p}$
\STATE \textbf{Output:} $\mathrm{out}=(P\odot W)\ V\in\R^{n\times p}$ \hfill

\STATE $Q\leftarrow\sqrt{2}X$, $K\leftarrow\sqrt{2}Y$, $\bdelta\leftarrow \eps\log \bb$ \hfill \textit{// scale on load}

\FOR{each row block $I$ of size $B_N$}
    \STATE Load $Q_I$, $A_I$, $\hat\mf_I$, $\ba_I$ to on-chip SRAM \hfill
    \STATE $\mm_I\leftarrow -\infty$, \ $\bs_I\leftarrow 0$, \ $O_I\leftarrow 0$ \hfill \textit{// running max/sumexp, output accumulator}
    \FOR{each column block $J$ of size $B_M$}
        \STATE Load $K_J$, $B_J$, $\hat\mg_J, \bdelta_J$, $V_J$ to on-chip SRAM \hfill
        \STATE $S \leftarrow (Q_I K_J^\top+\hat\mg_J+\bdelta_J)/\eps$ \hfill \textit{// score tile + bias}
        \STATE $W \leftarrow A_I B_J^\top$ \hfill \textit{// Hadamard weights: $W_{kj}=A_kB_j^\top$}
        \STATE $\tilde \mm \leftarrow \mathrm{rowmax}(S)$ \hfill
        \STATE $\mm_{\mathrm{new}} \leftarrow \max(\mm_I,\tilde \mm)$ \hfill
        \STATE $\bs_I \leftarrow e^{\mm_I-\mm_{\mathrm{new}}}\odot \bs_I + \mathrm{rowsum}(e^{S-\mm_{\mathrm{new}}})$ \hfill
        \STATE $O_I \leftarrow e^{\mm_I-\mm_{\mathrm{new}}}\odot O_I + (e^{S-\mm_{\mathrm{new}}}\odot W)\,V_J$ \hfill
        \STATE $\mm_I \leftarrow \mm_{\mathrm{new}}$
    \ENDFOR
    \STATE $\hat\mf^{+}_I \leftarrow -\eps\cdot\big(\mm_I+\log \bs_I\big)$ \hfill \textit{// one-step $\hat \mf$-update}
    \STATE $\br_I \leftarrow \ba_I\odot \exp\!\left((\hat\mf_I-\hat\mf^{+}_I)/\eps\right)$ \hfill \textit{// $\br=(P\ones)_I$}
    \STATE $\mathrm{out}_I \leftarrow \mathrm{diag}(\br_I)\,\mathrm{diag}(\bs_I)^{-1}\,O_I$ \hfill
    \STATE Write $\mathrm{out}_I$ to HBM \hfill
\ENDFOR
\end{algorithmic}
\end{algorithm}

\section{Full Experiment Results}
\label{app:experiments}
 \subsection{Experimental Setup}                       
All experiments were conducted on an NVIDIA A100-80GB GPU with CUDA 12.1. We use PyTorch 2.5.1+ with Triton 2.1+ for  FlashSinkhorn. We benchmark against GeomLoss~0.2.6 \cite{feydy2019interpolating}   
  with PyKeOps~2.3 backend and OTT-JAX~0.5.1 \cite{cuturi2022optimal} with JAX~0.8.2. 
  
\subsection{Synthetic Benchmarks}
\label{app:syn-bench}
We benchmark on balanced optimal transport with uniform marginals ($\ba = \mathbbm{1}_n/n$, $\bb =       \mathbbm{1}_m/m$) and squared Euclidean cost $C(\bx,\by) = \|\bx-\by\|^2_2$. Points are sampled uniformly from $[0,1]^d$ with $n \in [5\text{k}, 50\text{k}]$, $d \in [4, 1024]$ for forward/backward benchmarks and $d \in [4, 
  512]$ for HVP examples.
  All methods use regularization $\varepsilon = 0.1$ and 10 Sinkhorn iterations for forward/backward benchmarks, or 100 Sinkhorn iterations for HVP benchmarks. For HVP,  
  the conjugate gradient solver uses $K=50$ fixed iterations with damping $\tau = 10^{-5}$                   
  and no early stopping (convergence tolerance disabled for fair comparison). 
  
  For PyTorch and KeOps, we use CUDA events for precise GPU timing. For JAX, we use wall-clock time with         \texttt{block\_until\_ready()} synchronization. All benchmarks include 10 warmup iterations before timing, with 50 repetitions for forward, 
  30 for backward, and 20 for HVP. We report mean timing across repetitions.  
  
\textbf{Methods Compared.}     
  \begin{itemize}                
      \item \textbf{FlashSinkhorn}: Our method with symmetric (GeomLoss-style) and alternating (OTT-style) updates                            
      \item \textbf{KeOps}: GeomLoss with \texttt{backend='online'} --- streaming $O(nd)$ memory                                              
      \item \textbf{Tensorized}: GeomLoss with \texttt{backend='tensorized'} --- dense $O(n^2)$ memory                                        
      \item \textbf{OTT-JAX}: OTT-JAX with \texttt{use\_danskin=True} for gradients                                                           
  \end{itemize} 
  Another method \textbf{POT}: Python Optimal Transport toolbox (primarily for NumPy/SciPy CPU) \cite{flamary2021pot} is not benchmarked here since our evaluation targets GPU implementations. GeomLoss also provides \texttt{backend='multiscale'}, a fast octree-style multiscale routine intended for very large problems in low ambient dimension, but it is restricted to inputs in dimension $d\in\{1,2,3\}$; since our benchmarks focus on higher-dimensions, we do not include this mode, either.

A closely related line is \citet{lindback2023bringing}, which proposes RDROT, a Douglas--Rachford splitting solver adapted to GPUs for regularized OT with a broad class of sparsity-promoting regularizers.
However, our benchmarks focus on EOT, for which the optimal coupling is typically dense; notably, \citet{lindback2023bringing} explicitly excludes negative Shannon entropy from its target regularizer class.
As a result, RDROT is not a like-for-like baseline for our setting, so we discuss it as complementary rather than including it in the timing comparisons.

  \paragraph{Benchmark Details}
 To isolate implementation efficiency from algorithmic differences, all benchmarks use: (1) fixed iteration count with no early stopping;  (2) fixed $\varepsilon = 0.1$ with no epsilon-scaling; (3) TF32 precision enabled for forward and backward benchmarks, and strict FP32 for HVP benchmarks due to higher numerical sensitivity of second-order computations; (4) no debiasing for simpler comparison.  These settings ensure all methods perform identical arithmetic operations.  

We discard the first 10 runs to exclude JIT compilation and cache warm-up, then measure 50 repetitions using the framework-appropriate timing protocol described above (CUDA events for PyTorch/KeOps; \texttt{block\_until\_ready()} for JAX).
  For Triton kernels, we run problem sizes from large to small to avoid autotuning cache pollution \cite{dao2022flashattention,ringlein2025anatomy,li2025tritonforge}, which can cause up to 50\% performance    
  degradation if sizes are run in ascending order.     

  FlashSinkhorn and GeomLoss (same symmetric algorithm) differ by $<0.1\%$ in loss values due to floating-point non-associativity in parallel 
  reductions.                  FlashSinkhorn and OTT-JAX differ by $\sim$2\% at 10 iterations due to different initial $\mf$ and $\mg$, but converge to $<0.1\%$ difference at 100  
  iterations.                  

  \subsubsection{FlashSinkhorn vs Geomloss: Performance Analysis}
  \label{app:FlashSinkhorn-vs-Geomloss}

  \begin{table}[t]                                                                                                                            
  \centering                                                                                                                                  
  \caption{NCU profiling of the forward pass ($n$=$m$=10000, $d$=64, 10 Sinkhorn iterations, A100-80GB).  \textbf{Note:} Working set (5\,MB) fits in A100 L2 cache (40\,MB);  HBM traffic reflects L2-resident execution.}                           
  \label{tab:ncu_full_fwd}                                                                                                                    
  \footnotesize                                                                                                                               
  \resizebox{0.5\columnwidth}{!}{%
  \begin{tabular}{lccc}                                                                                                                       
  \toprule                                                                                                                                    
  Metric & Tensor. & KeOps & Flash \\                                                                                                         
  \midrule                                                                                                                                    
  \multicolumn{4}{l}{\textit{Performance}} \\                                                   
  Runtime (ms) & 54.0 & 125.5 & \textbf{8.2} \\                                  
  \midrule                                                                                                                                    
  \multicolumn{4}{l}{\textit{Memory}} \\                                                                                                      
  HBM Read & 59\,GB & 140\,MB & 79\,MB \\                                                                                                     
  HBM Write & 39\,GB & 0 & 0 \\                                                                                                                          
  Shared Mem Traffic (GB) & 6 & 23 & 53 \\                                                                                                            
  L1 Hit (\%) & 18 & 82 & 8 \\                                                                                                                
  L2 Hit (\%) & 56 & 97 & 99 \\                                                                                                               
  \midrule                                                                                                                                    
  \multicolumn{4}{l}{\textit{Compute}} \\                                                                                                     
  SM Util. (\%) & 98 & 49 & 74 \\                                                                                                             
  Occupancy (\%) & 89 & 9 & 11 \\                                                                                                             
  Instructions (B) & 10 & 16 & 7 \\                                                                                                           
  \midrule                                                                                                                                    
  \multicolumn{4}{l}{\textit{Stalls}} \\                                                                                                      
  Memory (\%) & 79 & 8 & 3 \\                                                                                                                 
  Math (\%) & 2 & 8 & 6 \\                                                                                                                    
  \midrule                                                                                                                                    
  \multicolumn{4}{l}{\textit{Launch}} \\                                                                                                      
  Regs/Thread & 16 & 96 & 255 \\                                                                                                              
  \midrule                                                                                                                                    
  Bottleneck & Memory & Compute & Compute \\                                                                                                    
  \bottomrule                                                                                                                                 
  \end{tabular}%
  }                                                                                                                                           
  \end{table}

  \begin{table}[t]
  \centering
  \caption{Per-kernel and tensor-pipe breakdown of the forward pass ($n{=}m{=}10000$, $d{=}64$, 10 Sinkhorn iterations, A100-80GB). KeOps issues 96 \texttt{GpuConv1D} reductions and 758 elementwise auxiliary kernels per evaluation; FlashSinkhorn fuses each Sinkhorn update into a small number of streaming Triton kernels ($\sim$5 per iteration, 48 Sinkhorn-related in total) and routes the squared-Euclidean interaction through tiled \texttt{tl.dot} on the tensor pipeline rather than elementwise CUDA-core operations.}
  \label{tab:ncu_per_kernel}
  \footnotesize
  \begin{tabular}{lccc}
  \toprule
  Metric & KeOps & Flash & Ratio \\
  \midrule
  Total kernel launches & 854 & \textbf{130} & $6.6\times$ fewer \\
  Tensor-pipe instructions (M) & 3.5 & \textbf{10.1} & $2.9\times$ more \\
  \bottomrule
  \end{tabular}
  \end{table}

    \begin{table}[t]
    \centering
    \caption{NCU profiling of the forward and backward pass ($n$=$m$=10000, $d$=128, 10 Sinkhorn iterations, A100-80GB).}
    \label{tab:ncu_full_bwd}
    \footnotesize
    \resizebox{0.5\columnwidth}{!}{%
    \begin{tabular}{lccc}                                                                                                                     
    \toprule                                                                                                                                  
    Metric & Tensor. & KeOps & Flash \\                                                                                                       
    \midrule                                                                                                                                  
    \multicolumn{4}{l}{\textit{Performance}} \\                                                                
    Runtime (ms) & 67.6 & 197.0 & \textbf{19.2} \\                                                                                                                                                
    \midrule                                                                                                                                  
    \multicolumn{4}{l}{\textit{Memory}} \\                                                                                                    
  HBM Read & 65\,GB & 250\,MB & 199\,MB \\    
  HBM Write & 44\,GB & 4\,MB & 48\,MB \\                                                                                            
       
Shared Mem Traffic (GB) & 14 & 330 & 354 \\                                                                                                     
    L1 Hit (\%) & 17 & 85 & 17 \\                                                                                                             
    L2 Hit (\%) & 56 & 98 & 100 \\                                                                                                            
    \midrule                                                                                                                                  
    \multicolumn{4}{l}{\textit{Compute}} \\                                                                                                   
    SM Util. (\%) & 30 & 38 & 38 \\                                                                                                           
    Occupancy (\%) & 84 & 5 & 11 \\                                                                                                           
    Instructions (B) & 11 & 35 & 18 \\                                                                                                       
                                                 
    \midrule                                                                                                                                  
    \multicolumn{4}{l}{\textit{Stalls}} \\                                                                                                    
    Memory (\%) & 68.2 & 0.8 & 0.5 \\                                                                                                         
    Math (\%) & 2.0 & 0.0 & 0.5 \\                                                                                                            
    \midrule                                                                                                                                  
    \multicolumn{4}{l}{\textit{Launch}} \\                                                                                                    
    Regs/Thread & 122 & 255 & 255 \\                                                                                                          
    \midrule                                                                                                                                  
    Bottleneck & Memory & Compute & Compute \\                                                                                                
    \bottomrule                                                                                                                               
    \end{tabular}%
    }                                                                                                                                         
    \end{table}

\begin{table}[h]                                                                                                                            
  \centering                                                                                                                                  
  \caption{Speedup of FlashSinkhorn over KeOps (forward pass only, averaged over 50 runs). OOT indicates exceeded time limit of 10 mins. Bold indicates at least 10x speed up.}                       
  \label{tab:speedup-keops}                                                                                                                   
  \small                                                                                                                                      
  \begin{tabular}{r|ccccccccc}                                                                                                                
  \toprule                                                                                                                                    
  $n$ & $d{=}4$ & $d{=}8$ & $d{=}16$ & $d{=}32$ & $d{=}64$ & $d{=}128$ & $d{=}256$ & $d{=}512$ & $d{=}1024$ \\                                
  \midrule                                                                                                                                    
  50000 & 1.2 & 1.9 & 4.1 & 8.2 & 9.9 & \textbf{12.8} & \textbf{17.8} & OOT & OOT \\                                                   
  40000 & 1.0 & 1.7 & 3.5 & 6.4 & 8.3 & \textbf{12.5} & \textbf{16.2} & OOT & OOT \\                                                            
  30000 & 1.4 & 2.2 & 4.7 & 8.4 & \textbf{11.0} & \textbf{11.4} & \textbf{18.7} & OOT & OOT \\                                                  
  20000 & 1.6 & 2.6 & 4.2 & 9.7 & 9.4 & \textbf{10.8} & \textbf{19.3} & \textbf{17.6} & \textbf{14.4} \\                    
  10000 & 2.7 & 4.3 & 5.5 & \textbf{17.0} & \textbf{15.4} & 9.4 & \textbf{23.3} & \textbf{32.0} & \textbf{24.2} \\                            
  5000  & 3.4 & 5.6 & 7.3 & \textbf{19.2} & \textbf{14.7} & \textbf{14.3} & \textbf{32.8} & \textbf{46.6} & \textbf{32.7} \\                  
  \bottomrule                                                                                                                                 
  \end{tabular}                                                                                                                               
  \end{table}

  \begin{table}[h]                                                                                                                            
  \centering                                                                                                                                  
  \caption{Speedup of FlashSinkhorn over KeOps (forward+backward, averaged over 30 runs). OOT indicates exceeded time limit of 10 mins. Bold indicates at least 10x speed up.}                         
  \label{tab:backward-speedup-keops}                                                                                                          
  \small                                                                                                                                      
  \begin{tabular}{r|ccccccccc}                                                                                                                
  \toprule                                                                                                                                    
  $n$ & $d{=}4$ & $d{=}8$ & $d{=}16$ & $d{=}32$ & $d{=}64$ & $d{=}128$ & $d{=}256$ & $d{=}512$ & $d{=}1024$ \\                                
  \midrule                                                                                                                                    
  50000 & 1.2 & 2.0 & 4.1 & 8.5 & \textbf{10.5} & \textbf{13.6} & \textbf{32.4} & OOT & OOT \\                                                  
  40000 & 1.1 & 1.7 & 3.6 & 6.7 & 8.9 & \textbf{13.5} & \textbf{30.9} & OOT & OOT \\                                                            
  30000 & 1.4 & 2.3 & 4.7 & 8.6 & \textbf{10.3} & \textbf{12.0} & \textbf{31.4} & OOT & OOT \\                                                  
  20000 & 1.5 & 2.6 & 4.1 & 9.7 & 8.5 & \textbf{11.5} & \textbf{33.3} & OOT & OOT \\                                                   
  10000 & 2.8 & 4.3 & 5.6 & \textbf{17.1} & \textbf{12.3} & \textbf{10.3} & \textbf{32.8} & \textbf{161.4} & \textbf{212.3} \\                
  5000  & 3.8 & 4.9 & 6.0 & \textbf{17.6} & \textbf{15.0} & \textbf{13.5} & \textbf{40.9} & \textbf{136.7} & \textbf{188.6} \\                
  \bottomrule                                                                                                                                 
  \end{tabular}                                                                                                                               
  \end{table}

    \begin{table}[h]                                                                                                                            
  \centering                                                                                                                                  
  \caption{Speedup of FlashSinkhorn over Tensorized (forward pass only). Tensorized OOMs at $n \geq 30000$. Values $<1$ indicate Tensorized is faster. Bold indicates at least 10x speed up.}                                                                                                                               
  \label{tab:speedup-tensorized}                                                                                                              
  \small                                                                                                                                      
  \begin{tabular}{r|ccccccccc}                                                                                                                
  \toprule                                                                                                                                    
  $n$ & $d{=}4$ & $d{=}8$ & $d{=}16$ & $d{=}32$ & $d{=}64$ & $d{=}128$ & $d{=}256$ & $d{=}512$ & $d{=}1024$ \\                                
  \midrule                                                                                                                                    
  20000 & 9.9 & 9.9 & \textbf{12.3} & \textbf{10.5} & 8.1 & 4.6 & 3.0 & 1.7 & 0.7 \\             
  10000 & 7.4 & 7.6 & 7.6 & 9.1 & 6.6 & 3.2 & 2.1 & 1.7 & 0.7 \\                        
  5000  & 3.8 & 4.5 & 4.8 & 5.4 & 3.3 & 2.6 & 1.6 & 1.4 & 0.5 \\                                                                     
  \bottomrule                                                                                                                                 
  \end{tabular}                                                                                                                               
  \end{table}

  \begin{table}[h]                                                                                                                            
  \centering                                                                                                                                  
  \caption{Speedup of FlashSinkhorn over Tensorized (forward+backward, averaged over 30 runs). Tensorized OOMs at $n \geq 30{,}000$. Values   
  $<1$ indicate Tensorized is faster.}                                                                                                        
  \label{tab:backward-speedup-tensorized}                                                                                                     
  \small                                                                                                                                      
  \begin{tabular}{r|ccccccccc}                                                                                                                
  \toprule                                                                                                                                    
  $n$ & $d{=}4$ & $d{=}8$ & $d{=}16$ & $d{=}32$ & $d{=}64$ & $d{=}128$ & $d{=}256$ & $d{=}512$ & $d{=}1024$ \\                                
  \midrule                                                                                                                                    
  20000 & \textbf{10.6} & \textbf{10.7} & \textbf{12.8} & \textbf{11.5} & 7.9 & 4.9 & 3.1 & 1.4 & 0.9 \\           
  10000 & 8.1 & 8.3 & 8.3 & \textbf{10.0} & 5.6 & 3.5 & 2.2 & 1.7 & 0.7 \\                       
  5000  & 4.5 & 4.3 & 4.2 & 5.3 & 3.6 & 2.5 & 1.9 & 1.4 & 0.5 \\                                                                     
  \bottomrule                                                                                                                                 
  \end{tabular}                                                                                                                               
  \end{table}                                                                                                                                 
 \Cref{tab:speedup-keops,tab:backward-speedup-keops}  show that KeOps speedup scales with $d$: At small $d$ ($\le 16$), KeOps' lazy evaluation is competitive (1-7×). At large $d$ ($\ge 64$), FlashSinkhorn's fused     kernels dominate (9-47×). Peak speedup occurs at $d=512$ where FlashSinkhorn achieves 32-47× speedup.  KeOps backward is extremely slow: FlashSinkhorn achieves 137-212× speedup at $d$=512-1024 because KeOps must recompute the kernel matrix   
  for gradient computation, while FlashSinkhorn uses custom analytical gradient kernels.             
  
\Cref{tab:speedup-tensorized,tab:backward-speedup-tensorized} show that FlashSinkhorn beats Tensorized at small-to-medium d: At $d\le256$, FlashSinkhorn is 2-12× faster because Tensorized's $O(n^2)$ cost matrix      
  allocation dominates.  Tensorized wins at $d=1024$: FlashSinkhorn is 0.5-0.7× slower because the on-the-fly recomputation overhead exceeds Tensorized's memory   
  bandwidth cost when $d$ is very large. The real advantage is memory: Even when Tensorized is faster ($d=1024$), it uses 39-71× more memory and OOMs at $n\ge 30$k. FlashSinkhorn trades
   some speed at large $d$ for dramatically better scalability.
 
We profile FlashSinkhorn, GeomLoss KeOps, and GeomLoss Tensorized using NVIDIA Nsight Compute.
Table~\ref{tab:ncu_full_fwd} reports the \textbf{forward} pass ($n{=}m{=}10000$, $d{=}64$, 10 Sinkhorn iterations, A100-80GB). Table~\ref{tab:ncu_full_bwd} reports the \textbf{forward+backward} pass ($n{=}m{=}10000$, $d{=}128$, 10 Sinkhorn iterations, A100-80GB). 
They reveal fundamentally different execution regimes.  

    \textbf{Forward: Tensorized is memory-bound despite high SM utilization.} Tensorized achieves 98\% SM utilization and 89\% occupancy—metrics that  would typically indicate efficient GPU usage. However, 79\% of cycles are spent stalled on memory. The root cause is the $O(n^2)$ memory footprint: Tensorized       transfers 98~GB through HBM, repeatedly reading and writing the $n \times n$ cost matrix across iterations. High SM utilization here is misleading, the SMs are active but stalled, waiting for data.  

      \textbf{Forward: FlashSinkhorn and KeOps are compute-bound, but FlashSinkhorn is 3$\times$ more efficient.} Both online methods shift the bottleneck from memory to compute. However, FlashSinkhorn executes 2.3$\times$ fewer instructions (7B vs 16B) due to fused
   kernels that perform distance computation, log-sum-exp reduction, and potential updates in a single pass. Two further counters (\Cref{tab:ncu_per_kernel}) account for the remaining gap: FlashSinkhorn issues $6.6\times$ fewer kernel launches (130 vs.\ 854), avoiding dispatch overhead and intermediate global-memory traffic between KeOps's 96 \texttt{GpuConv1D} reductions and 758 elementwise auxiliaries, and routes $2.9\times$ more compute through the tensor pipeline (10.1\,M vs.\ 3.5\,M tensor-pipe instructions) by rewriting the squared-Euclidean interaction as tiled \texttt{tl.dot} dot products instead of CUDA-core elementwise operations. The counters are complementary rather than multiplicative --- instruction count reflects total work, SM utilization and tensor-pipe activity reflect compute efficiency, and launch count reflects fragmentation --- jointly explaining the $15.3\times$ KeOps gap.

    \textbf{Forward: FlashSinkhorn trades occupancy for fusion.} FlashSinkhorn uses 255 registers per thread—the GPU maximum—enabling complex fused      operations within a single kernel launch. This limits occupancy to 11\% (vs Tensorized's 89\%), but the reduced memory traffic more than  compensates. FlashSinkhorn transfers only 0.08~GB through HBM, a \textbf{1225$\times$ reduction} versus Tensorized, by streaming intermediate
   results through 53~GB of shared memory. Memory stalls drop from 79\% to just 3\%. Streaming methods show zero HBM write because the $O(nd)$ working set fits entirely in L2 cache, whereas Tensorized must    
  write 39GB to materialize the cost matrix.

 \textbf{Backward: The memory bottleneck intensifies for Tensorized.}    The backward pass requires computing gradients with respect to both point clouds, amplifying Tensorized's memory problem. HBM traffic reaches 109 GB for forward+backward at $d=128$, with 68\% of cycles stalled on memory.  In contrast, FlashSinkhorn transfers only 0.25~GB through HBM, a 444$\times$ reduction, by fusing the gradient   computation into a single streaming kernel that recomputes the transport matrix on-the-fly rather than materializing it.   

\textbf{Backward: FlashSinkhorn achieves 10$\times$ speedup over KeOps. } Both online methods are compute-bound, yet FlashSinkhorn is \textbf{10.3$\times$ faster} (19.2~ms vs 197.0~ms). The key difference is  instruction efficiency: FlashSinkhorn executes 1.9$\times$ fewer instructions (18B vs 35B) through kernel fusion.   KeOps suffers from extremely low occupancy (5\%) due to its 255 registers per thread and fine-grained kernel launch patterns, preventing the GPU from hiding latency across warps. FlashSinkhorn uses the same register count but achieves 2$\times$ higher occupancy (11\%) through aggressive kernel fusion, demonstrating that even compute-bound kernels benefit dramatically from reduced instruction count and improved warp-level parallelism.  

\textbf{OTT-JAX profiling limitations.} We exclude OTT-JAX from NCU profiling because XLA's kernel fusion produces opaque CUDA kernels that 
  conflict with NCU's replay mechanism, returning incomplete metrics. Wall-clock benchmarks show FlashSinkhorn achieves 2-4$\times$ speedup  
  over OTT-JAX.

\subsubsection{FlashSinkhorn vs OTT-JAX: Performance Analysis}                                                                               
  \label{app:flashsinkhorn-vs-ottjax}                                                             Both FlashSinkhorn and OTT-JAX implement online Sinkhorn with $O(nd)$ memory complexity,               
  never materializing the $n \times m$ cost matrix. We analyze why FlashSinkhorn achieves  1.1--5.1$\times$ speedup despite identical algorithmic complexity.          
                         
  Table~\ref{tab:speedup-grid} and \ref{tab:backward-speedup-jax} report the ratio of OTT-JAX to FlashSinkhorn wall-clock time on an NVIDIA A100-80GB GPU. Both methods use identical convergence thresholds and iteration counts.                                         
         
  \begin{table}[t]    
  \centering        
  \caption{Speedup of FlashSinkhorn over OTT-JAX (forward pass only, averaged over 50 runs).}                                                  
  \label{tab:speedup-grid}                                                                                                                    
  \begin{tabular}{r|ccccccccc}                                                                                                                
  \toprule                                                                                                                                    
  $n$ & $d{=}4$ & $d{=}8$ & $d{=}16$ & $d{=}32$ & $d{=}64$ & $d{=}128$ & $d{=}256$ & $d{=}512$ & $d{=}1024$ \\                                
  \midrule                                                                                                                                    
  50000 & 3.9 & 3.9 & 4.5 & 5.1 & 4.2 & 3.4 & 2.3 & 2.0 & 1.5 \\                                                                     
  40000 & 3.7 & 3.7 & 4.2 & 4.4 & 3.6 & 3.0 & 2.2 & 1.6 & 1.4 \\                                                                              
  30000 & 2.8 & 2.7 & 3.1 & 3.4 & 2.8 & 2.3 & 1.7 & 1.3 & 1.2 \\                                                                              
  20000 & 3.2 & 3.3 & 3.6 & 4.3 & 3.6 & 3.0 & 2.4 & 1.6 & 1.2 \\                                                                              
  10000 & 2.4 & 2.5 & 2.8 & 3.2 & 2.5 & 2.0 & 1.6 & 1.3 & 0.8 \\                                                                              
  5000  & 1.7 & 1.7 & 1.8 & 1.9 & 1.9 & 1.4 & 1.1 & 0.9 & 0.6 \\                                                                              
  \bottomrule                                                                                                                                 
  \end{tabular}                                                                                                                               
  \end{table}

  \begin{table}[t]                                                                                                                            
  \centering                                                                                                                                  
  \caption{Speedup of FlashSinkhorn over OTT-JAX (forward+backward, averaged over 30 runs).}                                                  
  \label{tab:backward-speedup-jax}                                                                                                            
  \small                                                                                                                                      
  \begin{tabular}{r|ccccccccc}                                                                                                                
  \toprule                                                                                                                                    
  $n$ & $d{=}4$ & $d{=}8$ & $d{=}16$ & $d{=}32$ & $d{=}64$ & $d{=}128$ & $d{=}256$ & $d{=}512$ & $d{=}1024$ \\                                
  \midrule                                                                                                                                    
  50000 & 3.8 & 3.9 & 4.9 & 5.0 & 4.0 & 3.0 & 2.3 & 1.6 & 1.0 \\                                                            
  40000 & 4.1 & 4.2 & 5.3 & 5.2 & 4.2 & 3.1 & 2.2 & 1.6 & 1.2 \\                                                            
  30000 & 3.9 & 4.0 & 5.1 & 5.2 & 3.7 & 2.7 & 2.2 & 1.7 & 1.0 \\                                                            
  20000 & 3.7 & 3.9 & 4.8 & 4.7 & 3.6 & 2.7 & 2.2 & 1.8 & 1.1 \\                                                            
  10000 & 3.4 & 3.7 & 3.6 & 4.8 & 2.9 & 2.1 & 1.6 & 1.8 & 1.1 \\                                                                     
  5000  & 2.9 & 2.6 & 2.7 & 3.6 & 2.7 & 1.9 & 1.7 & 1.7 & 0.9 \\                                                                              
  \bottomrule                                                                                                                                 
  \end{tabular}                                                                                                                               
  \end{table}    
                 
  The speedup ratio can be modeled as:        
$\text{Speedup}(n, d) \approx \frac{\alpha(n)}{\beta(d)}$, 
  where $\alpha(n)$ captures FlashSinkhorn's advantage and $\beta(d)$ captures OTT-JAX's advantage.                                          
  \paragraph{Factor 1: Kernel fusion ($\alpha$ increases with $n$).} 
  FlashSinkhorn's Triton kernels fuse distance computation, log-sum-exp, and potential updates into a single GPU kernel. This eliminates intermediate memory writes and enables:  online softmax accumulation without materializing the $n \times m$ logits matrix;  coalesced memory access patterns optimized for streaming;  reduced kernel launch overhead (one launch vs.\ multiple XLA kernels).          
  These advantages scale with $n$ due to better GPU occupancy and amortized overhead.                   
  \paragraph{Factor 2: cuBLAS efficiency ($\beta$ increases with $d$).}                                                                       
  OTT-JAX computes pairwise distances via matrix multiplication.  The dominant $XY^\top$      
  term is dispatched to cuBLAS GEMM, which achieves near-peak TFLOPS for large inner dimension $d$ and benefits from highly optimized tiling and register allocation.                                                                          
  The speedup peaks at $d=32$ due to GPU hardware alignment: the warp size is 32 threads, and Triton's default \texttt{BLOCK\_K}${}=32$ or 64. At $d=32$, the inner reduction loop   perfectly fills one warp, maximizing parallelism.                                                                         
  The breakeven line (speedup $\approx 1$) follows approximately $n \propto d^2$.    

  \subsubsection{HVP: Parity and Performance Analysis}
\label{app:hvp-analysis}
\paragraph{HVP Parity.} We first validate the streaming HVP (damped Schur solve + CG tolerance \(\eta\)) against a dense
reference \(G\) computed with the Moore--Penrose pseudoinverse. We report the relative error
\(\|G_{\tau,\eta}-G_\star\|_F/\|G_\star\|_F\).

We test the configurations  \(n=m=512\), \(d=4\), \(X,Y\sim\mathcal{N}(0,I)\), \(\ba,\bb\) are random simplex weights, and \(\eps\in\{0.1,0.25,0.5\}\). We form \(P^*\) from converged Sinkhorn potentials.
The reference uses an eigendecomposition-based pseudoinverse (threshold \(10^{-10}\)). The streaming implementation solves the damped Schur system \(S_\tau \bw_2=\mathrm{rhs}\) with
\(S_\tau=S+\tau I\) and CG tolerance \(\eta\). From \Cref{tab:hvp_parity_summary}, small \(\tau\) yields high-fidelity parity; tightening \(\eta\) improves accuracy until reaching a
\(\tau\)-dependent bias floor.

\begin{table}[t]
\centering
\caption{Parity summary: best achievable error (no damping, tight CG) and error at the default setting.}
\label{tab:hvp_parity_summary}
\small
\setlength{\tabcolsep}{5pt}
\begin{tabular}{lccc}
\toprule
\(\eps\) & \(\tau=0,\ \eta=10^{-7}\) & \(\tau=10^{-7},\ \eta=10^{-7}\) & default \(\tau=10^{-5},\ \eta=10^{-6}\) \\
\midrule
0.10 & \(1.20\times 10^{-5}\) & \(5.02\times 10^{-5}\) & \(4.59\times 10^{-3}\) \\
0.25 & \(8.33\times 10^{-6}\) & \(4.39\times 10^{-5}\) & \(4.24\times 10^{-3}\) \\
0.50 & \(6.74\times 10^{-6}\) & \(5.08\times 10^{-5}\) & \(4.89\times 10^{-3}\) \\
\bottomrule
\end{tabular}
\vspace{-1mm}
\end{table}

For high-precision verification, use $\tau = 10^{-7}$ with $\eta = 10^{-7}$. For standard optimization where $\sim 0.5\%$ error is acceptable, the default $\tau = 10^{-5}$ with $\eta = 10^{-6}$ provides a good balance between numerical stability and accuracy.

 \paragraph{Performance} 
    FlashSinkhorn's streaming HVP kernel achieves consistent speedups over both OTT-JAX and OTT-PyTorch/KeOps baselines, with gains increasing at
   higher feature dimensions.  At moderate dimensions ($d=64$), FlashSinkhorn is 4--6$\times$ faster than both baselines. At high dimensions ($d=128$), the gap widens: FlashSinkhorn achieves 7$\times$ speedup over OTT-JAX and 17--27$\times$ over 
  KeOps.  At $d=256$, FlashSinkhorn reaches 10--12$\times$ over JAX and 35--52$\times$ over KeOps at moderate problem sizes ($n \leq 
  7000$); beyond this, the baselines exceed the 10-minute time limit.  

  FlashSinkhorn's fused Triton kernels compute the HVP without materializing the $n \times n$ transport matrix, achieving O$(nd)$ memory         
  complexity. As shown in Figure~\ref{fig:hvp-memory-appendix}, peak memory scales linearly with problem size: from 30~MB at $n=5{,}000$ to just 219~MB at          
  $n=50000$ (with $d=64$).   
\begin{figure}[h]   
      \centering    \includegraphics[width=0.5\linewidth]{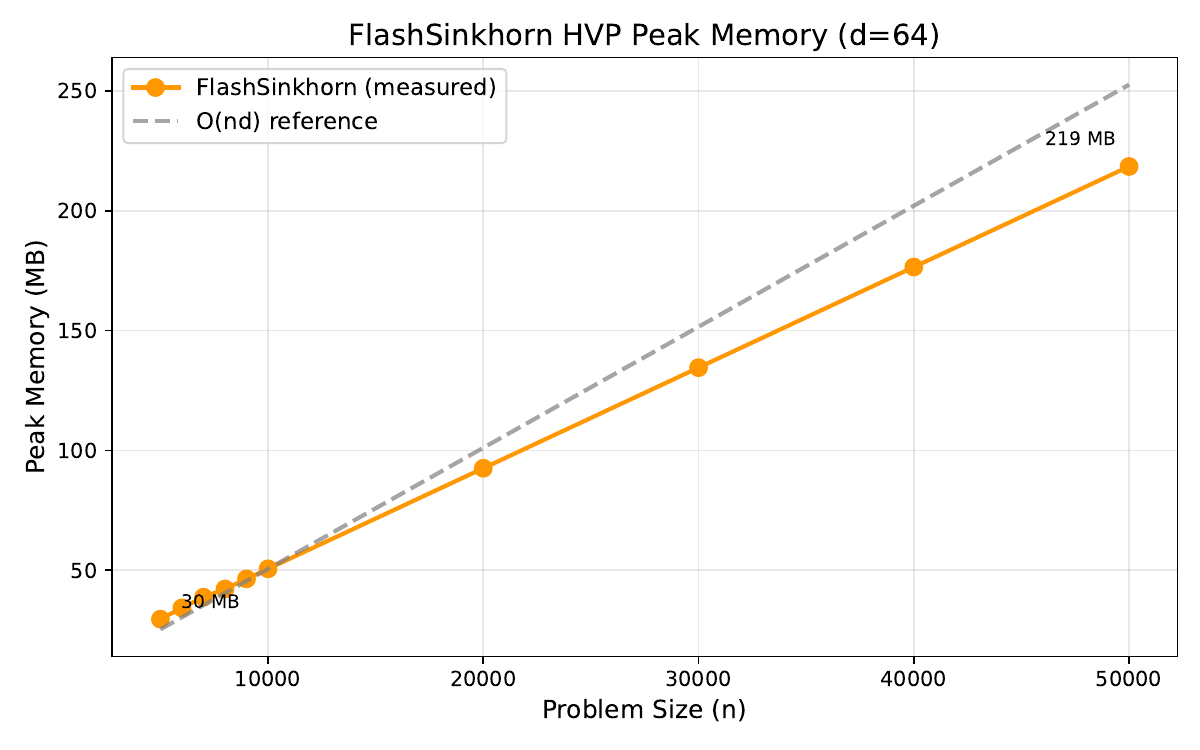} 
      \caption{FlashSinkhorn HVP peak memory vs.\ problem size ($d=64$). 
      Linear scaling confirms O$(nd)$ memory complexity.}  
      \label{fig:hvp-memory-appendix}              
  \end{figure}

  \begin{table}[t]                                                                                                                            
  \centering                                                                                                                                  
  \caption{Speedup of FlashSinkhorn HVP over OTT-JAX Hessian (50 CG iterations, averaged over 50 runs). OOT indicates exceeded time limit of 10 mins.}         
  \label{tab:hvp-speedup-jax}                                                                                                                 
  \begin{tabular}{r|cccccccc}                                                                                                                 
  \toprule                                                                                                                                    
  $n$ & $d{=}4$ & $d{=}8$ & $d{=}16$ & $d{=}32$ & $d{=}64$ & $d{=}128$ & $d{=}256$ & $d{=}512$ \\                                             
  \midrule                                                                                                                                    
  10000 & 1.5 & 1.6 & 2.2 & 3.6 & 4.6 & 7.0 & OOT & OOT \\                                                                           
  9000  & 1.4 & 1.5 & 2.1 & 3.4 & 4.2 & 6.5 & OOT & OOT \\                                                                                    
  8000  & 1.2 & 1.3 & 2.0 & 3.1 & 3.9 & 6.0 & OOT & OOT \\                                                                                    
  7000  & 1.2 & 1.3 & 1.9 & 2.8 & 3.6 & 5.5 & 9.9 & OOT \\                                                                                    
  6000  & 1.1 & 1.2 & 1.7 & 2.6 & 4.5 & 7.0 & \textbf{12.5} & OOT \\                                                                          
  5000  & 1.0 & 1.0 & 1.5 & 2.2 & 3.7 & 5.7 & \textbf{10.3} & OOT \\                                                                          
  \bottomrule                                                                                                                                 
  \end{tabular}                                                                                                                               
  \end{table}       

  \begin{table}[t]                                                                                                                            
  \centering                                                                                                                                  
  \caption{Speedup of FlashSinkhorn HVP over OTT-PyTorch/KeOps (50 CG iterations, averaged over 50 runs). OOT indicates exceeded time limit of 10 mins.}       
  \label{tab:hvp-speedup-keops}                                                                                                               
  \begin{tabular}{r|cccccccc}                                                                                                                 
  \toprule                                                                                                                                    
  $n$ & $d{=}4$ & $d{=}8$ & $d{=}16$ & $d{=}32$ & $d{=}64$ & $d{=}128$ & $d{=}256$ & $d{=}512$ \\                                             
  \midrule                                                                                                                                    
  10000 & 0.7 & 1.0 & 2.0 & 3.8 & 4.0 & \textbf{17.0} & OOT & OOT \\                                                                          
  9000  & 0.7 & 1.0 & 2.0 & 3.9 & 4.1 & \textbf{17.2} & OOT & OOT \\                                                                          
  8000  & 0.8 & 1.0 & 2.1 & 3.9 & 4.2 & \textbf{17.7} & OOT & OOT \\                                                                          
  7000  & 0.8 & 1.1 & 2.2 & 4.0 & 4.2 & \textbf{18.0} & \textbf{35.6} & OOT \\                                                                
  6000  & 0.9 & 1.2 & 2.3 & 4.1 & 6.2 & \textbf{26.7} & \textbf{52.2} & OOT \\                                                                
  5000  & 1.0 & 1.2 & 2.3 & 4.1 & 6.1 & \textbf{25.3} & \textbf{45.4} & OOT \\                                                                
  \bottomrule                                                                                                                                 
  \end{tabular}                                                                                                                               
  \end{table}

  \subsubsection{Symmetric vs Alternating Sinkhorn: Performance Analysis}
  We compare two update strategies for the Sinkhorn iteration:
\begin{itemize}
    \item \textbf{Symmetric (GeomLoss-style):} A single fused kernel computes both $\mf$ and $\mg$ updates per iteration with symmetric averaging.
    \item \textbf{Alternating (OTT-style):} Two separate kernels update $\mf$ then $\mg$ sequentially.
\end{itemize}
We profile both strategies at $n{=}10000$ with $d=1024$ on an NVIDIA A100 using NVIDIA Nsight Compute (NCU). Table~\ref{tab:kernel-metrics} reports kernel-level metrics.
\begin{table}[t]
\centering
\caption{NCU profiling of symmetric vs.\ alternating kernels ($n$=$m$=10,000, $d$=1024, 10 iterations, A100-80GB).}
\label{tab:kernel-metrics}
\footnotesize
  \begin{tabular}{|l|c|c|c|c|}                                                                                                                
  \hline                                                                                                                                      
   & Symmetric & Alternating\ $\hat{\mathbf{f}}$ & Alternating\ $\hat{\mathbf{g}}$ & Alternating\ total \\                                                              
  \hline                                                                                                                                      
  Occupancy (\%) & 8.3 & 6.2 & \textbf{12.5} & 9.4 (avg) \\                                                                                   
  Mem.\ stalls (\%) & 3.0 & 5.4 & \textbf{0.3} & 3.2 (avg) \\                                                                                 
  Runtime (ms) & 1531 & 758 & 558 & \textbf{1316} \\                                                                                          
  \hline                                                                                                                                      
  \end{tabular}  
\vspace{-4mm}
\end{table}
  Both strategies saturate the register file at 255 registers per thread, yielding identical theoretical occupancy of 12.5\%. However, achieved occupancy and memory stall rates differ  across kernels. The symmetric kernel achieves 8.3\% occupancy with 3.0\% memory stalls. For alternating, the $\hat{\mf}$-update kernel performs worse (6.2\% occupancy, 5.4\% stalls), but the $\hat{\mg}$-update kernel achieves full theoretical occupancy (12.5\%) with near-zero stalls (0.3\%). This efficient $\hat{\mg}$ kernel, which uses a larger block size (256 vs.\ 128) and exploits a more streaming-friendly memory access pattern, compensates for the weaker $\hat{\mathbf{f}}$ kernel. Despite launching  twice as many kernels (80 vs.\ 48), alternating is 1.16$\times$ faster at $n{=}10{,}000$.  

Table~\ref{tab:crossover} shows wall-clock performance across problem sizes. The crossover occurs around $n \approx 15000$ at $d=1024$.

\begin{table}[t]
\centering
\caption{Wall-clock comparison of symmetric vs.\ alternating (10 iterations). Ratio $>$1 favors alternating.}
\label{tab:crossover}
\footnotesize
\begin{tabular}{|c|c|c|c|c|c|}
\hline
$d$ & $n$ & Symmetric\ (ms) & Alternating\ (ms) & Ratio & Winner \\
\hline
\multirow{3}{*}{64} & 10k & 14.0 & 17.2 & 0.81 & Sym. \\
 & 20k & 37.4 & 39.6 & 0.94 & Sym. \\
 & 50k & 188.1 & 192.2 & 0.98 & Sym. \\
\hline
\multirow{3}{*}{1024} & 10k & 103.7 & 134.6 & 0.77 & Sym. \\
 & 20k & 377.5 & 303.7 & \textbf{1.24} & Alt. \\
 & 50k & 2248.2 & 1647.0 & \textbf{1.36} & Alt. \\
\hline
\end{tabular}
\vspace{-4mm}
\end{table}
The crossover is governed by two competing factors:
\begin{itemize}
    \item \textbf{Kernel launch overhead:} Alternating requires $2\times$ more kernel launches. At small $n$, this overhead dominates total runtime.
    \item \textbf{Memory throughput:} The alternating  kernel sustains near-zero memory stalls (0.3\%) due to a more streaming-friendly access pattern.
\end{itemize}
At small $n$ where kernel time is modest ($<$20\,ms), launch overhead dominates and symmetric wins. At large $n$ where kernel time exceeds hundreds of milliseconds, the throughput advantage of outweighs the launch overhead, and alternating wins by up to 1.36$\times$.

\subsubsection{Low-$\varepsilon$ regime}
\label{app:loweps}

We extend the synthetic benchmarks to smaller regularization strengths $\varepsilon \in \{0.10, 0.05, 0.01\}$, reporting forward timing, fp32 precision, iteration budget, and HVP parity.

\paragraph{Forward time across $\varepsilon$.}
\Cref{tab:lowepsfwd} reports the 10-iteration forward time at $\varepsilon \in \{0.10, 0.05, 0.01\}$ ($n{=}m{=}10000$, $d{=}64$, TF32, A100-80GB): FlashSinkhorn stays within ${\sim}5\%$ of itself across the range, yielding $16$--$17\times$ speedup over KeOps and ${\sim}4\times$ over OTT-JAX.

\begin{table}[t]
\centering
\caption{Forward time in ms at low $\varepsilon$ ($n{=}m{=}10000$, $d{=}64$, 10 iterations, TF32, A100-80GB). Parenthetical values give speedup of FlashSinkhorn over the corresponding baseline.}
\label{tab:lowepsfwd}
\small
\setlength{\tabcolsep}{4pt}
\begin{tabular}{|c|c|cc|}
\hline
$\varepsilon$ & Flash & KeOps & OTT-JAX \\
\hline
0.10 & \textbf{7.75} & 125.37 ($16.2\times$) & 31.82 ($4.1\times$) \\
0.05 & \textbf{7.81} & 125.31 ($16.0\times$) & 33.24 ($4.3\times$) \\
0.01 & \textbf{7.60} & 125.27 ($16.5\times$) & 31.46 ($4.1\times$) \\
\hline
\end{tabular}
\end{table}

\paragraph{fp32 stability vs fp64 reference.}
At fixed iteration count, FlashSinkhorn in fp32 matches a pure-PyTorch dense fp64 Sinkhorn implementation to within $\sim$0.1\% relative error even at $\varepsilon{=}0.01$ (\Cref{tab:lowepsprecision}). The online max-subtraction (\Cref{alg:flash_sinkhorn_hatf}, lines 10--13) keeps each tile's softmax numerically safe: a local row-max is computed per tile, the running global max rescales accumulated statistics, so no tile ever sees unshifted exponents. The remaining error at $\varepsilon{=}0.01$ comes from fp32 accumulation across many iterations.

\begin{table}[t]
\centering
\caption{Numerical precision: FlashSinkhorn fp32 vs.\ pure-PyTorch dense fp64 reference at fixed iteration count ($n{=}m{=}10000$, $d{=}64$, A100-80GB).}
\label{tab:lowepsprecision}
\small
\begin{tabular}{|c|cc|c|}
\hline
$\varepsilon$ & OT value (fp32) & OT value (fp64) & rel.\ err.\ (fp32 vs fp64) \\
\hline
0.10 & 72.99633  & 72.999261 & $4.02{\times}10^{-5}$ \\
0.05 & 72.571186 & 72.574516 & $4.59{\times}10^{-5}$ \\
0.01 & 72.173378 & 72.22889  & $7.69{\times}10^{-4}$ \\
\hline
\end{tabular}
\end{table}

\paragraph{Iteration budget at fixed convergence.}
Per-iteration cost is essentially $\varepsilon$-independent ($\sim$3.81\,ms in strict fp32 at $n{=}m{=}10000$, $d{=}64$), but the number of iterations needed for convergence grows as $\varepsilon$ shrinks. Total solve time therefore scales with the iteration budget rather than the per-iteration cost.

\begin{table}[t]
\centering
\caption{Iteration budget to reach convergence at low $\varepsilon$ (FlashSinkhorn, strict fp32, $n{=}m{=}10000$, $d{=}64$, A100-80GB).}
\label{tab:lowepsiters}
\small
\begin{tabular}{|c|ccc|}
\hline
$\varepsilon$ & iterations & total time & ms / iter \\
\hline
0.10 & 2000 & 7.6\,s  & 3.82 \\
0.05 & 4000 & 15.2\,s & 3.81 \\
0.01 & 5000 & 19.1\,s & 3.81 \\
\hline
\end{tabular}
\end{table}

\paragraph{HVP parity at low $\varepsilon$.}
The Schur complement underlying our HVP becomes ill-conditioned as $\varepsilon \to 0$: the smallest positive eigenvalue of $H^*$ shrinks as $O(e^{-1/\varepsilon})$~\citep{li2025robust}. We extend the dense ground-truth parity test to $\varepsilon{=}0.01$ ($n{=}m{=}512$, $d{=}4$, dense Moore--Penrose reference). With our default Tikhonov regularization $\tau{=}10^{-5}$, CG converges at all tested $\varepsilon$ with $<$1\% relative error even at $\varepsilon{=}0.01$ (\Cref{tab:lowepshvp}). CG requires more iterations as $\varepsilon$ shrinks ($78$ at $\varepsilon{=}0.10$, $195$ at $\varepsilon{=}0.01$). The error floor is dominated by $\tau$; tightening it (e.g., $\tau{=}10^{-6}$, $\eta{=}10^{-5}$) gives marginally lower error at the same CG cost. Improving the conditioning through preconditioning is an active research direction.

\begin{table}[t]
\centering
\caption{HVP parity at low $\varepsilon$ vs.\ dense Moore--Penrose ground truth ($n{=}m{=}512$, $d{=}4$, balanced; $\tau$ is the Schur--complement Tikhonov damping; $\eta$ is the CG residual tolerance).}
\label{tab:lowepshvp}
\small
\begin{tabular}{|c|cc|cc|c|}
\hline
$\varepsilon$ & $\tau$ & $\eta$ & HVP rel.\ err. & CG iters & converged \\
\hline
0.10 & $10^{-5}$ & $10^{-6}$ & $4.75{\times}10^{-3}$ & 78  & Y \\
0.05 & $10^{-5}$ & $10^{-6}$ & $4.35{\times}10^{-3}$ & 83  & Y \\
0.01 & $10^{-5}$ & $10^{-6}$ & $8.54{\times}10^{-3}$ & 195 & Y \\
0.01 & $10^{-6}$ & $10^{-5}$ & $7.69{\times}10^{-3}$ & 195 & Y \\
\hline
\end{tabular}
\end{table}

\subsubsection{Rectangular $n \neq m$ regime}
\label{app:rectangular}

The streaming kernel handles rectangular shapes natively without algorithmic changes; our main experiments focus on $n \approx m$ because that is the dominant regime in dataset-comparison workloads (OTDD, shuffled regression). \Cref{tab:rectangular} reports forward times on rectangular pairs at $d{=}128$, $\varepsilon{=}0.1$, 10 iterations, TF32 (A100-80GB). Relative to the square case, the FlashSinkhorn-over-KeOps speedup remains large but degrades as the aspect ratio becomes extreme: $13.3\times$ at $10\text{k}\times 10\text{k}$, $11.1\times$ at $1\text{k}\times 10\text{k}$, and $8.3\times$ at $500\times 50\text{k}$.

At moderate aspect ratios ($10\times$), runtime drops substantially relative to the square case, but not in proportion to $nm$ because fixed overheads remain. At extreme ratios ($100\times$), parallelism along the short dimension becomes limited: with $n{=}500$ and a row-block size of $B_N{=}64$, only $\sim 8$ row blocks are exposed, which underutilizes the GPU. FlashAttention-2-style work partitioning along the long dimension~\citep{dao2023flashattention2} is a natural direction to recover throughput in this regime.

\begin{table}[t]
\centering
\caption{Forward time on rectangular point clouds ($d{=}128$, $\varepsilon{=}0.1$, 10 iterations, TF32, A100-80GB). Parenthetical values give the FlashSinkhorn-over-KeOps speedup.}
\label{tab:rectangular}
\small
\setlength{\tabcolsep}{4pt}
\begin{tabular}{|c|c|cc|}
\hline
$n \times m$ & ratio & Flash (ms) & KeOps (ms) \\
\hline
$10\text{k} \times 10\text{k}$ & $1\times$    & \textbf{12.01} & 160.24 ($13.3\times$) \\
$1\text{k}  \times 10\text{k}$ & $10\times$   & \textbf{8.18}  & 91.10  ($11.1\times$) \\
$2\text{k}  \times 20\text{k}$ & $10\times$   & \textbf{18.07} & 184.26 ($10.2\times$) \\
$10\text{k} \times 1\text{k}$  & $10\times$   & \textbf{8.36}  & 91.16  ($10.9\times$) \\
$500        \times 50\text{k}$ & $100\times$  & \textbf{49.14} & 406.65 ($8.3\times$)  \\
\hline
\end{tabular}
\end{table}

\subsection{OTDD Benchmarks}
\label{app:OTDD}
  We evaluate FlashSinkhorn on Optimal Transport Dataset Distance (OTDD)~\cite{alvarez2020geometric}, which measures distances between labeled datasets using a combined feature-label cost.

\begin{table}[h]  
  \centering               
  \caption{\textbf{Method support for OTDD benchmarks.} Label-augmented cost requires custom cost function support:  $C(x_i, y_j) = \lambda_1 \|x_i - y_j\|^2_2 + \lambda_2 W[\ell_i, \ell_j]$.  GeomLoss (KeOps) does not support custom cost functions, limiting it to no-label experiments. Prior methods run out of memory (OOM) beyond $n=20{,}000$ on a 40\,GB A100 GPU.        
  }                       
  \label{tab:otdd-method-support} 
  \begin{tabular}{lcccc}  
  \toprule                
  \textbf{Method} & \textbf{With Labels} & \textbf{Without Labels} & \textbf{Memory} & \textbf{Max $n$} \\ 
  \midrule                    
  FlashSinkhorn          & \cmark & \cmark & $O(nd)$ & $\geq 60{,}000$ \\  
  GeomLoss (KeOps)       & \xmark & \cmark & $O(nd)$ & $ 20{,}000$ \\  
  GeomLoss (Tensorized)  & \cmark & \cmark & $O(n^2)$ & $ 20{,}000$ \\                
  \bottomrule         
  \end{tabular}               
  \end{table}      
OTDD uses a label-augmented cost combining Euclidean feature distance and class-to-class Wasserstein distance:                              
\begin{equation}                                          C(x_i, y_j) =  \lambda_1\|\bx_i - \by_j\|_2^2 + \lambda_2 W[\ell_i, \ell_j]                                      
  \end{equation}   where $\bx_i, \by_j \in \R^d$ are feature embeddings, $\ell_i, \ell_j \in \{1, \ldots, V\}$ are class labels, and $W \in \R^{V  
  \times V}$ is the matrix of class-to-class Wasserstein distances. We use $\lambda_1 = \lambda_2 = 1/2$.   

    For debiased Sinkhorn divergence, $W$ must include both within-dataset and cross-dataset class distances. Given datasets with $V_1$ and     $V_2$ classes respectively, $W$ is constructed as:
    \begin{equation}   
  W = \begin{pmatrix} W_{11} & W_{12} \\ W_{12}^\top & W_{22} \end{pmatrix} \in \mathbb{R}^{(V_1+V_2) \times (V_1+V_2)},     
  \end{equation}                                    where $W_{11}$ contains within-dataset distances for dataset 1, $W_{22}$ for dataset 2, and $W_{12}$ contains cross-dataset distances. For  MNIST and Fashion-MNIST ($V_1 = V_2 = 10$), this yields a $20 \times 20$ matrix (1.6 KB).  

    FlashSinkhorn supports this custom cost by storing $W$ and computing the combined cost on-the-fly within the Triton kernel: (1) Load labels $\ell_i$, $\ell_j$ for the current block; (2) Gather $W[\ell_i, \ell_j]$ via indexed lookup; (3) Combine: $C_{ij} = \lambda_1 \cdot  C_{\text{euclidean}} + \lambda_2  \cdot W[\ell_i, \ell_j]$. This per-iteration lookup incurs at most $\sim$30\% overhead compared to pure Euclidean cost due to register pressure forcing smaller thread blocks (\texttt{BLOCK\_M}=64 vs 128). However, tensorized backends must materialize the full $n \times n$ augmented cost matrix, requiring $O(n^2)$ memory. 
  FlashSinkhorn trades per-iteration overhead for $O(nd + V^2)$ memory. 

  \paragraph{Dataset and Embeddings. }
  We use MNIST as the source dataset and Fashion-MNIST as the target, with $V=10$ classes each.  Images (grayscale, $28 \times 28$) are preprocessed by replicating the single channel to 3-channel RGB (required by ResNet18's    ImageNet-pretrained weights), resizing to $224 \times 224$, and normalizing with ImageNet statistics.  We extract $d=512$-dimensional embeddings from ResNet18's penultimate layer.          Sample sizes range from $n=5000$ to $n=60000$, with entropy regularization $\varepsilon = 0.1$ and debiased Sinkhorn divergence       
  (requiring 3 OT computations per evaluation).     
 We compare against the official OTDD implementation~\citep{alvarez2020geometric}, which uses GeomLoss's tensorized backend.        The OTDD library has a hardcoded \texttt{GPU\_LIMIT=20000}; beyond this threshold it falls back to CPU computation. For fair comparison at $n \leq 20000$:   (1) both methods use the same pre-computed label cost matrix $W$; (2) timing measures only the outer OT computation, excluding $W$ computation; and    (3) we set \texttt{maxsamples=n} to ensure no subsampling (OTDD defaults to 10000).   

  \paragraph{Gradient Flow Setup}
    We evaluate gradient-based dataset adaptation, where the source dataset is moved toward the target distribution by following the Sinkhorn gradient:                                      
  \begin{equation}          
  X^{(t+1)} = X^{(t)} - \eta \cdot \nabla_X S_\varepsilon(X^{(t)}, Y).   
  \end{equation}   
  
We run 20 optimization steps with learning rate $\eta = 0.1$ using the OTDD label-augmented cost. Each gradient flow step consists of three phases: 
(1) a \emph{forward pass} that computes the debiased Sinkhorn divergence $S_\varepsilon(X, Y)$ via three Sinkhorn solves; (2)  a \emph{backward pass} that computes $\nabla_X S_\varepsilon$ using our custom Triton gradient kernel; and  (3) an \emph{update} $X \leftarrow X - \eta \cdot \nabla_X S_\varepsilon$.  We report both total wall-clock time and per-step time (ms/step).  

  \paragraph{Additional Experiment.}
  To isolate the kernel performance from label cost overhead, we benchmark pure Euclidean cost (no labels) using the same MNIST$\leftrightarrow$Fashion-MNIST 
   dataset. This enables comparison with GeomLoss's online backend (KeOps), which only supports Euclidean cost. The results are summarized in \Cref{fig:no-label-benchmark}.   
  \begin{figure}[t]    
      \centering                      
      \includegraphics[width=\textwidth]{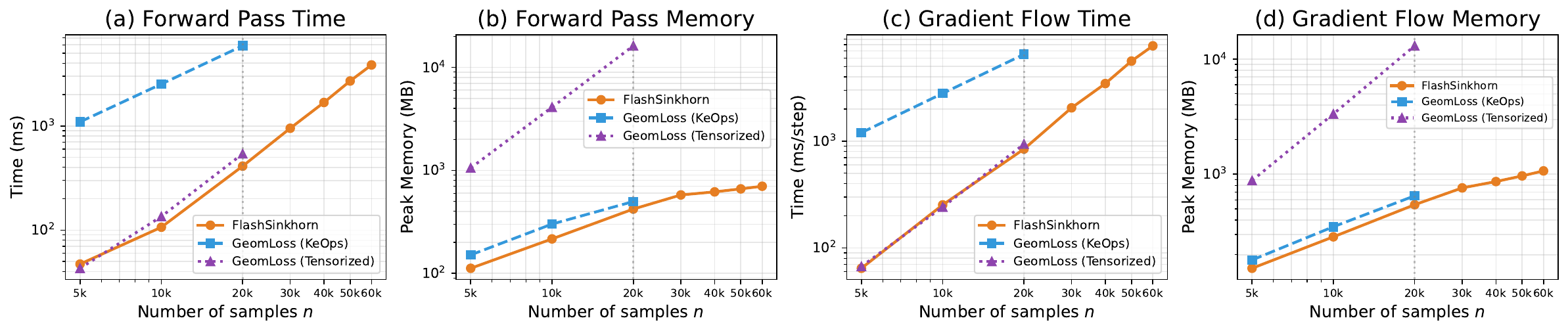} 
      \caption{       
          \textbf{No-label Sinkhorn divergence benchmark.} 
          Comparison of FlashSinkhorn against GeomLoss with KeOps (online) and Tensorized backends                    
          on MNIST ($d=784$, debiased Sinkhorn, $\varepsilon=0.1$).
          \textbf{(a,b)} Forward pass: FlashSinkhorn matches Tensorized speed while using 38$\times$ less memory.   
          KeOps is 14-26$\times$ slower. 
          \textbf{(c,d)} Gradient flow (forward + backward per step): FlashSinkhorn scales to $n=60000$               
          while KeOps and Tensorized run out of memory beyond $n=20000$ (gray dashed line).                      
          Tensorized's $O(n^2)$ memory grows to 16\,GB at $n=20000$,   
          whereas FlashSinkhorn's $O(nd)$ memory stays below 1.1\,GB even at $n=60000$.                           
      }                         
      \label{fig:no-label-benchmark}   
  \end{figure}

  \subsection{Detect Saddle Escape in OT-Based Regression}
  \label{app:saddle}

  \paragraph{Experiment Setup}
    We use the Cornell Flow Cytometry dataset~\cite{benson2014scalable}, containing 40,000 single cells with 5 fluorescent markers (FITC/CD4,   
  PE/CD8, ECD/CD19, PC5/CD45, PC7/CD3). Each marker measures the expression level of a cell surface antigen, commonly used in                 immunophenotyping.    

  Given normalized features $X \in \mathbb{R}^{n \times 5}$, we generate:
    \begin{enumerate}        
      \item Ground truth transformation: $W^* \in \mathbb{R}^{5 \times 5}$ with entries $W^*_{ij} \sim \mathcal{N}(0, 1/5)$.                    
      \item Clean targets: $Y_{\text{clean}} = X W^*$.
      \item Noisy targets: $Y = Y_{\text{clean}} + E$, where $E_{ij} \sim \mathcal{N}(0, \sigma^2)$ with $\sigma = 0.05\cdot  \text{std}(Y_{\text{clean}})$.            \item Shuffled observations: $\widetilde{Y} = \Pi^* (Y)$ for unknown permutation $\Pi^*$.   
  \end{enumerate}      
  The optimization objective is  $\min_W\mathcal{L}(W)$, where                    
  \begin{align*}                          
     \mathcal{L}(W)= \text{OT}_\varepsilon\left(\frac{1}{n}\sum_{i=1}^n \delta_{\by_i}, \frac{1}{n}\sum_{j=1}^n \delta_{\widetilde{\by}_j}\right) =   \min_{P \in \Pi(\frac{1}{n}\mathbbm{1}_n, \frac{1}{n}\mathbbm{1}_n)} \;\langle C(W), P \rangle+\eps \mathrm{KL}\left(P\| \left(\frac{\mathbbm{1}_n}{n} \otimes \frac{\mathbbm{1}_n}{n}\right)\right)     
  \end{align*}  
  where $\by_i=\bx_i W$ and $C_{ij}(W)=\|\by_i-\widetilde{\by}_j\|_2^2$

  \paragraph{Optimizer Configuration}
  We test three regularization strengths $\varepsilon \in 0.1, 0.25, 0.5$. The solver uses epsilon scaling with factor 0.9 (66 annealing steps from diameter to final      
  $\varepsilon$), followed by 60 extra iterations at the final regularization for convergence. 
  \begin{enumerate}
      \item \textbf{Adam phase.} In the saddle region ($\lambda_{\min} < 0.001$), we use full-batch Adam with learning rate 0.03 and momentum parameters $(\beta_1,  
  \beta_2) = (0.9, 0.999)$. Adam's momentum helps traverse the flat regions near saddle points by accumulating gradient information over      
  steps, while the adaptive per-parameter learning rates handle the ill-conditioned curvature typical of saddle neighborhoods.                

  \item \textbf{Newton phase.}  Once $\lambda_{\min} \geq 0.001$ confirms a local minimum, we switch to Newton with initial step size 10.0 and Armijo backtracking
   line search (reduction factor $\beta = 0.5$, sufficient decrease constant $c = 0.1$). The Newton direction is computed via conjugate       gradient with maximum 100 iterations and tolerance $10^{-6}$. The inner Hessian (Schur complement) uses Tikhonov regularization $\tau = 10^{-5}$ for numerical stability. 
  \end{enumerate}
  Optimization terminates when the gradient norm falls below $5 \times 10^{-3}$, with early stopping patience of 3      
  consecutive steps without improvement.  

\textbf{Why full-batch Adam?}
  We use full-batch Adam to obtain a deterministic optimization trajectory and a stable curvature signal for saddle-escape detection.
Minibatch SGD introduces gradient noise that can itself facilitate escape from strict saddles---stochastic gradients exhibit variance aligned with negative-curvature directions and can effectively replace explicit perturbations---which would confound our goal of detecting when the full OT objective becomes locally convex via $\lambda_{\min}(H_W)$ monitoring \citep{daneshmand2018escaping,ge2015escaping,jin2017escape}.
Moreover, our detector relies on repeated Hessian--vector products at the same iterate, and in the full-batch setting we can amortize the Sinkhorn solve by reusing the current dual potentials and cached normalization statistics across many HVP evaluations; changing minibatches would require repeatedly recomputing these operators and yields a noisier $\lambda_{\min}(H_W)$ signal.
This mirrors standard practice in CG-based second-order methods, where multiple $Hv$ queries at fixed parameters benefit from caching shared computations \citep{martens2010deep}.

  \paragraph{Lanczos Iteration for Minimum Eigenvalue}      

We estimate the smallest eigenvalue of the parameter Hessian $H_W\in\R^{25\times 25}$ using ARPACK’s implicitly restarted Lanczos method as implemented by \texttt{scipy.sparse.linalg.eigsh}. Lanczos requires only matrix--vector products $v\mapsto H_W v$ and avoids explicit Hessian construction. We expose $H_W$ as a LinearOperator whose matvec is a streaming HVP, and call \texttt{eigsh} with \texttt{which='SA'} (smallest algebraic) and a small Krylov subspace (\texttt{ncv}=6). Each Krylov step triggers one HVP, so an eigenvalue check costs $O(N_{\mathrm{mv}} \mathrm{cost}(\mathrm{HVP}))$ time and $O(nd)$ memory—independent of storing the $n\times n$ transport matrix or the data-space Hessian. At $n=40{,}000$, this adds $\approx 11$ seconds per check, whereas materializing the data-space Hessian $\nabla_Y^2\OT_\eps$ would require $O((nd)^2)$ storage (  $>100$\,GB at this scale). Since our HVP oracle is inexact (about $0.5\%$ relative error due to damping and early stopping of the inner solve), we use a modest eigensolver tolerance and treat $\lambda_{\min}$ as a coarse diagnostic (sign/margin) rather than a high-precision estimate, consistent with inexact projection methods for eigencomputation. So we set the threshold of $\lambda_{\min}$ switching at 0.001.
               
\paragraph{Additional Experiment:  Multi-Saddle Escape Trajectory}
  
    \begin{figure}[t]    
      \centering                      
      \includegraphics[width=\textwidth]{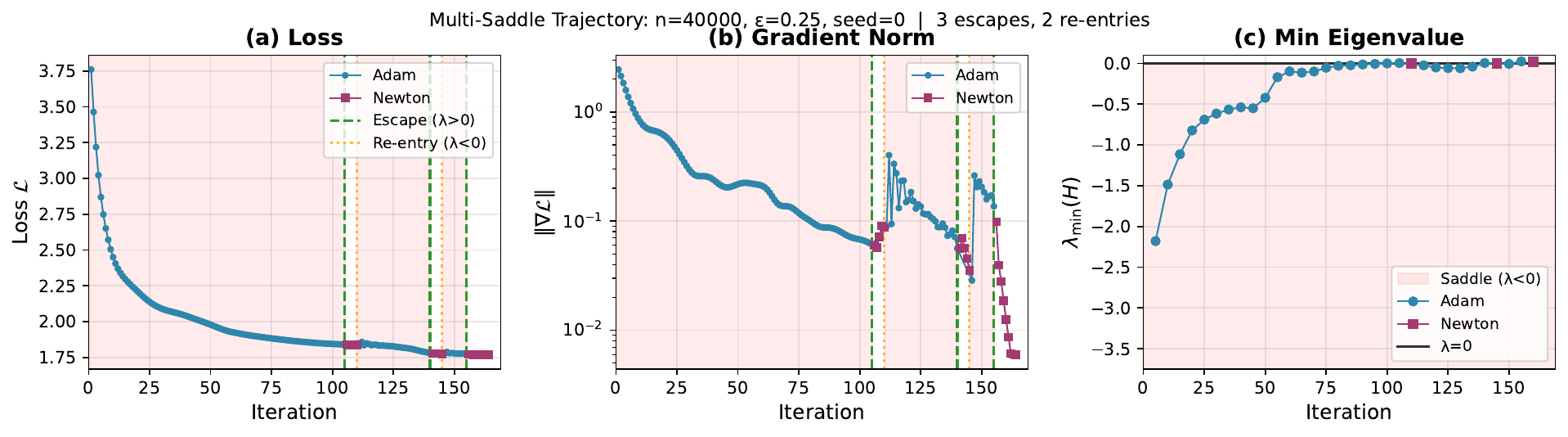}
      \caption{ \textbf{Multi-saddle escape in shuffled regression} ($n{=}40{,}000$, $\varepsilon{=}0.25$, seed=0).
(a) loss, (b) gradient norm, (c) estimated $\lambda_{\min}(H)$.
Green dashed lines mark escapes ($\lambda_{\min}{>}0.001$; switch to Newton) and orange dotted lines mark re-entries ($\lambda_{\min}{<}0.001$; fallback to Adam).
Overall: 3 escapes, 2 re-entries; loss $3.76\!\to\!1.77$.} 
    \label{fig:multi_saddle}
      \end{figure}
      
   Figure~\ref{fig:multi_saddle} shows a configuration ($n=40{,}000$, $\varepsilon=0.25$, seed=0) that reveals the complex landscape of the    shuffled regression objective, characterized by numerous shallow saddle points and local minima in close proximity.                       

   Starting from loss 3.76, the optimization traverses multiple saddle-local transitions before converging at loss 1.77: 
   
  \begin{enumerate}         
      \item \textbf{Steps 0-105 (Adam):} Descends through a saddle region until $\lambda_{\min} = +0.0047$ signals escape into a shallow     
  local minimum.                           
  \item \textbf{Steps 106-110 (Newton):} Newton's large steps cross into an adjacent shallow saddle ($\lambda_{\min} = -0.0027$), triggering 
  fallback.                                
  \item \textbf{Steps 111-145:} The optimizer oscillates between shallow saddles and local minima, with two more escape-reentry cycles.      

\item \textbf{Steps 155-164 (Newton):} Finally reaches a deeper local minimum basin ($\lambda_{\min} = +0.024$), where Newton converges in 9 steps.                                           
  \end{enumerate}

  Landscape interpretation. The small magnitude of eigenvalues at transition points ($|\lambda_{\min}| < 0.005$ for the first two escapes)    
  suggests the optimizer navigates a region densely populated with shallow saddles and local minima separated by low barriers. Newton's       
  aggressive steps easily cross these barriers, while Adam's smaller updates can remain within a basin. The final successful convergence      
  occurs when the optimizer reaches a deeper basin with $\lambda_{\min} = 0.024$, five times larger than earlier transitions, indicating    
  stronger local convexity that contains Newton's iterates. 

This behavior motivates the fallback mechanism: near shallow critical points, Newton may destabilize, but the        
  eigenvalue-based switching rule automatically recovers by reverting to Adam until a more stable basin is found.


\end{document}